\newtheorem{theorem}{Theorem}
\newtheorem{proposition}[theorem]{Proposition}
\newtheorem{definition}[theorem]{Definition}
\setlist{nolistsep}
\DeclareMathOperator*{\argmin}{arg\,min}
\title{Linear Trading Position with Sparse Spectrum}
\author{
Zhao-Rong Lai$^1$
\and
Haisheng Yang$^{2,}$\thanks{Corresponding author.}\\
\affiliations
$^1$Guangdong Key Laboratory of Data Security and Privacy Preserving, \\  College of Cyber Security, Jinan University\\
$^2$Lingnan College, Sun Yat-Sen University\\
\emails
laizhr@jnu.edu.cn, 
yhaish@mail.sysu.edu.cn
}
\begin{document}

\def \bbE {\mathbb E}
\def \bbR {\mathbb R}
\def \bbN {\mathbb N}
\def \bbZ {\mathbb Z}

\def \mA {\mathcal{A}}
\def \mD {\mathcal{D}}
\def \mF {\mathcal{F}}
\def \mG {\mathcal{G}}
\def \mI {\mathcal{I}}
\def \mJ {\mathcal{J}}
\def \mL {\mathcal{L}}
\def \mM {\mathcal{M}}
\def \mN {\mathcal{N}}
\def \mO {\mathcal{O}}
\def \mP {\mathcal{P}}
\def \mQ {\mathcal{Q}}
\def \mS {\mathcal{S}}
\def \mT {\mathcal{T}}
\def \mU {\mathcal{U}}
\def \mZ {\mathcal{Z}}

\def \sF {\mathscr{F}}
\def \sG {\mathscr{G}}
\def \sL {\mathscr{L}}
\def \sS {\mathscr{S}}
\def \sT {\mathscr{T}}

\def \ba {\bm{a}}
\def \bb {\bm{b}}
\def \bc {\bm{c}}
\def \bd {\bm{d}}
\def \bh {\bm{h}}
\def \bp {\bm{p}}
\def \bq {\bm{q}}
\def \bx {\bm{x}}
\def \by {\bm{y}}
\def \bz {\bm{z}}
\def \bw {\bm{w}}
\def \bu {\bm{u}}
\def \bv {\bm{v}}
\def \br {\bm{r}}
\def \bs {\bm{s}}
\def \bR {\bm{R}}
\def \bS {\bm{S}}
\def \bI {\bm{I}}
\def \bA {\bm{A}}
\def \bB {\bm{B}}
\def \bC {\bm{C}}
\def \bD {\bm{D}}
\def \bE {\bm{E}}
\def \bF {\bm{F}}
\def \bG {\bm{G}}
\def \bH {\bm{H}}
\def \bL {\bm{L}}
\def \bP {\bm{P}}
\def \bQ {\bm{Q}}
\def \bR {\bm{R}}
\def \bS {\bm{S}}
\def \bU {\bm{U}}
\def \bV {\bm{V}}
\def \bW {\bm{W}}
\def \bX {\bm{X}}
\def \bY {\bm{Y}}

\def \bPi {\bm{\Pi}}

\def \bone {\mathbf{1}}
\def \bzer {\mathbf{0}}

\def \blambda {\bm{\lambda}}
\def \bLambda {\bm{\Lambda}}
\def \bSigma {\bm{\Sigma}}
\def \bGamma {\bm{\Gamma}}

\def \ff {\mathfrak{f}}
\def \fl {\mathfrak{l}}
\def \fp {\mathfrak{p}}
\def \frs {\mathfrak{s}}

\def \bbRn {\bbR^{n}}
\def \bbRm {\bbR^{m}}
\def \bbRN {\bbR^{N}}
\def \bbRNN {\bbR^{N\times N}}
\def \bbRM {\bbR^{M}}
\def \bbRT {\bbR^{T}}
\def \bbNM {\bbN_M}
\def \bbNk {\bbN_k}
\def \bbNn {\bbN_n}
\def \bbNm {\bbN_m}

\def \tS {\tilde{S}}
\def \tbQ {\tilde{\bQ}}
\def \tbq {\widetilde{\bq}}
\def \tbI {\widetilde{\bI}}

\def \st {\text{s.\ t.}}
\def \tr {\mathrm{tr}}
\def \prox {\mathrm{prox}}
\def \proj {\mathrm{proj}}
\def \sign {\mathrm{sign}}
\def \env {\mathrm{env}}
\def \supp {\mathrm{supp}}
\def \trun {\mathrm{trun}}
\def \dist {\mathrm{dist}}
\def \diag {\mathrm{diag}}
\def \Fix {\mathrm{Fix}}
\def \VaR {\mathrm{VaR}}
\def \CVaR {\mathrm{CVaR}}
\def \IR {\mathrm{IR}}
\def \MR {\mathrm{MR}}
\def \SR {\mathrm{SR}}
\def \gra {\mathrm{gra}\hspace{2pt}}
\def \dom {\mathrm{dom}\hspace{2pt}}
\def \crit {\mathrm{crit}\hspace{2pt}}

\newcommand\leqs{\leqslant}
\newcommand\geqs{\geqslant}
\newcommand{\ud}{\,\mathrm{d}}

\maketitle

\begin{abstract}
The principal portfolio approach is an emerging method in signal-based trading. However, these principal portfolios may not be diversified to explore the key features of the prediction matrix or robust to different situations. To address this problem, we propose a novel linear trading position with sparse spectrum that can explore a larger spectral region of the prediction matrix. We also develop a Krasnosel'ski\u \i-Mann fixed-point algorithm to optimize this trading position, which possesses the descent property and achieves a linear convergence rate in the objective value. This is a new theoretical result for this type of algorithms. Extensive experiments show that the proposed method achieves good and robust performance in various situations.
\end{abstract}

\section{Introduction}
\renewcommand{\thefootnote}{\fnsymbol{footnote}}
\footnotetext[7]{The supplementary material and code for this paper are available at \url{https://github.com/laizhr/LTPSS}.}
\renewcommand{\thefootnote}{\arabic{footnote}}
In general, asset pricing starts with finding some current signals $\bS_t\in \bbRN$ as proxies for the conditional expected returns $\bR_{t+1}\in \bbRN$ at the next time, which gives rise to signal-based trading \cite{sigtrade}. Since asset correlation exists in most cases, linear trading strategies are proposed to exploit cross-predictability (see Figure \ref{fig:lineartrade}) \cite{lintrade1,lintrade2,desingijcai,AMSPA,MSSRM}. Recently, \cite{PPA} propose to directly combine \textbf{signals} and \textbf{returns} into a \textbf{bi-linear} form with a ``prediction matrix'', and extract several principal portfolios (PP) of this prediction matrix to form a linear trading position (LTP).  \cite{PPA2} further propose a conditional factor model for individual corporate bond based on instrumented principal components analysis (IPCA,\cite{IPCA}). This approach exploits the predictability from both own-asset signal and other-asset signals, while keeping a closed-form LTP. It not only keeps a concise portfolio but also improves interpretability of LTP in finance, which leads a new direction in future research.

\begin{figure}[!htb]
			\centering
\includegraphics[width=0.5\columnwidth]{./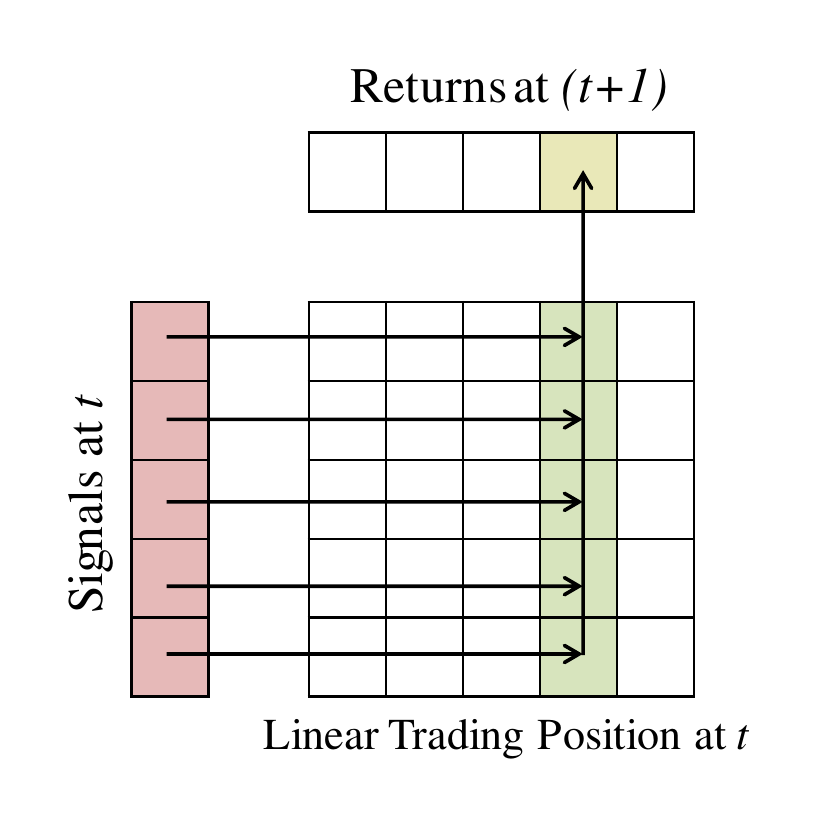}\\
\includegraphics[width=0.9\columnwidth]{./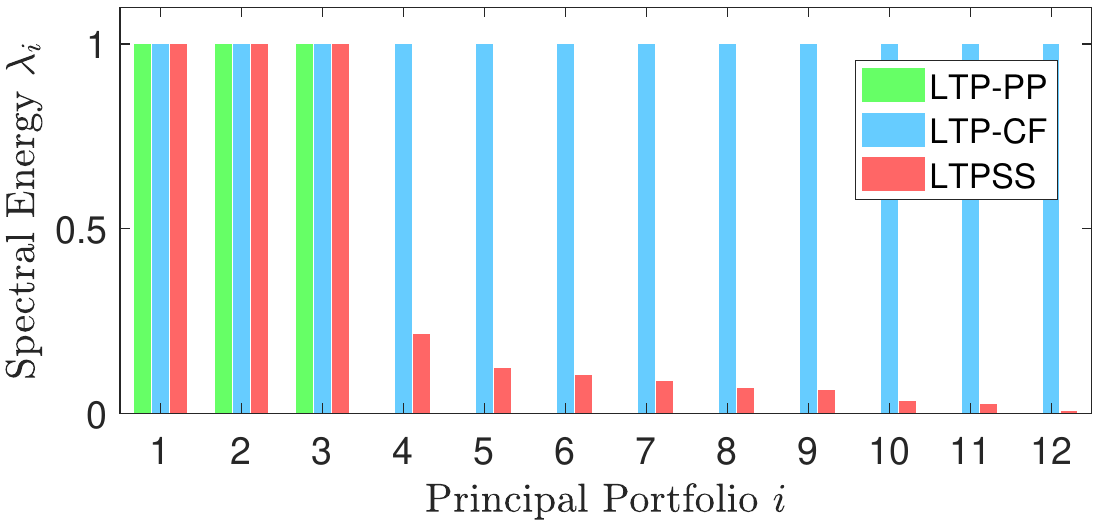}
		\caption{\textbf{Top:} cross-sectional linear trading position. The arrows show the information flow from the signals of all the assets to the position of the fourth asset. \textbf{Bottom:} the closed-form solution (blue bars, LTP-CF) employs all the principal portfolios; the principal portfolio strategy (green bars, LTP-PP) selects the first several principal portfolios; while the proposed method (red bars, LTPSS) sets diversified spectral energies in $[0,1]$.}
		\label{fig:lineartrade}
	\end{figure}

However, LTP composed by several PPs may not be diversified to explore the key features of the prediction matrix. First, it is recognized in \cite{PPA} that including more PPs into the LTP does not necessarily lead to better performance. Second, the spectral energy of each PP is either $1$ (selected) or $0$ (non-selected), and cannot be adjusted in the current framework (green or blue bars in Figure \ref{fig:lineartrade}). Third, the choice of PPs relies on empirical experience. \cite{PPA} suggest empirically using the first three PPs (green bars in Figure \ref{fig:lineartrade}) as they possess the most proportion of singular value magnitude, but it may not be robust to different situations (see Supplementary \ref{sup:unstablepp}). 

\textbf{Importance of diversification:} given a simple economic system with $10$ states and $10$ assets. On each trading period, only one state can occur, and each state occurs with $10\%$ probability. If state $i$ occurs, the price of Asset $i$ will increase by $10\%$, while those of others will decrease by $1\%$. Then if an investor invests all his/her wealth in only one asset, he/she will have only $10\%$ probability to gain $10\%$ return, but $90\%$ probability to lose $1\%$. However, if he/she diversifies the wealth equally in $10$ assets (each with $10\%$ of the position), he/she will have $100\%$ probability to gain $1\%$ return without taking the risk of losing wealth. 

The same law also holds for PPs, where each PP can be considered as an asset in the above example. Then the \textbf{diversification over PPs leads to the sparse spectrum concept} that has been widely used in various machine learning scenarios, such as matrix completion \cite{singshrink,matcom}, knowledge base completion \cite{nunormknow}, graph representation learning \cite{nunormgraph}, and insufficient-label recognition \cite{nunormcv}. We aim to employ this concept in the LTP framework to extract the key features of the prediction matrix (red bars in Figure \ref{fig:lineartrade}). There are some \textbf{main difficulties: 1.} This problem has a three-part complicated non-differentiable geometrical structure with the Frobenius, nuclear, and spectral norms. A common solver is the semi-definite programming (SDP) with a conic reformulation, which is a surrogate model that cannot achieve full-reinvestment and thus result in poor performance. \textbf{2.} The projection onto the self-financing constraint \cite{SPOLC,egrmvgap} is nonlinear and non-orthogonal, which rules out most projected subgradient approaches that require orthogonality and the inner product property of the Hilbert space. \textbf{3.} The descent property and the convergence rate of the solving algorithm are difficult to established.

To address the above challenges, we mainly offer the following contributions. \textbf{1.} We propose a Linear Trading Position with Sparse Spectrum (LTPSS) that can explore a larger spectral region of the observation matrix, while keeping a sparse and concise representation. \textbf{2.} We prove that the nonlinear and non-orthogonal projection onto the feasible set is non-expansive, which is crucial to the convergence of the whole solving algorithm. \textbf{3.} We develop a Krasnosel'ski\u \i-Mann (KM) fixed-point algorithm to solve LTPSS. It possesses the descent property and achieves a linear convergence rate in the objective value. To the best of our knowledge, this is a new theoretical result for KM algorithms, as the current best result is ${ o} \left(\frac{1}{{ k}} \right)$ in the fixed point iteration gap \cite{KMconverrate}, but not in the objective value. This new finding may reveal greater breakthroughs for KM algorithms.

\section{Preliminaries and Related Works}
\label{sec:relate}

\subsection{Linear Trading Position with Principal Portfolios}
\label{sec:subsecLTPPP}

We start with introducing the concepts of LTP and PP. Let $\bS_t\in \bbRN$ and $\bR_{t+1}\in \bbRN$ be the signals at time $t$ and returns at time $(t+1)$ for the $N$ assets in a financial market, respectively. The portfolio optimization task is to find an LTP $\bL\in\bbRNN$ such that
\begin{equation}
\label{eqn:ltporg}
\max \bbE[\bS_t^\top \bL \bR_{t+1}].
\end{equation}
In finance, a self-financing constraint is usually imposed to ensure a realistic and feasible trading position, which means that no external money can be added to the position once the trading strategy starts \cite{SPOLC,egrmvgap}. In the framework of \cite{PPA,PPA2}, the following self-financing constraint is used
\begin{equation}
\label{eqn:consl2}
\Omega:=\{\bL\in \bbRNN: \|\bL\|_{2}\leqs 1\},
\end{equation}
where $\|\cdot\|_{2}$ denotes the $\ell_2$ norm (spectral norm) of a matrix. A trivial strategy satisfying this constraint is the simple factor:
\begin{align}
\label{eqn:simfact}
\hat{\bL}_{SF}:=\bI_{(N)}, \tag{LTP-SF}
\end{align}
where $\bI_{(N)}$ denotes the identity matrix of $N$ dimensions. It only uses the own signal of each asset to determine its position. The original LTP model can be formulated as:
\begin{align}
\label{eqn:ltporg2}
\max_{\bL\in\Omega} \tr(\bL\bPi),\quad \bPi:=\bbE[\bR_{t+1}\bS_t^\top], \tag{LTP}
\end{align}
where $\tr(\cdot)$ denotes the trace operator of a matrix. This formulation exploits the commutative law and the linearity of the trace operator: 
\[
\bbE[\bS_t^\top \bL \bR_{t+1}]=\bbE[ \tr(\bL \bR_{t+1}\bS_t^\top)]= \tr(\bL \bbE[\bR_{t+1}\bS_t^\top]).
\]
$\bPi$ is called a \emph{prediction matrix}, which contains all the information that can be exploited to determine $\bL$. \cite{PPA} propose to estimate it by the empirical estimator:
\begin{align}
\label{eqn:piestimate}
\hat{\bPi}:=\frac{1}{T} \sum_{\tau=t-T}^{t-1}\bR_{\tau+1}\bS_\tau^\top.
\end{align}
Since the matrix $\ell_2$ norm is equivalent to the Schatten $\infty$-norm, it follows from the H\"older's inequality that
\begin{align}
\label{eqn:ltpholder}
\tr(\bL\hat{\bPi})\leqs \|\bL\|_2 \|\hat{\bPi}\|_*,
\end{align}
where $\|\cdot\|_*$ denotes the nuclear norm of the matrix. The equality holds if and only if $\bL$ is linear to $(\hat{\bPi}^\top\hat{\bPi})^{-\frac{1}{2}}\hat{\bPi}^\top$. Since $\|\bL\|_2\leqs 1$, 
\begin{align}
\label{eqn:ltpcf}
\hat{\bL}_{CF}:=(\hat{\bPi}^\top\hat{\bPi})^{-\frac{1}{2}}\hat{\bPi}^\top \tag{LTP-CF}
\end{align}
is exactly the closed-form solution to \eqref{eqn:ltporg2} with the empirical estimator $\hat{\bPi}$.

Instead of $\hat{\bL}_{CF}$, \cite{PPA} propose to use the PPs of $\hat{\bPi}$ to develop the LTP $\bL$. To do this, the first step is to conduct the singular value decomposition (SVD) of $\hat{\bPi}^\top$ as
\begin{align}
\label{eqn:svdpi}
\hat{\bPi}^\top{:=}\bU {\bSigma}\bV^\top, \bSigma{:=}\diag(\{{\sigma}_i\}_{i=1}^N), {\sigma}_1{\geqs} {\sigma}_2{\geqs} \cdots{\geqs} {\sigma}_N{\geqs} 0,
\end{align}
where $\bU,\bV,\bU^\top,\bV^\top$ are orthonormal bases of $\bbRNN$. Then the principal components $\{\bu_n \bv_n^\top\}_{n=1}^N$ are defined as the PPs of the prediction matrix $\hat{\bPi}$, where $\bu_n$ and $\bv_n$ are the $n$-th columns of the matrices $\bU$ and $\bV$, respectively. \cite{PPA} propose to use the first several PPs to construct an LTP:
\begin{align}
\label{eqn:ltppp}
\hat{\bL}_{PP}:=\sum_{n=1}^l \bu_n \bv_n^\top, \tag{LTP-PP}
\end{align}
where $l$ denotes the number of selected PPs. This LTP has a simple form, and is highly interpretable in the literature of finance. However, it may not be diversified to capture the key features of $\hat{\bPi}$. First, the setting of $l$ relies on empirical experience. As indicated by \cite{PPA}, a larger $l$ does not necessarily lead to better results. Second, \eqref{eqn:ltppp} indicates that the spectral energy of a selected PP is fixed as $1$, which may not be adaptive to different situations.

\subsection{Surrogate Model}
\label{sec:surrogate}
There are some surrogate models for the proposed \eqref{eqn:LTPSS} model in Section \ref{sec:LTPSSmodel}. One main approach is to reformulate \eqref{eqn:LTPSS} as a semi-definite conic programming\footnote{\url{https://www.seas.ucla.edu/~vandenbe/236C/lectures/conic.pdf}} (SDCP). Specifically,
\begin{align}
\label{eqn:ltpsssurro}
&\|\bL\|_{2}\leqs 1 \Leftrightarrow \begin{bmatrix}  
\bI_{(N)} & \bL  \\
 \bL^\top   &  \bI_{(N)}
\end{bmatrix}\succeq  0,\\
\label{eqn:ltpsssurro2}
&\|\bL\|_{*}\leqs \fl \Leftrightarrow \begin{bmatrix}  
\bU & \bL  \\
 \bL^\top   &  \bV
\end{bmatrix}\succeq  0, \ \text{s.t.}\ \frac{1}{2} (\tr{\bU}+\tr{\bV})\leqs \fl,
\end{align}
where $\bU$, $\bV$, and $\fl$ are auxiliary arguments that need to be optimized simultaneously with $\bL$. Then the surrogate model of \eqref{eqn:LTPSS} is:
\begin{align}
\label{eqn:LTPSSsurro}
& \min_{\bL, \bU, \bV, \fl}   -\tr(\bL\hat{\bPi})+\eta\fl,  \nonumber\\
\text{s.t.}& \begin{bmatrix}  
\bI_{(N)} & \bL  \\
 \bL^\top   &  \bI_{(N)}
\end{bmatrix}{\succeq}  0,\ \begin{bmatrix}  
\bU & \bL  \\
 \bL^\top   &  \bV
\end{bmatrix}{\succeq}  0, \ \frac{1}{2} (\tr{\bU}{+}\tr{\bV})\leqs \fl. \tag{SDCP}
\end{align}
\textbf{This model is not equivalent to \eqref{eqn:LTPSS}.} It directly replaces $\|\bL\|_{*}$ in \eqref{eqn:LTPSS} by $\fl$, which is only an upper bound of $\|\bL\|_{*}$. Besides, the mainstream solvers for \eqref{eqn:LTPSSsurro} are based on interior-point primal-dual algorithms, which \textbf{cannot achieve the equality} $\|L\|_2 = 1$ but only satisfy $\|\bL\|_{2}< 1$. In the experiments, we observe that $\|L\|_2\approx 0.9826$ in all the cases, with both absolute solution tolerance and constraint tolerance being $1e-6$. In this case, \textbf{the investing capital can not be fully exploited} and the investing performance is unsatisfactory (see Table \ref{tab:MR}). \textbf{Worse still, it significantly increases computational complexity due to the auxiliary arguments $\bU$, $\bV$, and $\fl$}. The dimensionality for the constraints of \eqref{eqn:LTPSSsurro} quadruples that of \eqref{eqn:LTPSS}.

\section{Methodology}
\label{sec:LTPSS}
To address the above problems, we propose to expand the spectral region of LTP. First, we consider all the PPs to construct the LTP. Second, we allow for flexible spectral energies in $[0,1]$. Third, we impose sparse spectrum on the LTP to extract the key features of the prediction matrix $\hat{\bPi}$. Fourth, we develop a KM fixed-point algorithm to directly solve \eqref{eqn:LTPSS} instead of using the defective and deficient surrogate model \eqref{eqn:LTPSSsurro}.

\subsection{Linear Trading Position with Sparse Spectrum}
\label{sec:LTPSSmodel}
Since an LTP $\bL\in \bbRNN$, our framework is developed on the linear space $\bbRNN$, which is more complicated than $\bbRN$. First of all, we consider $\bbRNN$ as a Hilbert space, then the trace operator $\tr(\bA^\top\bB)=:\langle\bA,\bB\rangle$ is the inner product of $\bA$ and $\bB$, for any $\bA,\bB\in\bbRNN$. Moreover, $\sqrt{\tr(\bA^\top\bA)}=\|\bA\|_F$ and thus the Frobenius norm is the induced norm from the trace operator for $\bbRNN$. First, we give some properties of the constraint set $\Omega$ defined in \eqref{eqn:consl2}.
\begin{proposition}
\label{prop:conclobou}
$\Omega$ is a convex, closed, and bounded subset of $\bbRNN$. 
\end{proposition}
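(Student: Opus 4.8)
The plan is to verify the three properties of $\Omega$ one at a time, each reducing to an elementary fact about the spectral norm $\|\cdot\|_2$ and its comparison with the Frobenius norm $\|\cdot\|_F$, the latter being the norm that induces the Hilbert-space topology on $\bbRNN$ adopted above.

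For \emph{convexity}, I would simply invoke that $\|\cdot\|_2$ is a norm, hence positively homogeneous and subadditive: for any $\bL_1,\bL_2\in\Omega$ and $\lambda\in[0,1]$, $\|\lambda\bL_1+(1-\lambda)\bL_2\|_2\leqs\lambda\|\bL_1\|_2+(1-\lambda)\|\bL_2\|_2\leqs\lambda+(1-\lambda)=1$, so the segment joining $\bL_1$ and $\bL_2$ stays in $\Omega$; equivalently, $\Omega$ is a sublevel set of the convex function $\|\cdot\|_2$.

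For \emph{closedness}, the key observation is the comparison $\|\bL\|_2\leqs\|\bL\|_F$ — the largest singular value is dominated by the $\ell_2$-aggregate $\sqrt{\sum_i\sigma_i^2}$ of all singular values — which makes $\bL\mapsto\|\bL\|_2$ $1$-Lipschitz, and in particular continuous, with respect to $\|\cdot\|_F$. Then $\Omega=\{\bL:\|\bL\|_2\leqs1\}$ is the preimage of the closed set $(-\infty,1]$ under a continuous map, and is therefore closed.

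For \emph{boundedness}, I would use the reverse comparison $\|\bL\|_F\leqs\sqrt{N}\,\|\bL\|_2$, which follows from $\sum_{i=1}^N\sigma_i^2\leqs N\sigma_1^2$ via the SVD as in \eqref{eqn:svdpi}; hence every $\bL\in\Omega$ obeys $\|\bL\|_F\leqs\sqrt{N}$, placing $\Omega$ inside the Frobenius ball of radius $\sqrt{N}$ about the origin. I do not expect any genuine obstacle in this proposition; the only point needing a little care is to state continuity and boundedness with respect to the Frobenius norm (consistent with the inner-product structure on $\bbRNN$ used in the paper), after which the chain $\|\bL\|_2\leqs\|\bL\|_F\leqs\sqrt{N}\,\|\bL\|_2$ is immediate from the singular value decomposition.
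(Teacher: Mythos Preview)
Your proposal is correct and matches the paper's proof essentially line for line: convexity via the triangle inequality, closedness via continuity of $\|\cdot\|_2$ (you give an explicit $1$-Lipschitz bound, the paper just cites continuity), and boundedness via $\|\bL\|_F\leqs\sqrt{N}$ from the singular values.
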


\begin{proof}
$\forall \bA,\bB\in \Omega$ and $\forall \theta \in [0,1]$,
\begin{align}
\label{eqn:consl2conv}
\|\theta\bA{+}(1{-}\theta)\bB\|_{2}{\leqs}\theta\|\bA\|_{2}{+}(1{-}\theta)\|\bB\|_{2}{\leqs} \theta{+}(1{-}\theta){=}1.
\end{align}
Hence $\|\theta\bA+(1-\theta)\bB\|_{2}\in \Omega$ and $\Omega$ is convex. Since $\Omega$ is a sub-level-set of the continuous function $\|\cdot\|_2$, it is closed \cite{varana}. $\|\bL\|_F=\sqrt{\sum_{i=1}^N \lambda_i^2}$, where $\{\lambda_i\}_{i=1}^N$ are the singular values of $\bL$ (singular values are non-negative). \eqref{eqn:consl2} indicates that $(\max_{1\leqs i\leqs N} \lambda_i)\leqs 1$. Hence $\|\bL\|_F\leqs \sqrt{N}$ for any $\bL\in \Omega$, which proves that $\Omega$ is bounded.
\end{proof}

Next, we propose the following LTPSS model:
\begin{align}
\label{eqn:LTPSS}
\min_{\bL\in\Omega} F(\bL):=f(\bL)+g(\bL):=-\tr(\bL\hat{\bPi})+\eta\|\bL\|_*, \tag{LTPSS}
\end{align}
where $\eta\geqs 0$ is a regularization parameter. $f(\bL)$ has the gradient $\nabla f(\bL)=-\hat{\bPi}^\top$, while $g(\bL)$ is non-differentiable. Furthermore, the constraint $\bL\in \Omega$ is also a non-differentiable structure. In summary, \textbf{\eqref{eqn:LTPSS} has three parts with the Frobenius, nuclear, and spectral norms, where the latter two are non-differentiable. By dropping either $g(\bL)$ or $\bL\in \Omega$, it can be reduced to a common problem that can be solved by either projected gradient or proximal gradient methods, respectively. However, with both non-differentiable parts present, \eqref{eqn:LTPSS} cannot be effectively and efficiently solved via common approaches.} 

\textbf{Fact:} there exists at least one solution to \eqref{eqn:LTPSS}. Since $F(\bL)$ is a continuous function on $\bL$ and $\Omega$ is a convex and compact set, this fact follows from the Weierstrass extreme value theorem.

\subsection{Krasnosel'ski\u \i-Mann Fixed-point Algorithm}
\label{sec:PPGA}
To address the above difficulty, we develop a KM fixed-point scheme to solve \eqref{eqn:LTPSS}, which alternately minimizes $F(\bL)$ and projects $\bL$ onto $\Omega$. Denote the $k$-th iterate of $\bL$ by $\bL^{(k)}$. Then in the next iteration, we can use a quadratic approximation to $F(\bL)$:
\begin{align}
\label{eqn:quadapprox2}
&Q(\bL,\bL^{(k)})=-\tr(\bL^{(k)}\hat{\bPi})-\frac{\beta}{2}\|\hat{\bPi}\|_F^2\nonumber\\
&\qquad\quad+\frac{1}{2\beta}\|\bL-\bL^{(k)}-\beta\hat{\bPi}^\top\|_F^2+\eta\|\bL\|_*,
\end{align}
where $\beta>0$ is the step size parameter (which can be seen later). By ignoring the constant terms with respect to (w.r.t.) the argument $\bL$ and temporarily relaxing the constraint $\bL\in\Omega$, we minimize $Q(\bL,\bL^{(k)})$:
\begin{align}
\label{eqn:Qfuncprox}
&\min_{\bL\in\bbRNN} Q(\bL,\bL^{(k)})\quad\Longleftrightarrow\nonumber\\
&\argmin_{\bL\in\bbRNN} \left\{  \frac{1}{2\beta}\|\bL-(\bL^{(k)}+\beta\hat{\bPi}^\top)\|_F^2+\eta\|\bL\|_*  \right \}\nonumber\\
&\quad =: \prox_{\beta\eta\|\cdot\|_*}(\bL^{(k)}+\beta\hat{\bPi}^\top),
\end{align}
which is the proximal mapping of $(\bL^{(k)}+\beta\hat{\bPi}^\top)$ w.r.t. the function $\beta\eta\|\cdot\|_*$. It has a closed form solution \cite{singshrink}. Conduct the SVD of $(\bL^{(k)}+\beta\hat{\bPi}^\top)$ as
\begin{align}
\label{eqn:proxsvd}
\bL^{(k)}+\beta\hat{\bPi}^\top=\tilde{\bU} \bLambda\tilde{\bV}^\top,\ \bLambda:=\diag(\{\lambda_i\}_{i=1}^N).
\end{align}
Then the singular value thresholding of $\bLambda$ is 
\begin{align}
\label{eqn:proxsvt}
\tilde{\bLambda}:=\diag(\{\sign(\lambda_i)\cdot\max\{|\lambda_i|-\beta\eta,0\}\}_{i=1}^N).
\end{align}
To be intuitive, $\tilde{\bLambda}$ drags each singular value of $\bLambda$ towards $0$ by a step $\beta\eta$. The closed form solution to \eqref{eqn:Qfuncprox} is
\begin{align}
\label{eqn:Qfuncproxsol}
\sG(\bL)&{:=}\bL{+}\beta\hat{\bPi}^\top, \tilde{\bL}^{(k)}{:=}\prox_{\beta\eta\|\cdot\|_*}(\sG(\bL^{(k)})){=} \tilde{\bU} \tilde{\bLambda}\tilde{\bV}^\top,
\end{align}
which is actually a proximal-gradient operator. Next, we need to design a feasible projection operator to project $\tilde{\bL}^{(k)}$ onto the constraint set $\Omega$.
\begin{definition}
\label{dfn:projomega}
Recall the singular vectors of $\hat{\bPi}^\top$ defined in \eqref{eqn:svdpi} as $\bU$ and $\bV$ . Given any matrix $\bA\in\bbRNN$, let ${\bGamma}:=\bU^\top \bA \bV$ and ${\gamma}_{ii}$ be the $i$-th diagonal element of ${\bGamma}$. Define
\begin{align}
\label{eqn:projsvd}
&\check{\bGamma}:=\diag(\{\check{\gamma}_i\}_{i=1}^N), \ \check{\gamma}_i:=\begin{cases}
\sign{({\gamma}_{ii})}, &\text{if}\ |{\gamma}_{ii}|>1  ;\\
{\gamma}_{ii}, &\text{if}\  |{\gamma}_{ii}|\leqs 1,
\end{cases},\nonumber\\
&\proj_{\Omega} (\bA):=\bU \check{\bGamma}\bV^\top.
\end{align}
\end{definition}
\textbf{Note that $\proj_{\Omega}$ is a nonlinear and non-orthogonal projection, but it is a non-expansive operator, which is crucial to the convergence of the whole algorithm.}

\begin{theorem}
\label{thm:proj}
$\proj_{\Omega}$ defined in \eqref{eqn:projsvd} is non-expansive.
\end{theorem}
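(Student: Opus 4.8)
The plan is to exploit the fact that, although $\proj_{\Omega}$ is nonlinear and non-orthogonal as a map on $\bbRNN$, it factors as a composition of two Frobenius isometries and one very simple coordinatewise operation. Recall from \eqref{eqn:svdpi} that $\bU$ and $\bV$ are orthogonal matrices, so $\bU^\top\bU=\bU\bU^\top=\bI_{(N)}$ and likewise for $\bV$; consequently the two linear maps $\bX\mapsto\bU^\top\bX\bV$ and $\bY\mapsto\bU\bY\bV^\top$ preserve the Frobenius norm, since $\|\bU\bX\bV^\top\|_F^2=\tr(\bV\bX^\top\bU^\top\bU\bX\bV^\top)=\tr(\bX^\top\bX)=\|\bX\|_F^2$ by the cyclic invariance of the trace. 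On the other hand, \eqref{eqn:projsvd} says precisely that $\proj_{\Omega}(\bA)=\bU\,T(\bU^\top\bA\bV)\,\bV^\top$, where $T$ is the operator sending a matrix $\bGamma=(\gamma_{ij})$ to the diagonal matrix $\diag(\{\phi(\gamma_{ii})\}_{i=1}^N)$ with $\phi(t):=\sign(t)\min\{|t|,1\}$. Hence it suffices to show that $T$ is non-expansive on $(\bbRNN,\|\cdot\|_F)$, and non-expansiveness of $\proj_{\Omega}$ will follow at once by sandwiching $T$ between the two isometries.

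To handle $T$, I would first observe that $\phi$ is exactly the metric projection of $\bbR$ onto the closed convex interval $[-1,1]$ (indeed $\phi(t)=t$ for $|t|\leqs 1$ and $\phi(t)=\sign(t)$ otherwise, matching the definition of $\check\gamma_i$), and therefore $\phi$ is $1$-Lipschitz: $|\phi(s)-\phi(t)|\leqs|s-t|$ for all $s,t\in\bbR$. Then, decomposing the Frobenius norm over entries, for any $\bGamma_{\bA},\bGamma_{\bB}\in\bbRNN$ I get $\|T(\bGamma_{\bA})-T(\bGamma_{\bB})\|_F^2=\sum_{i=1}^N|\phi((\bGamma_{\bA})_{ii})-\phi((\bGamma_{\bB})_{ii})|^2\leqs\sum_{i=1}^N|(\bGamma_{\bA})_{ii}-(\bGamma_{\bB})_{ii}|^2\leqs\sum_{i,j=1}^N|(\bGamma_{\bA})_{ij}-(\bGamma_{\bB})_{ij}|^2=\|\bGamma_{\bA}-\bGamma_{\bB}\|_F^2$, where the last inequality merely discards the nonnegative off-diagonal contributions that $T$ zeros out. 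This proves $T$ is non-expansive.

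Assembling the pieces, for arbitrary $\bA,\bB\in\bbRNN$ put $\bGamma_{\bA}:=\bU^\top\bA\bV$ and $\bGamma_{\bB}:=\bU^\top\bB\bV$. Then $\|\proj_{\Omega}(\bA)-\proj_{\Omega}(\bB)\|_F=\|\bU\,(T(\bGamma_{\bA})-T(\bGamma_{\bB}))\,\bV^\top\|_F=\|T(\bGamma_{\bA})-T(\bGamma_{\bB})\|_F\leqs\|\bGamma_{\bA}-\bGamma_{\bB}\|_F=\|\bU^\top(\bA-\bB)\bV\|_F=\|\bA-\bB\|_F$, using the two isometries at the first and last equalities and the non-expansiveness of $T$ in the middle. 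This is exactly the asserted non-expansiveness of $\proj_{\Omega}$.

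I do not expect a genuine obstacle here; the whole argument is a short chain of equalities around one inequality. The two points that require a line of care are (i) recognizing and justifying the factorization of $\proj_{\Omega}$ through the orthogonal conjugations induced by the singular vectors of $\hat{\bPi}^\top$, which is the conceptual crux, and (ii) noting that zeroing the off-diagonal part inside $T$ can only decrease the Frobenius norm, so that $T$ restricted to diagonals (a coordinatewise projection onto $[-1,1]^N$) controls the general case. As a sanity check one may also note that $\proj_{\Omega}(\bA)$ does land in $\Omega$, since its singular values are $\{|\check\gamma_i|\}_{i=1}^N\subseteq[0,1]$; this is not needed for the theorem but confirms the operator is correctly named, and it is consistent with $\proj_{\Omega}$ being the metric projection onto the convex set $\Omega$ (which, being a projection onto a closed convex set, is automatically non-expansive — the explicit factorization above is just a concrete verification of this).
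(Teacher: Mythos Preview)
Your main argument is correct and follows essentially the same route as the paper: both transport the problem through the orthogonal conjugation $\bA\mapsto\bU^\top\bA\bV$ (a Frobenius isometry), reduce to the coordinatewise inequality $|\phi(\gamma_{ii})-\phi(\lambda_{ii})|\leqs|\gamma_{ii}-\lambda_{ii}|$, and finish by the $1$-Lipschitz property of the one-dimensional projection onto $[-1,1]$.

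One caveat about your closing parenthetical: the operator $\proj_\Omega$ of \eqref{eqn:projsvd} is built from the \emph{fixed} singular vectors $\bU,\bV$ of $\hat\bPi^\top$, not from those of the input $\bA$, so it is \emph{not} the metric (Frobenius) projection onto $\Omega$ in general. For instance, with $\bU=\bV=\bI_{(2)}$ and $\bA=\bigl(\begin{smallmatrix}0&2\\2&0\end{smallmatrix}\bigr)$ one has $\bGamma=\bA$ with zero diagonal, hence $\proj_\Omega(\bA)=\bzer$, whereas the true Frobenius projection onto $\Omega$ is $\tfrac12\bA$. This does not damage your proof, since you never actually invoke that claim, but the remark ``the explicit factorization above is just a concrete verification of this'' should be dropped.
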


\begin{proof}
First, it can be easily observed from the definition of $\proj_{\Omega}$ that $\proj_{\Omega}\circ\proj_{\Omega}=\proj_{\Omega}$: the first $\proj_{\Omega}$ makes every $\check{\gamma}_i$ lie in $[-1,1]$, then the second $\proj_{\Omega}$ will not change $\check{\gamma}_i$ any more. Thus $\proj_{\Omega}$ is a projection (not necessarily linear or orthogonal) by definition. Second, to prove $\proj_{\Omega}$ is non-expansive, we need to verify that 
\begin{align}
\label{eqn:nonexpand}
\|\proj_{\Omega} (\bA){-}\proj_{\Omega} (\bB)\|_F{\leqs} \| \bA{-}\bB\|_F, \ \forall \bA,\bB\in \bbRNN.
\end{align}
Let $\bA=\bU(\bU^\top \bA \bV)\bV^\top:=\bU \bGamma\bV^\top$ and $\bB=\bU(\bU^\top \bB \bV)\bV^\top:=\bU \bLambda\bV^\top$. Then $\proj_{\Omega} (\bA)=\bU \check{\bGamma}\bV^\top$ and $\proj_{\Omega} (\bB)=\bU \check{\bLambda}\bV^\top$. \eqref{eqn:nonexpand} is equivalent to
\begin{align}
\label{eqn:nonexpand2}
&\|\bU \check{\bGamma}\bV^\top-\bU \check{\bLambda}\bV^\top\|_F^2   {\leqs} \| \bU\bGamma\bV^\top{-}\bU \bLambda\bV^\top\|_F^2,\nonumber\\
&\tr[(\bU \check{\bGamma}\bV^\top{-}\bU \check{\bLambda}\bV^\top)^\top(\bU \check{\bGamma}\bV^\top{-}\bU \check{\bLambda}\bV^\top)]\nonumber\\
{\leqs}&\tr[(\bU {\bGamma}\bV^\top{-}\bU {\bLambda}\bV^\top)^\top(\bU {\bGamma}\bV^\top{-}\bU {\bLambda}\bV^\top)].\qquad\quad
\end{align}
Since $\bU$ and $\bV$ are orthonormal bases of $\bbRN$,
\begin{align}
\label{eqn:trquad}
&\|\bU {\bGamma}\bV^\top\|_F^2=  \tr(\bV{\bGamma}^\top\bU^\top\bU {\bGamma}\bV^\top)=\tr(\bV{\bGamma}^\top\bGamma\bV^\top)\nonumber\\
=&\tr(\bV^\top\bV{\bGamma}^\top\bGamma)=\tr({\bGamma}^\top\bGamma)=\|\bGamma\|_F^2.\qquad\quad
\end{align}
By similar transformations, \eqref{eqn:nonexpand2} can be simplified as
\begin{align}
\label{eqn:nonexpand3}
&\|\check{\bGamma}\|_F^2{+}\|\check{\bLambda}\|_F^2{-}2\tr(\check{\bLambda} \check{\bGamma}){\leqs}\|{\bGamma}\|_F^2{+}\|{\bLambda}\|_F^2{-}2\tr({\bLambda}^\top {\bGamma})\\
\label{eqn:nonexpand6}
&\Leftrightarrow \ \sum_{i=1}^N (\check{\gamma_{i}}-\check{\lambda_{i}})^2 \leqs\sum_{i=1}^N \sum_{j=1}^N (\gamma_{ij}-\lambda_{ij})^2.
\end{align}
\eqref{eqn:nonexpand6} will hold if
\begin{align}
\label{eqn:nonexpand6csineq}
|\check{\gamma_{i}}-\check{\lambda_{i}}|\leqs |\gamma_{ii}-\lambda_{ii}|,\quad i=1,\cdots,N.
\end{align}
\eqref{eqn:projsvd} indicates that $\check{\gamma_{i}}$ and $\check{\lambda_{i}}$ are one-dimensional projections of $\gamma_{ii}$ and $\lambda_{ii}$ onto the compact interval $[-1,1]$, respectively. Such one-dimensional projection is non-expansive, which means that \eqref{eqn:nonexpand6csineq} holds. Back from  \eqref{eqn:nonexpand6csineq} to \eqref{eqn:nonexpand}, $\proj_{\Omega}$ is non-expansive.
\end{proof} 

Now $\tilde{\bL}^k$ can be projected onto $\Omega$ by
\begin{equation}
\setlength{\abovedisplayskip}{3pt}
\setlength{\belowdisplayskip}{3pt}
\label{eqn:projsvd2}
\proj_{\Omega} (\tilde{\bL}^{(k)})=\bU \hat{\bLambda}\bV^\top.
\end{equation}
\eqref{eqn:Qfuncproxsol} and \eqref{eqn:projsvd2} yield a composed operator:
\begin{align}
\label{eqn:iteration}
\sT(\bL^{(k)}):=\proj_{\Omega}\circ \prox_{\beta\eta\|\cdot\|_*}\circ\sG(\bL^{(k)}).
\end{align}

\begin{theorem}
\label{thm:nonexpan}
$\sT: \bbRNN \rightarrow \bbRNN$ is a non-expansive operator.
\end{theorem}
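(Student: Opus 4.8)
The plan is to recognize $\sT=\proj_{\Omega}\circ\prox_{\beta\eta\|\cdot\|_*}\circ\sG$ as a composition of three non-expansive self-maps of $\bbRNN$ and to use the elementary fact that a composition of non-expansive operators is non-expansive. I would therefore treat the three factors $\sG$, $\prox_{\beta\eta\|\cdot\|_*}$, and $\proj_{\Omega}$ one at a time, establish the Frobenius-norm estimate for each, and then chain them.

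First, $\sG(\bL)=\bL+\beta\hat{\bPi}^\top$ is a translation by the fixed matrix $\beta\hat{\bPi}^\top$, so $\|\sG(\bA)-\sG(\bB)\|_F=\|\bA-\bB\|_F$ for all $\bA,\bB\in\bbRNN$; in particular $\sG$ is non-expansive. Second, recall that $\bbRNN$ with the inner product $\langle\cdot,\cdot\rangle=\tr(\cdot^\top\cdot)$ is a finite-dimensional Hilbert space and that $\beta\eta\|\cdot\|_*$ is a proper, lower semicontinuous, convex function on it; hence its proximal mapping $\prox_{\beta\eta\|\cdot\|_*}$ is single-valued and firmly non-expansive (it is the resolvent of the maximal monotone operator $\partial(\beta\eta\|\cdot\|_*)$), which in particular gives $\|\prox_{\beta\eta\|\cdot\|_*}(\bX)-\prox_{\beta\eta\|\cdot\|_*}(\bY)\|_F\leqs\|\bX-\bY\|_F$. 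If a self-contained derivation is wanted in place of citing this classical fact, I would write $\bX'=\prox_{\beta\eta\|\cdot\|_*}(\bX)$, $\bY'=\prox_{\beta\eta\|\cdot\|_*}(\bY)$, use the optimality conditions $\bX-\bX'\in\partial(\beta\eta\|\cdot\|_*)(\bX')$ and $\bY-\bY'\in\partial(\beta\eta\|\cdot\|_*)(\bY')$ together with monotonicity of the subdifferential to get $\langle\bX-\bY,\bX'-\bY'\rangle\geqs\|\bX'-\bY'\|_F^2$, and then conclude via Cauchy–Schwarz. Third, Theorem~\ref{thm:proj} already supplies that $\proj_{\Omega}$ is non-expansive.

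Chaining the three estimates in the order in which $\sT$ applies them, for any $\bA,\bB\in\bbRNN$:
\begin{align}
\|\sT(\bA)-\sT(\bB)\|_F
&{\leqs} \|\prox_{\beta\eta\|\cdot\|_*}(\sG(\bA))-\prox_{\beta\eta\|\cdot\|_*}(\sG(\bB))\|_F \nonumber\\
&{\leqs} \|\sG(\bA)-\sG(\bB)\|_F = \|\bA-\bB\|_F, \nonumber
\end{align}
where the first inequality uses non-expansiveness of $\proj_{\Omega}$, the second uses non-expansiveness of $\prox_{\beta\eta\|\cdot\|_*}$, and the last equality uses that $\sG$ is a translation. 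This is exactly the non-expansiveness of $\sT$. I expect the only genuinely nontrivial ingredient to be the non-expansiveness of $\prox_{\beta\eta\|\cdot\|_*}$: unlike $\proj_{\Omega}$, which acts diagonally in the fixed bases $\bU,\bV$ of $\hat{\bPi}^\top$ and so reduces to a coordinatewise argument, the nuclear-norm proximal operator uses the SVD of its own argument, so its non-expansiveness really rests on convexity of $\|\cdot\|_*$ (equivalently, maximal monotonicity of its subdifferential); that is the one place I would be careful to state the result precisely or to include the short monotonicity argument above. The remaining pieces — the translation being an isometry on differences, $\proj_{\Omega}$ being handled by the previous theorem, and composition preserving non-expansiveness — are routine.
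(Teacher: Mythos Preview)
Your proposal is correct and follows essentially the same route as the paper: show that $\sG$ is an isometry on differences, invoke convexity of $\beta\eta\|\cdot\|_*$ to get firm non-expansiveness of its proximal map (the paper cites Moreau for this), use Theorem~\ref{thm:proj} for $\proj_{\Omega}$, and conclude by composition. The only addition you make is the optional self-contained monotonicity argument for the prox step, which the paper omits in favor of a citation.
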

The proof is put in Supplementary \ref{proof:nonexpan}.

\begin{proposition}
\label{prop:fixedpointexist}
There exists a fixed-point in $\Omega$ for $\sT$.
\end{proposition}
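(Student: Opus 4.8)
The plan is to realize $\sT$ as a continuous self-map of a nonempty compact convex set and then invoke Brouwer's fixed-point theorem. The first and only slightly delicate step is to check the range inclusion $\sT(\Omega)\subseteq\Omega$ — in fact $\sT(\bbRNN)\subseteq\Omega$. The outermost operator in the composition \eqref{eqn:iteration} is $\proj_{\Omega}$, and by Definition \ref{dfn:projomega} its output $\bU\check{\bGamma}\bV^\top$ (with $\bU,\bV$ orthonormal and $\check{\bGamma}$ diagonal) has singular values $\{|\check{\gamma}_i|\}_{i=1}^N$, each of which lies in $[0,1]$ by the construction of $\check{\gamma}_i$. Hence $\|\proj_{\Omega}(\bA)\|_{2}\leqs 1$, i.e.\ $\proj_{\Omega}(\bA)\in\Omega$ for every $\bA\in\bbRNN$; composing with $\sG$ and $\prox_{\beta\eta\|\cdot\|_*}$ on the inside does not affect this, so $\sT(\bbRNN)\subseteq\Omega$, and in particular $\sT$ maps $\Omega$ into $\Omega$.

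Next I would assemble the topological ingredients. By Proposition \ref{prop:conclobou}, $\Omega$ is convex, closed, and bounded; since the ambient space $\bbRNN$ (equipped with the Frobenius inner product) is finite-dimensional, closed and bounded gives that $\Omega$ is compact, and it is clearly nonempty (for instance $\bzer\in\Omega$, or $\hat{\bL}_{SF}=\bI_{(N)}\in\Omega$). By Theorem \ref{thm:nonexpan}, $\sT$ is non-expansive, hence $1$-Lipschitz, hence continuous on $\Omega$. Therefore $\sT|_{\Omega}\colon\Omega\to\Omega$ is a continuous map of a nonempty compact convex subset of a finite-dimensional normed space into itself, and Brouwer's fixed-point theorem yields some $\bL^{\star}\in\Omega$ with $\sT(\bL^{\star})=\bL^{\star}$. (If one preferred to stay at the level of generality of Hilbert spaces, one could instead cite the Browder–Göhde–Kirk fixed-point theorem for non-expansive self-maps of bounded closed convex sets; but finite-dimensionality makes the elementary Brouwer argument sufficient here.)

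I do not expect a substantive obstacle. The one point that genuinely needs care is the first step: establishing the \emph{range} inclusion $\sT(\Omega)\subseteq\Omega$ rather than merely $\sT(\Omega)\subseteq\bbRNN$, since Brouwer does not apply without a self-map. Everything else is bookkeeping on top of the already-established Proposition \ref{prop:conclobou} (compactness and convexity of $\Omega$) and Theorem \ref{thm:nonexpan} (non-expansiveness, hence continuity, of $\sT$).
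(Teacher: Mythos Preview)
Your proposal is correct and essentially mirrors the paper's proof: both verify that $\sT$ is a continuous self-map of the nonempty compact convex set $\Omega$ (using Proposition~\ref{prop:conclobou} and Theorem~\ref{thm:nonexpan}) and then invoke a fixed-point theorem. The only cosmetic difference is that the paper cites Schauder while you cite Brouwer, which in the finite-dimensional space $\bbRNN$ are the same result; your explicit verification that $\proj_{\Omega}(\bA)\in\Omega$ is a bit more detailed than the paper's one-line remark that ``the last component of $\sT$ is $\proj_{\Omega}$.''
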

The proof is put in Supplementary \ref{proof:fixedpointexist}. Denote $\mF_\Omega:=\{\bA\in \Omega: \sT(\bA)=\bA\}\ne \emptyset$ as the fixed point set of $\sT$ on $\Omega$. The next step is to develop a convergent algorithm with $\sT$. Given any initial point $\bL^{(0)}$, let $\bL^{(1)}:=\sT(\bL^{(0)})$ and then $\bL^{(1)}\in \Omega$. We develop a KM iteration with $\theta\in (0,1)$ as follows
\begin{align}
\label{eqn:KMiter}
\bL^{(k+1)}:=(1-\theta)\bL^{(k)}+\theta\sT(\bL^{(k)}),\quad k=1,2,\cdots
\end{align}
Since $\bL^{(1)}\in \Omega$, $\sT(\bL^{(1)})\in \Omega$, and $\Omega$ is convex, \eqref{eqn:KMiter} implies $\bL^{(2)}\in \Omega$. It follows from mathematical induction that the whole iterative sequence $\{\bL^{(k)}\}_{k\geqs 1}\subseteq \Omega$.

\subsection{Descent Property and Linear Convergence Rate}
\label{sec:despropconrate}
\begin{theorem}[Krasnosel'ski\u \i-Mann]
\label{thm:KMtheorem}
The iterative sequence $\{\bL^{(k)}\}_{k\geqs 1}$ produced by \eqref{eqn:KMiter} converges to a fixed point $\bL^*\in\mF_\Omega$.
\end{theorem}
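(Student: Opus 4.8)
The plan is to carry out the classical Fej\'er-monotonicity argument for Krasnosel'ski\u\i-Mann iterations of a non-expansive operator, taking advantage of the fact that we work in the finite-dimensional space $\bbRNN$ and that $\Omega$ is compact (Proposition \ref{prop:conclobou}), which lets me use strong subsequential limits directly instead of the weak-topology / Opial-lemma machinery required in a general Hilbert space. The three inputs I would rely on are: $\sT$ is non-expansive (Theorem \ref{thm:nonexpan}), $\mF_\Omega\neq\emptyset$ (Proposition \ref{prop:fixedpointexist}), and the already-established invariance $\{\bL^{(k)}\}_{k\geqs 1}\subseteq\Omega$.

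First I would fix an arbitrary $\bar\bL\in\mF_\Omega$ and expand the convex combination in \eqref{eqn:KMiter} using the inner-product identity $\|(1-\theta)\ba+\theta\bb-\bc\|_F^2=(1-\theta)\|\ba-\bc\|_F^2+\theta\|\bb-\bc\|_F^2-\theta(1-\theta)\|\ba-\bb\|_F^2$ with $\ba=\bL^{(k)}$, $\bb=\sT(\bL^{(k)})$, $\bc=\bar\bL$ (valid in $\bbRNN$ with the trace inner product). Since $\sT(\bar\bL)=\bar\bL$, non-expansiveness gives $\|\sT(\bL^{(k)})-\bar\bL\|_F\leqs\|\bL^{(k)}-\bar\bL\|_F$, and substituting yields the descent-type estimate
\[
\|\bL^{(k+1)}-\bar\bL\|_F^2\leqs\|\bL^{(k)}-\bar\bL\|_F^2-\theta(1-\theta)\,\|\sT(\bL^{(k)})-\bL^{(k)}\|_F^2.
\]
From here I would conclude: (i) $\{\|\bL^{(k)}-\bar\bL\|_F\}_k$ is non-increasing, hence convergent; and (ii) telescoping over $k$ and using $\theta(1-\theta)>0$ forces $\sum_k\|\sT(\bL^{(k)})-\bL^{(k)}\|_F^2<\infty$, so the residual $\|\sT(\bL^{(k)})-\bL^{(k)}\|_F\to0$ (asymptotic regularity).

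Next, because $\{\bL^{(k)}\}\subseteq\Omega$ and $\Omega$ is compact, I would extract a subsequence $\bL^{(k_j)}\to\bL^*$ with $\bL^*\in\Omega$. A triangle-inequality estimate, $\|\sT(\bL^*)-\bL^*\|_F\leqs\|\sT(\bL^*)-\sT(\bL^{(k_j)})\|_F+\|\sT(\bL^{(k_j)})-\bL^{(k_j)}\|_F+\|\bL^{(k_j)}-\bL^*\|_F$, together with continuity of $\sT$ (implied by non-expansiveness) and asymptotic regularity, makes all three terms vanish, so $\sT(\bL^*)=\bL^*$, i.e. $\bL^*\in\mF_\Omega$. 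Finally I would re-run step (i) with the specific choice $\bar\bL=\bL^*$: the scalar sequence $\|\bL^{(k)}-\bL^*\|_F$ is non-increasing and has the subsequence $\|\bL^{(k_j)}-\bL^*\|_F\to0$, and a monotone sequence with a null subsequence is itself null, giving $\bL^{(k)}\to\bL^*$.

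The main obstacle in the general theory is the passage from a bounded iterate sequence to a subsequential fixed point, since in infinite dimensions one only obtains weak limits and must invoke the demiclosedness principle (Opial's lemma); that difficulty does not arise here thanks to compactness of $\Omega$. Consequently the only genuine care needed is the bookkeeping: confirming that the iterates never leave $\Omega$ (which the paper already does by induction from $\bL^{(1)}=\sT(\bL^{(0)})\in\Omega$ and convexity of $\Omega$) so that compactness is applicable, and correctly upgrading subsequential convergence to full convergence via the Fej\'er-monotone scalar sequence attached to the limit point.
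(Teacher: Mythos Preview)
Your proposal is correct and follows essentially the same Fej\'er-monotonicity argument as the paper: the same key inequality $\|\bL^{(k+1)}-\bar\bL\|_F^2\leqs\|\bL^{(k)}-\bar\bL\|_F^2-\theta(1-\theta)\|\sT(\bL^{(k)})-\bL^{(k)}\|_F^2$, the same asymptotic-regularity conclusion, and the same use of boundedness/compactness plus continuity of $\sT$ to show every cluster point lies in $\mF_\Omega$. The only cosmetic difference is the upgrade from subsequential to full convergence: the paper argues that any two cluster points must coincide via the identity $\tr((\bL^*-\bL^\bullet)^\top\bL^*)=\tr((\bL^*-\bL^\bullet)^\top\bL^\bullet)$, whereas you take the shorter route of specializing $\bar\bL=\bL^*$ and noting that a non-increasing nonnegative sequence with a null subsequence is itself null---both are standard and equivalent here.
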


The proof of this theorem is put in Supplementary \ref{proof:KMtheorem}. In addition, the corresponding objective value $F(\bL^{(k)})$ also descends and converges with a linear convergence rate. This is a new theoretical result for KM algorithms, as the current best result is ${ o} \left(\frac{1}{{ k}} \right)$ in the fixed point iteration gap $\|(\bL^{(k)}-\sT(\bL^{(k)})\|_F$ \cite{KMconverrate}, but not in the objective value. In fact, it is difficult to deduce the convergence rate of KM algorithms in the objective value for general problems. 
\begin{theorem}
\label{thm:objdescent}
The iterative sequence $\{\bL^{(k)}\}_{k\geqs 1}$ produced by \eqref{eqn:KMiter} satisfies the descent property:
\begin{align}
\label{eqn:objdescent}
&F(\bL^{(k+1)})-F(\bL^{(k)})\leqs  0,\quad \forall k\geqs 1.
\end{align}
Moreover, 

(a) If $\sigma_i \ne \eta$ and ${\lambda}^{(k)}_i\ne 0$ for some $i$, then $F(\bL^{(k+1)})-F(\bL^{(k)})< 0$.

(b) If $\sigma_i \ne \eta$ for all $i$, then $F(\bL^{(k)})$ achieves a linear convergence rate.
\end{theorem}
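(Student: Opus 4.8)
The plan is to reduce \eqref{eqn:LTPSS} \emph{along the iteration} to $N$ decoupled scalar problems and then analyse each one directly. The key structural fact is that the composed operator $\sT$ in \eqref{eqn:iteration} leaves invariant the family $\mathcal{D}:=\{\bU\bD\bV^\top:\bD\text{ diagonal}\}$, where $\bU,\bV$ are the singular-vector bases of $\hat\bPi^\top$ from \eqref{eqn:svdpi}: indeed $\sG(\bU\bD\bV^\top)=\bU(\bD+\beta\bSigma)\bV^\top$, the singular value thresholding \eqref{eqn:proxsvt} turns $\bU\bD'\bV^\top$ into $\bU\,\mathrm{soft}_{\beta\eta}(\bD')\bV^\top$ with $\mathrm{soft}_{\beta\eta}$ the scalar soft-threshold applied to the diagonal, and \eqref{eqn:projsvd} turns $\bU\bD''\bV^\top$ into $\bU\,\proj_{[-1,1]}(\bD'')\bV^\top$ with $\proj_{[-1,1]}$ the coordinatewise clip. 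Since $\bL^{(1)}=\sT(\bL^{(0)})$ already lies in $\mathcal{D}$ (its outermost operation is $\proj_\Omega$, whose range lies in $\mathcal{D}$) and \eqref{eqn:KMiter} is a convex combination inside the subspace $\mathcal{D}$, induction gives $\bL^{(k)}=\bU\bD^{(k)}\bV^\top$ with $\bD^{(k)}=\diag(\{d^{(k)}_i\})$, $|d^{(k)}_i|\le1$, for all $k\ge1$. On $\mathcal{D}$ one has $\tr(\bL\hat\bPi)=\sum_i\sigma_i d_i$ and $\|\bL\|_*=\sum_i|d_i|$, so $F(\bL^{(k)})=\sum_{i=1}^N\phi_i(d^{(k)}_i)$ with $\phi_i(t):=-\sigma_i t+\eta|t|$ (convex, piecewise linear), and each coordinate obeys the scalar KM recursion $d^{(k+1)}_i=(1-\theta)d^{(k)}_i+\theta\, g_i(d^{(k)}_i)$, where $g_i:=\proj_{[-1,1]}\circ\mathrm{soft}_{\beta\eta}\circ(\,\cdot+\beta\sigma_i)$.

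The descent property then follows quickly. Since $f$ is affine, the quadratic model $Q(\cdot,\bL^{(k)})$ equals $F(\cdot)+\tfrac{1}{2\beta}\|\cdot-\bL^{(k)}\|_F^2$ exactly, and one checks that $g_i(d^{(k)}_i)$ is precisely the minimiser over $[-1,1]$ of $t\mapsto\phi_i(t)+\tfrac{1}{2\beta}(t-d^{(k)}_i)^2$ — its unconstrained minimiser is $\mathrm{soft}_{\beta\eta}(d^{(k)}_i+\beta\sigma_i)$, and since that objective is strictly convex on $\bbR$ its constrained minimiser over an interval is the clip of the unconstrained one. Feeding the feasible point $d^{(k)}_i$ into this minimisation and summing over $i$ gives $F(\sT(\bL^{(k)}))+\tfrac{1}{2\beta}\|\sT(\bL^{(k)})-\bL^{(k)}\|_F^2\le F(\bL^{(k)})$; convexity of each $\phi_i$ gives $F(\bL^{(k+1)})\le(1-\theta)F(\bL^{(k)})+\theta F(\sT(\bL^{(k)}))$, and the two combine to $F(\bL^{(k+1)})\le F(\bL^{(k)})-\tfrac{\theta}{2\beta}\|\sT(\bL^{(k)})-\bL^{(k)}\|_F^2$, i.e.\ \eqref{eqn:objdescent}. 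For part (a), this inequality is strict whenever $\sT(\bL^{(k)})\ne\bL^{(k)}$, which fails only if every $d^{(k)}_i$ is a fixed point of $g_i$; a short case split ($\sigma_i>\eta$ versus $\sigma_i<\eta$) shows that for $\sigma_i\ne\eta$ the only fixed point of $g_i$ in $[-1,1]$ is the unique minimiser $t^\star_i$ of $\phi_i$ ($t^\star_i=1$ or $t^\star_i=0$), so it remains to check that the hypothesis on $\sigma_i,\lambda^{(k)}_i$ forces $d^{(k)}_i\ne t^\star_i$ for that index $i$, which makes the step strictly productive.

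For the linear rate (b) I would analyse each scalar orbit. When $\sigma_i\ne\eta$, $\phi_i$ has the unique minimiser $t^\star_i$ and both one-sided slopes bounded below in magnitude by $c_i:=|\sigma_i-\eta|>0$, so it has a sharp minimum there. One shows $d^{(k)}_i$ moves monotonically toward $t^\star_i$, and that there is a neighbourhood $\mathcal{N}_i$ of $t^\star_i$ — the region where $\mathrm{soft}_{\beta\eta}(\,\cdot+\beta\sigma_i)$ is annihilated to $0$ (case $\sigma_i<\eta$) or pushed beyond $\pm1$ and clipped (case $\sigma_i>\eta$) — on which $g_i(\cdot)\equiv t^\star_i$; once $d^{(k)}_i\in\mathcal{N}_i$ the recursion becomes the affine contraction $d^{(k+1)}_i-t^\star_i=(1-\theta)(d^{(k)}_i-t^\star_i)$ and $d^{(k)}_i$ never leaves $\mathcal{N}_i$. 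Outside $\mathcal{N}_i$ the proximal step is ``unsaturated'', and one verifies that each such step decreases $\phi_i$ by at least $\theta\beta c_i^2>0$; since $\sum_k\big(\phi_i(d^{(k)}_i)-\phi_i(d^{(k+1)}_i)\big)=\phi_i(d^{(1)}_i)-\phi_i(t^\star_i)\le 2(\sigma_i+\eta)<\infty$, there are only finitely many such steps. Taking $K_0$ to be the last step at which any coordinate lies outside its $\mathcal{N}_i$, for $k\ge K_0$ all coordinates contract simultaneously, so $\|\bL^{(k)}-\bL^\star\|_F=(1-\theta)^{k-K_0}\|\bL^{(K_0)}-\bL^\star\|_F$ with $\bL^\star=\bU\diag(\{t^\star_i\})\bV^\top$, and since $F$ is Lipschitz, $0\le F(\bL^{(k)})-F(\bL^\star)\le L_F(1-\theta)^{k-K_0}\|\bL^{(K_0)}-\bL^\star\|_F$ — a linear rate in the objective value.

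The step I expect to be the crux is this ``sharpness plus saturation'' analysis in (b). The descent inequality alone only yields $\sum_k\|\sT(\bL^{(k)})-\bL^{(k)}\|_F^2<\infty$, which is genuinely weaker than a geometric rate; what makes a geometric rate — and in the objective value, not merely in the fixed-point residual — possible here is the interplay between the piecewise-linear (sharp-minimum) geometry of $F$ on the invariant set $\mathcal{D}\cap\Omega$ and the fact that once the soft-threshold/clip is active each scalar map collapses to a pure $(1-\theta)$-contraction toward $t^\star_i$. The remaining work is the bookkeeping for the boundary cases — a coordinate crossing $0$ during the unsaturated phase, a large negative coordinate first being shrunk geometrically toward $0$, ties among the $\sigma_i$, and the excluded degeneracy $\sigma_i=\eta$ — and confirming the step-size-independence of these estimates.
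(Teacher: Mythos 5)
Your overall route is essentially the paper's: you show the iterates stay of the form $\bU\diag(\{d^{(k)}_i\})\bV^\top$ with $|d^{(k)}_i|\leqs 1$, so the objective decouples into the scalar functions $\phi_i(t)=-\sigma_i t+\eta|t|$ and each coordinate follows a scalar KM recursion — exactly the reduction in \eqref{eqn:svdmathinduce}--\eqref{eqn:svdmathinduce3}. The differences are in how the conclusions are extracted. For the descent you replace the ten-case enumeration of Supplementary \ref{sup:tencases} by the sufficient-decrease inequality $F(\bL^{(k+1)})\leqs F(\bL^{(k)})-\tfrac{\theta}{2\beta}\|\sT(\bL^{(k)})-\bL^{(k)}\|_F^2$, using that the quadratic model is exact (since $f$ is affine), that on the diagonal family $\sT$ is coordinatewise the exact constrained proximal map (the clip of the soft-threshold is indeed the interval-constrained minimiser of the strictly convex scalar model), and convexity of $F$ for the KM averaging; this is correct and cleaner than the paper's argument. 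For the rate, your split into an ``unsaturated'' phase with per-step decrease at least $\theta\beta(\sigma_i-\eta)^2$ and a ``saturated'' phase where the scalar map is constant at $t^\star_i$ (so the error contracts by exactly $(1-\theta)$) mirrors the paper's distinction between cases 6)--8) and the constant-decrease cases, and is in fact more careful: your saturation region at $t^\star_i=1$ (the clip, when $\sigma_i>\eta$) is a genuinely geometric regime, whereas the paper's case 1) uses the identity $\gamma^{(k)}_i-\lambda^{(k)}_i=\min\{\beta(\sigma_i-\eta),1\}$, which ignores the cap $1-\lambda^{(k)}_i$ and therefore its constant lower bound $\beta\theta(\sigma_i-\eta)^2$ does not hold once the clip binds. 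The remaining bookkeeping you list for (b) (sign crossings, large negative starts, the finiteness of unsaturated steps) does go through along the lines you sketch.

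The one genuine gap is the step you explicitly leave open in part (a). For $\sigma_i<\eta$ it closes: $t^\star_i=0$, so $\lambda^{(k)}_i\ne 0$ forces that coordinate off its fixed point and your strict inequality follows. But for $\sigma_i>\eta$ the unique fixed point of $g_i$ is $t^\star_i=1$, and the hypothesis $\lambda^{(k)}_i\ne 0$ does not exclude $\lambda^{(k)}_i=1$: a coordinate sitting at $1$ has $\gamma^{(k)}_i=\lambda^{(k)}_i=1$ and contributes zero decrease, and this configuration is reachable (e.g.\ starting from $\bL^{(0)}=\hat{\bL}_{PP}$, any $i\leqs 3$ with $\sigma_i>\eta$ gives $\lambda^{(1)}_i=1$; if moreover $\sigma_j<\eta$ for the remaining $j$, every coordinate of $\bL^{(1)}$ is already fixed and $F(\bL^{(2)})=F(\bL^{(1)})$). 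So strictness cannot be derived from ``the hypothesised coordinate moves'', and if all coordinates are at their fixed points it fails outright. Note this is precisely the corner where the paper's own case 1) claims a positive lower bound via the inexact identity above, so your unfinished check has exposed a real boundary case in the statement rather than a step the paper's argument would repair; everything else in your proposal is sound.
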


\begin{proof}
\textbf{Part (a):} Given any initial $\bL^{(0)}$, since the last component of $\sT$ is $\proj_{\Omega}$ defined in \eqref{eqn:projsvd}, $\bL^{(1)}$ has the following form:
\begin{align*}
\bL^{(1)}&{=}\sT(\bL^{(0)}){=}\proj_{\Omega}{\circ} \prox_{\beta\eta\|\cdot\|_*}{\circ}\sG(\bL^{(0)}){=}\bU {\bLambda}^{(1)}\bV^\top,
\end{align*}
where $\bLambda^{(1)}:=\diag(\{{\lambda}^{(1)}_i\}_{i=1}^N)$ with ${\lambda}^{(1)}_i\in [-1, 1]$ for all $i$. Suppose 
\begin{align}
\label{eqn:svdmathinduce}
\bL^{(k)}=\bU {\bLambda}^{(k)}\bV^\top, \quad  {\lambda}^{(k)}_i\in [-1, 1], \ \forall i. 
\end{align}
Similar to the above deduction, 
\begin{align}
\label{eqn:svdmathinduce2}
\sT(\bL^{(k)})=\bU {\bGamma}^{(k)}\bV^\top, \quad  {\gamma}^{(k)}_i\in [-1, 1], \ \forall i. 
\end{align}
Note that $\bL^{(k)}$, $\sT(\bL^{(k)})$ and $\hat{\bPi}^\top$ have the same singular vectors $\bU$ and $\bV$. Hence the operators $\sG$, $\prox_{\beta\eta\|\cdot\|_*}$, and $\proj_{\Omega}$ actually act on the diagonal matrix ${\bLambda}^{(k)}$ in an element-wise way, based on their definitions \eqref{eqn:Qfuncproxsol} and \eqref{eqn:projsvd}. For each $i$,
\begin{align}
\label{eqn:svdmathinduce3}
&{\gamma}^{(k)}_i{=}\begin{cases}\min\{\max\{{\lambda}^{(k)}_i{+}\beta(\sigma_i{-}\eta),0\}, 1\},\ \ \text{if}\ {\lambda}^{(k)}_i{+}\beta\sigma_i{\geqs} 0;\\
\min\{{\lambda}^{(k)}_i+\beta(\sigma_i+\eta),0\}, \quad\text{if}\ {\lambda}^{(k)}_i+\beta\sigma_i<0.
\end{cases}
\end{align}
Since $\beta$, $\sigma_i$, $\eta\geqs 0$ and ${\lambda}^{(k)}_i\geqs -1$, ${\lambda}^{(k)}_i+\beta(\sigma_i+\eta)$ will not be smaller than $-1$. Hence the second case is simplified. It follows from \eqref{eqn:KMiter} that 
\begin{align*}
\bL^{(k+1)}=&\bU[(1-\theta){\bLambda}^{(k)}+\theta {\bGamma}^{(k)}] \bV^\top=:\bU {\bLambda}^{(k+1)}\bV^\top, \\
&{\lambda}^{(k+1)}_i \in[-1, 1], \ \forall i.
\end{align*}
By mathematical induction, \eqref{eqn:svdmathinduce}, \eqref{eqn:svdmathinduce2}, and \eqref{eqn:svdmathinduce3} hold for any $k\geqs 1$. Direct calculation leads to 
\begin{align}
\label{eqn:totaldiff3}
&F(\bL^{(k)})-F(\bL^{(k+1)})\nonumber\\
=&\tr((\bL^{(k+1)}-\bL^{(k)})\hat{\bPi}^\top)+\eta\sum_{i=1}^N(|{\lambda}^{(k)}_i|-|{\lambda}^{(k+1)}_i|)\nonumber\\
=&\sum_{i=1}^N [({\gamma}^{(k)}_i-{\lambda}^{(k)}_i)\theta\sigma_i+\eta(|{\lambda}^{(k)}_i|-|{\lambda}^{(k+1)}_i|)]\nonumber\\
=:& \sum_{i=1}^N (F_i(\bL^{(k)})-F_i(\bL^{(k+1)})),\\
\label{eqn:totaldiff4}
&\text{where} \ \ F_i(\bL^{(k)}):=\eta|{\lambda}^{(k)}_i|-\sigma_i{\lambda}^{(k)}_i.
\end{align}
\textbf{We investigate each $(F_i(\bL^{(k)})-F_i(\bL^{(k+1)}))$ of \eqref{eqn:totaldiff3} in $10$ cases, provided in Supplementary \ref{sup:tencases}}. To summarize, each $(F_i(\bL^{(k)})-F_i(\bL^{(k+1)}))$ of \eqref{eqn:totaldiff3} is non-negative. Hence $F(\bL^{(k)})-F(\bL^{(k+1)})\geqs 0$. Moreover, if $\sigma_i\ne \eta$ and ${\lambda}^{(k)}_i\ne 0$ for some $i$, then $\sigma_i- \eta\ne 0$, $\sigma_i+ \eta\ne 0$, and ${\lambda}^{(k)}_i\ne 0$ for this $i$. It can be seen that all the $10$ cases lead to a positive summand, and $F(\bL^{(k)})-F(\bL^{(k+1)})> 0$.

\textbf{Part (b):} Following similar steps in \eqref{eqn:totaldiff3} and \eqref{eqn:totaldiff4},
\begin{align}
\label{eqn:totaldiff}
&F(\bL^{(1)})-F(\bL^{*})= \sum_{i=1}^N (F_i(\bL^{(1)})-F_i(\bL^{*})).
\end{align}
Part (a) has already verified that each $F_i(\bL^{(1)})$ satisfies the descent property. Moreover, $F_i(\bL^{(1)})\downarrow F_i(\bL^{*})$ for each $i$, otherwise $F(\bL^{(1)})$ cannot monotonically converges to $F(\bL^{*})$. Hence we can break \eqref{eqn:totaldiff} into each $(F_i(\bL^{(1)})-F_i(\bL^{*}))$ for further investigation. 

We further analyze the lower bounds of the $10$ cases in Part (a): $F_i(\bL^{(k)})-F_i(\bL^{(k+1)})= \theta|\eta-\sigma_i||{\lambda}^{(k)}_i|$ in case 6), $F_i(\bL^{(k)})-F_i(\bL^{(k+1)})= \theta|\eta+\sigma_i||{\lambda}^{(k)}_i|$ in cases 7) and 8), and $F_i(\bL^{(k)})-F_i(\bL^{(k+1)})\geqs c>0$ in all the other cases, where $c:=\beta\theta(\sigma_i-\eta)^2$ is a positive constant. 

We start with the simplest situation. If $F_i(\bL^{(k)})$ never falls into cases 6)$\sim$8), then it keeps decreasing by at least a positive constant $c$ at each iteration. It takes at most $k:=\lceil\frac{F_i(\bL^{(1)})-F_i(\bL^{*})}{c} \rceil$ iterations to reach the target objective value. Hence, $F_i(\bL^{(k)})$ converges in constant steps. 

Next, we investigate the situation where $F_i(\bL^{(k)})$ always falls into case 6). Since $F_i(\bL^{(1)})\downarrow F_i(\bL^{*})$ and ${\lambda}^{(k+1)}_i=(1-\theta){\lambda}^{(k)}_i$, we have 
\begin{align}
\label{eqn:dimdiffcase6}
&F_i(\bL^{(1)})-F_i(\bL^{*})=\theta|\eta-\sigma_i|\sum_{m=1}^\infty|{\lambda}^{(m)}_i|\nonumber\\
&\quad=\theta|\eta-\sigma_i||{\lambda}^{(1)}_i|\sum_{m=0}^\infty (1-\theta)^m.
\end{align}
Dropping the first $k$ terms in the sum of \eqref{eqn:dimdiffcase6} yields
\begin{align}
\label{eqn:dimdiffcase6b}
&F_i(\bL^{(k+1)}){-}F_i(\bL^{*})=\theta|\eta{-}\sigma_i||{\lambda}^{(1)}_i|(1{-}\theta)^k\sum_{m=0}^\infty (1{-}\theta)^m\nonumber\\
&=\theta|\eta{-}\sigma_i||{\lambda}^{(1)}_i|(1{-}\theta)^k\frac{1}{\theta}=|\eta{-}\sigma_i||{\lambda}^{(1)}_i|(1{-}\theta)^k.
\end{align}
Therefore, 
\begin{align}
\label{eqn:linconv}
F_i(\bL^{(k+1)}){-}F_i(\bL^{*})=(1{-}\theta)(F_i(\bL^{(k)}){-}F_i(\bL^{*})),
\end{align}
which is a linear convergence. Similarly, cases 7) and 8) also result in a linear convergence.

As for the general situation, $F_i(\bL^{(k)})$ can visit any of the $10$ cases at each iteration. Thus its convergence rate is dominated by the slowest case, which is a linear convergence. Summing up all the $(F_i(\bL^{(1)})-F_i(\bL^{*}))$ in \eqref{eqn:totaldiff}, $F(\bL^{(k)})$ achieves an overall linear convergence.

\end{proof}

Theorem \ref{thm:objdescent} indicates that LTPSS has a linear convergence rate and thus requires $O(\log(\frac{1}{\varepsilon}))$ iterations to achieve a convergence tolerance of $\varepsilon>0$. In each iteration, LTPSS requires $O(N^2)$, $O(N^3)$, $O(N^2)$, and $O(N^2)$ to conduct a gradient descent step, a singular value thresholding, a projection onto the feasible set, and a KM iteration, respectively. Hence the computational complexity for one iteration is $O(N^3)$, and the overall computational complexity of LTPSS is $O(N^3\log(\frac{1}{\varepsilon}))$. The computational complexities of LTP-CF and LTP-PP are both $O(N^3)$, since they are closed-form methods with one singular value decomposition. Nevertheless, LTPSS improves investing performance on LTP-CF and LTP-PP, which offsets the additional computational cost.

Figure \ref{fig:logconver} shows an example plot of $\log(F(\bL^{(k)})-F(\bL^{(k+1)}))$ for the proposed KM algorithm in one run that converges at the $92$-nd iteration. Since $\log(F(\bL^{(k)})-F(\bL^{(k+1)}))$ decreases in a step-wise way until convergence, the KM algorithm enjoys considerable constant-step descents. Other runs of the KM algorithm also have similar plots. In order to make comparison, we remove the constraint $\|L\|_2 \leqslant 1$ so that the general proximal gradient method could be applied, which may be the closest method to the KM algorithm. The plot of the proximal gradient algorithm is also shown in Figure \ref{fig:logconver}, which indicates a worse convergence rate than the KM algorithm.

%\begin{wrapfigure}[15]{r}{20em}
\begin{figure} 
			\centering
\includegraphics[width=0.99\columnwidth]{./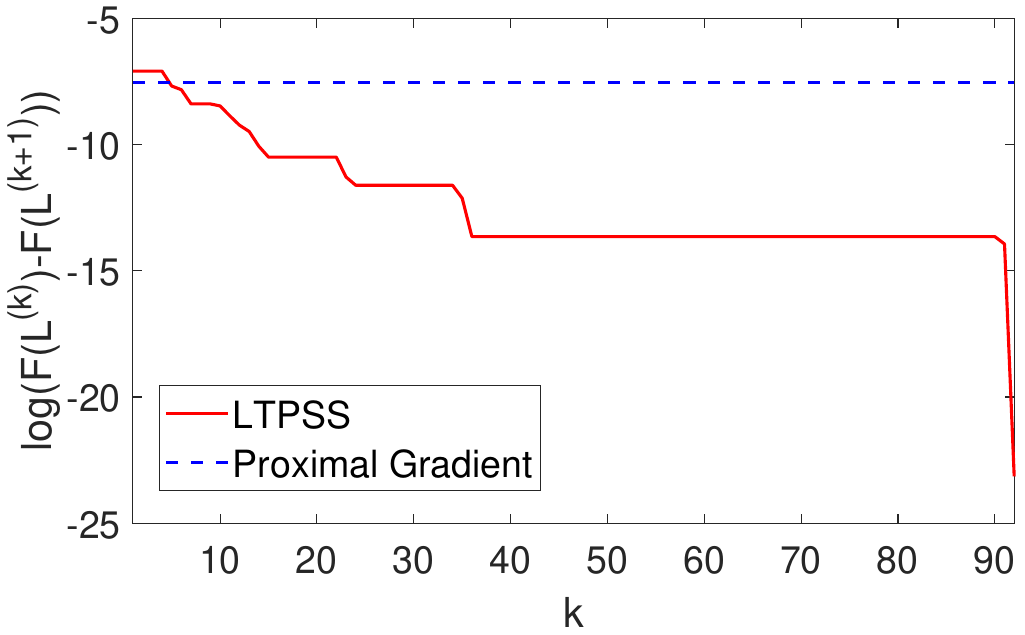}
		\caption{Convergence plots of the proposed KM algorithm and the general proximal gradient algorithm.}
		\label{fig:logconver}
\end{figure}
%\end{wrapfigure}

\section{Experimental Results}
\label{sec:experiment}
We follow the experimental framework of \cite{PPA,egrmvgap,desingijcai} to evaluate the performance of the proposed LTPSS. \textbf{The experimental settings and more experimental results are put in Supplementary \ref{sup:addexperiment}.}

\subsection{Mean Return}
\label{sec:meanret}

At the $t$-th trading time, a trading strategy determines an LTP $\hat{\bL}_{t+1}$ for the next trading time. Then the return of this strategy for the next trading time can be computed by $r_{t+1}:=\bS_t^\top \hat{\bL}_{t+1} \bR_{t+1}$. Suppose there are $\mT$ trading times in total. Then the mean return (MR) of this strategy is $\MR:=\frac{1}{\mT} \sum_{t=1}^\mT r_{t}$. It reflects the average investing gain of a trading strategy during the whole investment. The MRs of different trading strategies on the $7$ benchmark data sets are shown in Table \ref{tab:MR}. It indicates that LTP-PP performs well on the FF25 data sets, which are well interpreted by the Fama-French five factors. But it deteriorates on the other data sets collected in diverse financial circumstances. For example, LTP-PP suffers a negative MR on Stoxx50, which represents a financial market outside US. LTP-SF also performs badly with negative MRs on MSCI, Stoxx50, FOF, and FTSE100, which indicates that using only the own signal of each asset is ineffective. SDCP has similar bad performance as LTP-SF, since it cannot achieve full-reinvestment on each trade. mSSRM-PGA has relatively low MRs because it is a uni-linear trading strategy without an effective signal learning scheme. LTPSS achieves the best MRs on most data sets except MSCI where it is slightly worse than LTP-CF, hence more PPs may not necessarily lead to better performance. It achieves greater advantage over LTP-SF and LTP-PP on MSCI, Stoxx50, FOF, and FTSE100. For example, it performs about $1$, $2$, and $25$ times better than LTP-PP on MSCI, FOF, and FTSE100, respectively. These results indicate that LTPSS can extract key features from the prediction matrix and performs robustly in diverse financial circumstances.

\begin{table*}[!htb]
	\centering
\scalebox{0.83}{	\begin{tabular}{ccccccccc}
		\hline
Eval. & Strategy	&	FF25BM & FF25MEINV   & FF25MEOP &  MSCI  & Stoxx50  & FOF &  FTSE100     \\
\hline
\multirow{6}{*}{MR} &SDCP	 & 0.0121  & 0.0110 & 0.0111 & -0.0126 & -0.0087 & -0.0015 & -0.0795 \\
&LTP-SF	 &  0.0120 & 0.0107 & 0.0109 & -0.0170 & -0.0097 & -0.0017 & -0.0274 \\
 &LTP-CF	 & 0.0133 & 0.0137 & 0.0130 & \textbf{0.0090} & 0.0004 & 0.0022 & 0.0242 \\
 &LTP-PP	 & 0.0129 & 0.0133 & 0.0126 & 0.0041 & -0.0004 & 0.0012 & 0.0013 \\
  & mSSRM-PGA   &  0.0006   & 0.0006    & 0.0006    &  -0.0003   &  -0.0003  &  0.0006   &  0.0036   \\
 &\textbf{LTPSS}	 & \textbf{0.0133} & \textbf{0.0143} & \textbf{0.0134} & 0.0087 & \textbf{0.0012} & \textbf{0.0030} & \textbf{0.0273} \\
\hline
\multirow{6}{*}{SR} &SDCP	 & 0.1429  & 0.1522 & 0.1626 & -0.2702 & -0.0738 & -0.0254 & -0.2802 \\
 &LTP-SF	 &  0.1356 & 0.1394 & 0.1509 & -0.2918 & -0.0738 & -0.0280 & -0.0941 \\
 & LTP-CF	 & 0.2010 & 0.2590 & 0.2535 & 0.2257 & 0.0070 & 0.0659 & 0.1359 \\
 &LTP-PP	 & 0.2049 & 0.2642 & 0.1998 & 0.1743 & -0.0091 & 0.0312 & 0.0114 \\
  & mSSRM-PGA   &  0.0691   & 0.0683    &  0.0693    &  -0.0277   &  -0.0214  &  0.0413   &  0.1253   \\
 &\textbf{LTPSS}	 & \textbf{0.2049} & \textbf{0.2731} & \textbf{0.2619} & \textbf{0.2445} & \textbf{0.0198} & \textbf{0.0874} & \textbf{0.1521} \\
\hline
\multirow{5}{*}{IR}	  &SDCP	 & 0.1407  & 0.2103 &0.1645  & 0.0088 & \textbf{0.0922} & 0.0613 & -0.1957 \\
& LTP-CF &  0.1567 & 0.2203 & 0.1938 & 0.1064 & -0.0084 & 0.1060 & 0.1314 \\
	 & LTP-PP &  0.1489 & 0.2308 & 0.1715 & -0.0203 & -0.0365 & 0.1060 & -0.0468 \\
    & mSSRM-PGA     &  0.0852   & 0.0839    &  0.0854   &  -0.0368   &  -0.0329  &  0.0417   &  0.1320   \\
	 & \textbf{LTPSS} &  \textbf{0.1650} & \textbf{0.2418} & \textbf{0.2047} & \textbf{0.1366} & {0.0053} & \textbf{0.1409} & \textbf{0.1503} \\
\hline
	\multirow{5}{*}{$\alpha$}	 &SDCP	 & 0.0006  & 0.0009 & 0.0007 & 0.0001 & 0  & 0.0002 & 0 \\
	 & LTP-CF &  0.0063 & 0.0081 & 0.0067 & 0.0038 & -0.0005 & 0.0028 & 0.0229 \\
	 & LTP-PP &  0.0054 & 0.0079 & 0.0056 & -0.0004 & -0.0015 & 0.0022 & -0.0050 \\
   & mSSRM-PGA   &  0.0006   & 0.0005    &  0.0005   &  -0.0004   &   -0.0004 &  0.0005    & 0.0033    \\
	 & \textbf{LTPSS} &	 \textbf{0.0063} & \textbf{0.0087} & \textbf{0.0067} & \textbf{0.0043} & \textbf{0.0003} & \textbf{0.0036} & \textbf{0.0264} \\
\hline
		\multirow{5}{*}{p-val.}	 &SDCP	 &  0.0007  & 0 & 0.0001 & 0.4700 & 0.0284 & 0.2536 & 0.0263 \\
 & LTP-CF &	 0.0002 & 0 & 0 & 0.1807 & 0.5846 & 0.1260 & 0.1794 \\
	 & LTP-PP &	 0.0002 & 0 & 0 & 0.5694 & 0.8228 & 0.1260 & 0.8140 \\
  & mSSRM-PGA  &  0   &   0   &   0  &  0.1176   &  0.0201   &  0.0733   &  0.0002   \\
 & \textbf{LTPSS}	& 0.0001 & 0 & 0 & 0.1210 & 0.4468 & 0.0643 & 0.1471 \\
\hline
\multirow{6}{*}{MDD} &SDCP	 & 0.8700   &  0.7805  &  0.6695   &    0.7010 &  1.0000    & 0.4311   &  0.9685   \\
&LTP-SF	 	&  0.8858   & 0.8083   &  0.6857   &  0.7917   &  1.0000   &  0.4572   &   0.9895  \\
 &LTP-CF	 &  0.7131  &  0.5309  &  0.5188   &   0.3955  &  0.8944   &   0.1983  &  0.5986   \\
 &LTP-PP	&  0.6530  & 0.5078   &   0.5208  &   0.1453  &  0.7780   &  0.2188   &  0.4017   \\
  & mSSRM-PGA   &  0.6237   &  0.4834   &  0.4958   &  0.5294   &   0.8676  &   0.5454  &  0.4596   \\
  & \textbf{LTPSS}		& 0.6918   &  0.5184  & 0.5259    &  0.3295   &  0.8126   &  0.1894   &  0.5141   \\
\hline
	\end{tabular}}
	\caption{Experimental results of different trading strategies on $7$ benchmark data sets.}
	\label{tab:MR}
\end{table*}

\subsection{Sharpe Ratio}
\label{sec:SR}
The Sharpe ratio (SR, \cite{SHARPratio}) is a widely-used evaluating metric for the risk-adjusted return of a trading strategy. It can be calculated as follows in the framework of this paper: $\SR:=\frac{\MR-{r}_{\ff}}{\hat{s}({r}_{t})}$, where ${r}_{\ff}$ is the average risk-free return during the investment, and $\hat{s}(\cdot)$ is the sample standard deviation operator. The SR need not be annualized and this does not affect the assessment. ${r}_{\ff}$ is given on the FF25 data sets but not given on MSCI, Stoxx50, FOF, and FTSE100. Hence we set ${r}_{\ff}=0$ for the latter $4$ data sets. The SRs of different trading strategies are shown in Table \ref{tab:MR}. Again, SDCP, LTP-SF, and mSSRM-PGA have similar bad performance due to ineffective investing schemes. LTP-PP outperforms LTP-CF on FF25BM and FF25MEINV, but it still performs badly on Stoxx50 and FTSE100. It indicates that the first few PPs may not be robust enough to different financial circumstances.

LTPSS achieves the best SRs on all the data sets, especially on Stoxx50, FOF and FTSE100 where it keeps up robust performance. The advantage of LTPSS over LTP-CF and LTP-PP is large on Stoxx50. These results indicate that LTPSS is effective and robust in balancing investing return and risk.

\subsection{Information Ratio}
\label{sec:IR}
In finance, it is conventional to factorize the return of a trading strategy for in-depth analysis of its composition. Following \cite{PPA}, we adopt the following Fama-French five-factor model:
	\begin{align}
		\label{eqn:ff5factor}
{r}_{t}=&\alpha+\zeta_0 {r}_{SF,t}+\zeta_1 MKT_t+\zeta_2 SMB_t+\zeta_3 HML_t\nonumber\\
&+\zeta_4 RMW_t+\zeta_5 CMA_t+\varepsilon_t,
	\end{align}	
where ${r}_{t}$ and ${r}_{SF,t}$ denote the returns on the $t$-th trading time of a trading strategy and the LTP-SF strategy, respectively. MKT (market), SMB (size), HML (value), RMW (profitability), and CMA (investment) denote the Fama-French five factors. $\zeta_0\sim\zeta_5$ are the coefficients of the corresponding factors. $\varepsilon_t$ is the residue for this regression model. $\alpha$ \cite{portalpha0} can be seen as the pure return of the trading strategy excluding all the above-mentioned factors, which will be further investigated in Section \ref{sec:alphafact}. After conducting a regression for \eqref{eqn:ff5factor}, estimations for $\hat{\alpha}$ and $\hat{s}(\varepsilon_t)$ can be obtained. The information ratio (IR, \cite{inforatio2}) is another widely-used
risk-adjusted return: $\IR:=\frac{\hat{\alpha}}{\hat{s}(\varepsilon_t)}$. It can be seen as the pure SR hedging out all the above-mentioned factors. Note that the return of LTP-SF has also been hedged out in \eqref{eqn:ff5factor}, thus LTP-SF does not have IRs itself. As for other trading strategies, a positive IR can be roughly considered as a better performance than LTP-SF. Table \ref{tab:MR} shows the IRs of different trading strategies. LTP-PP performs well on the FF25 data sets but deteriorates on MSCI, Stoxx50, and FTSE100. In comparison, LTPSS outperforms LTP-CF and LTP-PP on all the data sets, especially on MSCI and Stoxx50 where it achieves effective and robust performance. These results also show its good ability in balancing return and risk.

\subsection{$\alpha$ Factor}
\label{sec:alphafact}
$\alpha$ represents the pure return of a trading strategy based on the market-neutral perspective, since it hedges out all the market-related factors. It measures the intrinsic return that is brought by the traded assets themselves instead of the market volatility. It is widely employed to evaluate the effectiveness of a trading strategy despite the market effect. Along with the regression for \eqref{eqn:ff5factor}, a left-tailed t-test can be conducted to obtain the p-value for the corresponding $\alpha$. Both $\alpha$s and p-values of different trading strategies are presented in Table \ref{tab:MR}. SDCP and mSSRM-PGA perform badly on all the data sets. Similar to the results of IR, LTP-PP performs badly on MSCI, Stoxx50, and FTSE100. In comparison, LTPSS achieves the best performance on all the data sets. Hence LTPSS is effective in achieving pure returns.

\subsection{Maximum Drawdown}
\label{sec:MDD}
The maximum drawdown (MDD, \cite{MDD}) measures the maximum percentage loss of wealth from a past peak to a past valley for a PO method, which corresponds to the worst case in the investment. It is mainly treated by a back-end risk control mechanism, such as setting stop loss orders. For a reference purpose, we present MDDs of different trading strategies in Table \ref{tab:MR}. LTPSS achieves competitive MDDs, roughly between those of LTP-PP and LTP-CF. Since LTPSS achieves the best performance in most cases with other evaluating metrics, it is an effective method considering both return and risk.

\section{Conclusion}
\label{sec:conclude}
In this work, we propose a linear trading position with sparse spectrum (LTPSS) that can explore a larger spectral region of the prediction matrix and extract key features from it. Our approach improves on the existing principal portfolio (PP) strategy that it allows for all the PPs with flexible spectral energies. We develop a Krasnosel'ski\u \i-Mann fixed-point algorithm to solve the proposed optimization model, which is able to handle the complicated geometric structure of the constraint and the nuclear norm regularization. It also guarantees a linear convergence and the descent property of the objective values. Experimental results show that LTPSS achieves competitive and robust investing performance in diverse financial circumstances according to some common evaluating metrics. 

A direct impact of LTPSS is to provide a novel trading strategy for the interdiscipline of machine learning and financial technology. It is highly interpretable and understandable in the general framework of finance, as well as tractable and reliable with theoretical guarantees. It can enrich the factor library for quantitative research, or reveal some intrinsic patterns of the asset pool. A possible limitation is that LTPSS may not be applied to nonlinear trading frameworks, which is partly due to the lack of such trading frameworks in finance. As for future works, we can extend LTPSS to more general factor trading architectures, or develop a nonlinear trading position that has a stronger signal processing ability. Exploring the relationship between PPs and conventional financial factors is also an interesting topic.

%% The file named.bst is a bibliography style file for BibTeX 0.99c

\section*{Acknowledgments}
This work is supported in part by the National Natural Science Foundation of China under grants 62176103 and 72173141, in part by the Guangdong Basic and Applied Basic Research Foundation under grant 2024A1515010749, and in part by the Science and Technology Planning Project of Guangzhou under grant 2024A04J9896.

\bibliographystyle{named}
\bibliography{bibfile}

\begin{thebibliography}{}

\bibitem[\protect\citeauthoryear{Bauschke and
  Combettes}{2017}]{bauschke2017convex}
Heinz~H. Bauschke and Patrick~L. Combettes.
\newblock {\em Convex Analysis and Monotone Operator Theory in {Hilbert}
  Space}.
\newblock Springer, New York, 2nd edition, 2017.

\bibitem[\protect\citeauthoryear{Bo\c{t} and Nguyen}{2023}]{KMconverrate}
Radu Bo\c{t} and Dang-Khoa Nguyen.
\newblock Fast krasnosel'ski\u\i–mann algorithm with a convergence rate of
  the fixed point iteration of \({ o} \left(\frac{1}{{ k}} \right)\).
\newblock {\em SIAM Journal on Numerical Analysis}, 61:2813--2843, 11 2023.

\bibitem[\protect\citeauthoryear{Bruni \bgroup \em et al.\egroup
  }{2016}]{pubdata2}
Renato Bruni, Francesco Cesarone, Andrea Scozzari, and Fabio Tardella.
\newblock Real-world datasets for portfolio selection and solutions of some
  stochastic dominance portfolio models.
\newblock {\em Data in Brief}, 8:858--862, 2016.

\bibitem[\protect\citeauthoryear{Cai \bgroup \em et al.\egroup
  }{2010}]{singshrink}
Jian-Feng Cai, Emmanuel~J. Cand\`{e}s, and Zuowei Shen.
\newblock A singular value thresholding algorithm for matrix completion.
\newblock {\em SIAM Journal on Optimization}, 20(4):1956--1982, 2010.

\bibitem[\protect\citeauthoryear{Cand\`{e}s and Recht}{2012}]{matcom}
Emmanuel Cand\`{e}s and Benjamin Recht.
\newblock Exact matrix completion via convex optimization.
\newblock {\em Communications of the ACM}, 55(6):111–119, Jun. 2012.

\bibitem[\protect\citeauthoryear{Collin-Dufresne \bgroup \em et al.\egroup
  }{2020}]{lintrade2}
Pierre Collin-Dufresne, Kent Daniel, and Mehmet Sağlam.
\newblock Liquidity regimes and optimal dynamic asset allocation.
\newblock {\em Journal of Financial Economics}, 136(2):379--406, 2020.

\bibitem[\protect\citeauthoryear{Cui \bgroup \em et al.\egroup
  }{2020}]{nunormcv}
Shuhao Cui, Shuhui Wang, Junbao Zhuo, Liang Li, Qingming Huang, and Qi~Tian.
\newblock Towards discriminability and diversity: Batch nuclear-norm
  maximization under label insufficient situations.
\newblock In {\em Proceedings of the IEEE/CVF Conference on Computer Vision and
  Pattern Recognition (CVPR)}, June 2020.

\bibitem[\protect\citeauthoryear{Fama and French}{2015}]{FF5fact}
Eugene~F. Fama and Kenneth~R. French.
\newblock A five-factor asset pricing model.
\newblock {\em Journal of Financial Economics}, 116(1):1--22, 2015.

\bibitem[\protect\citeauthoryear{G\^arleanu and Pedersen}{2013}]{lintrade1}
Nicolae G\^arleanu and Lasse~Heje Pedersen.
\newblock Dynamic trading with predictable returns and transaction costs.
\newblock {\em The Journal of Finance}, 68(6):2309--2340, 2013.

\bibitem[\protect\citeauthoryear{Gibbons \bgroup \em et al.\egroup
  }{1989}]{sigtrade}
Michael~R. Gibbons, Stephen~A. Ross, and Jay Shanken.
\newblock A test of the efficiency of a given portfolio.
\newblock {\em Econometrica}, 57(5):1121--1152, 1989.

\bibitem[\protect\citeauthoryear{Gupta and Kelly}{2019}]{moemp}
Tarun Gupta and Bryan Kelly.
\newblock Factor momentum everywhere.
\newblock {\em The Journal of Portfolio Management}, 45(45):13--36, 2019.

\bibitem[\protect\citeauthoryear{Kelly \bgroup \em et al.\egroup }{2019}]{IPCA}
Bryan~T. Kelly, Seth Pruitt, and Yinan Su.
\newblock Characteristics are covariances: A unified model of risk and return.
\newblock {\em Journal of Financial Economics}, 134(3):501--524, 2019.

\bibitem[\protect\citeauthoryear{Kelly \bgroup \em et al.\egroup }{2023a}]{PPA}
Bryan Kelly, Semyon Malamud, and Lasse~Heje Pedersen.
\newblock Principal portfolios.
\newblock {\em The Journal of Finance}, 78(1):347--387, 2023.

\bibitem[\protect\citeauthoryear{Kelly \bgroup \em et al.\egroup
  }{2023b}]{PPA2}
Bryan Kelly, Diogo Palhares, and Seth Pruitt.
\newblock Modeling corporate bond returns.
\newblock {\em The Journal of Finance}, 78(4):1967--2008, 2023.

\bibitem[\protect\citeauthoryear{Krasnosel'ski\u\i}{1955}]{KMalgoK}
Mark~Aleksandrovich Krasnosel'ski\u\i.
\newblock Two remarks on the method of successive approximations.
\newblock {\em Uspekhi Mat. Nauk}, 10(1):123--127, 1955.

\bibitem[\protect\citeauthoryear{Lacroix \bgroup \em et al.\egroup
  }{2018}]{nunormknow}
Timothee Lacroix, Nicolas Usunier, and Guillaume Obozinski.
\newblock Canonical tensor decomposition for knowledge base completion.
\newblock In Jennifer Dy and Andreas Krause, editors, {\em Proceedings of the
  35th International Conference on Machine Learning}, volume~80 of {\em
  Proceedings of Machine Learning Research}, pages 2863--2872. PMLR, 10--15 Jul
  2018.

\bibitem[\protect\citeauthoryear{Lai and Yang}{2023}]{egrmvgap}
Zhao-Rong Lai and Haisheng Yang.
\newblock A survey on gaps between mean-variance approach and exponential
  growth rate approach for portfolio optimization.
\newblock {\em ACM Computing Surveys}, 55(2):1--36, Mar. 2023.
\newblock {Article No. 25}.

\bibitem[\protect\citeauthoryear{Lai \bgroup \em et al.\egroup }{2018}]{SSPO}
Zhao-Rong Lai, Pei-Yi Yang, Liangda Fang, and Xiaotian Wu.
\newblock Short-term sparse portfolio optimization based on alternating
  direction method of multipliers.
\newblock {\em Journal of Machine Learning Research}, 19(63):1--28, Oct. 2018.

\bibitem[\protect\citeauthoryear{Lai \bgroup \em et al.\egroup }{2020}]{SPOLC}
Zhao-Rong Lai, Liming Tan, Xiaotian Wu, and Liangda Fang.
\newblock Loss control with rank-one covariance estimate for short-term
  portfolio optimization.
\newblock {\em Journal of Machine Learning Research}, 21(97):1--37, Jun. 2020.

\bibitem[\protect\citeauthoryear{Lai \bgroup \em et al.\egroup
  }{2024}]{desingijcai}
Zhao-Rong Lai, Xiaotian Wu, Liangda Fang, and Ziliang Chen.
\newblock A de-singularity subgradient approach for the extended weber location
  problem.
\newblock In {\em Proceedings of the Thirty-Third International Joint
  Conference on Artificial Intelligence, {IJCAI-24}}, pages 4370--4379.
  International Joint Conferences on Artificial Intelligence Organization, 8
  2024.
\newblock Main Track.

\bibitem[\protect\citeauthoryear{Li \bgroup \em et al.\egroup
  }{2016}]{olpsjmlr}
Bin Li, Doyen Sahoo, and Steven~C.H. Hoi.
\newblock {OLPS}: a toolbox for on-line portfolio selection.
\newblock {\em Journal of Machine Learning Research}, 17(1):1242--1246, 2016.

\bibitem[\protect\citeauthoryear{Lin \bgroup \em et al.\egroup }{2024a}]{MSSRM}
Yizun Lin, Zhao-Rong Lai, and Cheng Li.
\newblock A globally optimal portfolio for m-sparse sharpe ratio maximization.
\newblock In {\em Advances in Neural Information Processing Systems},
  volume~37, pages 17133--17160, 2024.

\bibitem[\protect\citeauthoryear{Lin \bgroup \em et al.\egroup }{2024b}]{AMSPA}
Yizun Lin, Yangyu Zhang, Zhao-Rong Lai, and Cheng Li.
\newblock Autonomous sparse mean-{CV}a{R} portfolio optimization.
\newblock In {\em Proceedings of the 41st International Conference on Machine
  Learning}, volume 235 of {\em Proceedings of Machine Learning Research},
  pages 30440--30456. PMLR, 21--27 Jul 2024.

\bibitem[\protect\citeauthoryear{Lintner}{1965}]{portalpha0}
John Lintner.
\newblock The valuation of risk assets and the selection of risky investments
  in stock portfolios and capital budgets.
\newblock {\em Review of Economics and Statistics}, 47(1):13--37, Feb. 1965.

\bibitem[\protect\citeauthoryear{Magdon-Ismail and Atiya}{2004}]{MDD}
Malik Magdon-Ismail and Amir~F. Atiya.
\newblock Maximum drawdown.
\newblock {\em Risk Magazine}, 10:99--102, 2004.

\bibitem[\protect\citeauthoryear{Mann}{1953}]{KMalgoM}
W.~Robert Mann.
\newblock Mean value methods in iteration.
\newblock {\em Proceedings of the American Mathematical Society}, 4:506--510,
  1953.

\bibitem[\protect\citeauthoryear{Moreau}{1965}]{moreau1965proximite}
Jean-Jacques Moreau.
\newblock Proximit{\'e} et dualit{\'e} dans un espace hilbertien.
\newblock {\em Bull. Soc. Math. France}, 93(2):273--299, 1965.

\bibitem[\protect\citeauthoryear{Moskowitz \bgroup \em et al.\egroup
  }{2012}]{moemp2}
Tobias~J. Moskowitz, Yao~Hua Ooi, and Lasse~Heje Pedersen.
\newblock Time series momentum.
\newblock {\em Journal of Financial Economics}, 104(2):228--250, 2012.

\bibitem[\protect\citeauthoryear{Rockafellar and Wets}{2009}]{varana}
R~Tyrrell Rockafellar and Roger J-B Wets.
\newblock {\em Variational Analysis}, volume 317.
\newblock Springer Science \& Business Media, 2009.

\bibitem[\protect\citeauthoryear{Sharpe}{1966}]{SHARPratio}
William~F. Sharpe.
\newblock Mutual fund performance.
\newblock {\em Journal of Business}, 39(1):119--138, Jan. 1966.

\bibitem[\protect\citeauthoryear{Treynor and Black}{1973}]{inforatio2}
Jack~L. Treynor and Fischer Black.
\newblock How to use security analysis to improve portfolio selection.
\newblock {\em Journal of Business}, 46(1):66--86, Jan. 1973.

\bibitem[\protect\citeauthoryear{You \bgroup \em et al.\egroup
  }{2020}]{nunormgraph}
Jiaxuan You, Xiaobai Ma, Yi~Ding, Mykel~J Kochenderfer, and Jure Leskovec.
\newblock Handling missing data with graph representation learning.
\newblock In H.~Larochelle, M.~Ranzato, R.~Hadsell, M.F. Balcan, and H.~Lin,
  editors, {\em Advances in Neural Information Processing Systems}, volume~33,
  pages 19075--19087. Curran Associates, Inc., 2020.

\end{thebibliography}

\clearpage

\appendix
\onecolumn

\section{Supplementary Material}

\subsection{Proof of Theorem \ref{thm:nonexpan}}
\label{proof:nonexpan}

\begin{proof}
We need to prove that $\forall \bA,\bB\in \bbRNN$, $\|\sT(\bA)-\sT(\bB)\|_F\leqs \|\bA-\bB\|_F$. Since $\sG(\bA)-\sG(\bB)= \bA-\bB$, $\sG$ is a non-expansive operator. As for $\prox_{\beta\eta\|\cdot\|_*}$, it is obvious that $\beta\eta\|\cdot\|_*$ is a convex function because
\begin{align*}
&\|\theta\bA+(1-\theta)\bB\|_{*}\leqs \theta\|\bA\|_{*}+(1-\theta)\|\bB\|_{*}, \qquad\forall \bA,\bB\in \bbRNN,\ \forall \theta \in [0,1].
\end{align*}
Then according to \cite{moreau1965proximite}, $\prox_{\beta\eta\|\cdot\|_*}$ is a firmly non-expansive operator, which is also non-expansive. Next, Theorem \ref{thm:proj} indicates that $\proj_{\Omega}$ is also a non-expansive operator. Finally, $\sT$ is non-expansive as composed by three non-expansive operators.
\end{proof}

\subsection{Proof of Proposition \ref{prop:fixedpointexist}}
\label{proof:fixedpointexist}

\begin{proof}
Since the last component of $\sT$ is $\proj_{\Omega}$, we can restrict $\sT: \Omega\rightarrow \Omega$ as a mapping from $\Omega$ to itself. Proposition \ref{prop:conclobou} indicates that $\Omega$ is convex and compact, while Theorem \ref{thm:nonexpan} indicates that $\sT$ is $1$-Lipschitz continuous (non-expansive). Then it follows from the Schauder fixed-point theorem that $\sT$ has at least one fixed-point in $\Omega$. 
\end{proof}

\subsection{Proof of Theorem \ref{thm:KMtheorem}}
\label{proof:KMtheorem}
The general framework for the convergence of KM algorithms originates from \cite{KMalgoK}
and \cite{KMalgoM}. We follow their framework in our special problem. First, we take any fixed point $\bL^\bullet\in\mF_\Omega$ and examine its distances to two successive iterates $\bL^{(k)}$ and $\bL^{(k+1)}$. To do this, we need the following equality for the inner product:
\begin{align}
\label{eqn:innerequality}
2\tr((\bA-\bB)^\top (\bB-\bC))=\|\bA-\bC\|_F^2-\|\bA-\bB\|_F^2-\|\bB-\bC\|_F^2, \quad\forall \bA,\bB,\bC\in \bbRNN,
\end{align}	
which is easy to verify. Then,
\begin{align}
\label{eqn:lkp1fix}
&\|\bL^{(k+1)}-\bL^\bullet\|_F^2\nonumber\\
=&\|(1-\theta)\bL^{(k)}+\theta\sT(\bL^{(k)})-((1-\theta)+\theta)\bL^\bullet\|_F^2\nonumber\\
=&\|(1-\theta)(\bL^{(k)}-\bL^\bullet)+\theta(\sT(\bL^{(k)})-\bL^\bullet)\|_F^2\nonumber\\
=&\|(1-\theta)(\bL^{(k)}-\bL^\bullet)\|_F^2+\|\theta(\sT(\bL^{(k)})-\bL^\bullet)\|_F^2+2(1-\theta)\theta\tr((\bL^{(k)}-\bL^\bullet)^\top(\sT(\bL^{(k)})-\bL^\bullet))\nonumber\\
=&\|(1-\theta)(\bL^{(k)}-\bL^\bullet)\|_F^2+\|\theta(\sT(\bL^{(k)})-\bL^\bullet)\|_F^2\nonumber\\
&-(1-\theta)\theta(\|(\bL^{(k)}-\sT(\bL^{(k)})\|_F^2-\|\bL^{(k)}-\bL^\bullet\|_F^2-\|\bL^\bullet-\sT(\bL^{(k)})\|_F^2)\nonumber\\
=&(1-\theta)\|\bL^{(k)}-\bL^\bullet\|_F^2+\theta\|\bL^\bullet-\sT(\bL^{(k)})\|_F^2-(1-\theta)\theta\|(\bL^{(k)}-\sT(\bL^{(k)})\|_F^2\nonumber\\
=&(1-\theta)\|\bL^{(k)}-\bL^\bullet\|_F^2+\theta\|\sT(\bL^\bullet)-\sT(\bL^{(k)})\|_F^2-(1-\theta)\theta\|(\bL^{(k)}-\sT(\bL^{(k)})\|_F^2\nonumber\\
\leqs&(1-\theta)\|\bL^{(k)}-\bL^\bullet\|_F^2+\theta\|\bL^\bullet-\bL^{(k)}\|_F^2-(1-\theta)\theta\|(\bL^{(k)}-\sT(\bL^{(k)})\|_F^2\nonumber\\
=&\|\bL^{(k)}-\bL^\bullet\|_F^2-(1-\theta)\theta\|(\bL^{(k)}-\sT(\bL^{(k)})\|_F^2.
\end{align}	
The inequality holds since $\sT$ is non-expansive. Since $\theta\in (0,1)$, we have 
\begin{align}
\label{eqn:fejer}
\|\bL^{(k+1)}-\bL^\bullet\|_F^2\leqs \|\bL^{(k)}-\bL^\bullet\|_F^2, \quad \forall k\geqs 1.
\end{align}	
Moreover, $\bL^\bullet$ is arbitrary in $\mF_\Omega$. Hence $\{\bL^{(k)}\}_{k\geqs 1}$ is called a \emph{Fej\'er monotone} sequence \cite{bauschke2017convex} w.r.t. $\mF_\Omega$. It has a useful property that $\bL^{(k+1)}$ will not get strictly farther than $\bL^{(k)}$ from any point in $\mF_\Omega$. Therefore, $\{\bL^{(k)}\}_{k\geqs 1}$ is a bounded sequence.

Summing up both sides of \eqref{eqn:lkp1fix} from $k=1$ to $k=m$ yields
\begin{align}
\label{eqn:lkp1fix2}
&\|\bL^{(m+1)}-\bL^\bullet\|_F^2\leqs \|\bL^{(1)}-\bL^\bullet\|_F^2-(1-\theta)\theta\sum_{k=1}^m \|(\bL^{(k)}-\sT(\bL^{(k)})\|_F^2, \quad\forall m\geqs 1.
\end{align}	
Hence,
\begin{align}
\label{eqn:lkp1fix3}
&(1-\theta)\theta\sum_{k=1}^m \|(\bL^{(k)}-\sT(\bL^{(k)})\|_F^2\leqs \|\bL^{(1)}-\bL^\bullet\|_F^2, \quad\forall m\geqs 1.
\end{align}	
Letting $m\rightarrow \infty$ yields
\begin{align}
\label{eqn:lkp1fixinf}
&\sum_{k=1}^\infty \|(\bL^{(k)}-\sT(\bL^{(k)})\|_F^2\leqs \frac{1}{(1-\theta)\theta}\|\bL^{(1)}-\bL^\bullet\|_F^2<\infty.
\end{align}	
Therefore, $\|(\bL^{(k)}-\sT(\bL^{(k)})\|_F\rightarrow 0$ as $k\rightarrow \infty$. In other words, $(\bL^{(k)}-\sT(\bL^{(k)}))$ converges to $\bzer_{N\times N}$.

Since $\{\bL^{(k)}\}_{k\geqs 1}$ is a bounded sequence, it has at least one cluster point $\bL^*$. Then there exists a subsequence $\{\bL^{(k_j)}\}$ that converges to $\bL^*$. Since $\sT$ is continuous, 
\begin{align}
\label{eqn:Tcontin}
\lim_{j\rightarrow \infty} \sT(\bL^{(k_j)})=\sT(\lim_{j\rightarrow \infty} \bL^{(k_j)})=\sT(\bL^*).
\end{align}	
Since $(\bL^{(k)}-\sT(\bL^{(k)}))$ converges to $\bzer_{N\times N}$, $(\bL^{(k_j)}-\sT(\bL^{(k_j)}))$ converges to $\bzer_{N\times N}$. Then we have
\begin{align}
\label{eqn:Tcontin2}
\lim_{j\rightarrow \infty}(\bL^{(k_j)}-\sT(\bL^{(k_j)}))=\lim_{j\rightarrow \infty}\bL^{(k_j)}-\lim_{j\rightarrow \infty}\sT(\bL^{(k_j)})=\bL^*-\sT(\bL^*)=\bzer_{N\times N}.
\end{align}	
Hence $\bL^*\in\mF_\Omega$. Note that this cluster point $\bL^*$ is arbitrary, thus the above deduction implies that every cluster point $\bL^*\in\mF_\Omega$.

The last step is to prove that $\{\bL^{(k)}\}_{k\geqs 1}$ can have at most one cluster point. If not, denote $\bL^*$ and $\bL^\bullet$ as two distinct cluster points of $\{\bL^{(k)}\}_{k\geqs 1}$. Namely, $\bL^{(k_j)}\rightarrow\bL^*$ and $\bL^{(k_m)}\rightarrow\bL^\bullet$. It follows from the above deductions that $\bL^*, \bL^\bullet\in \mF_\Omega$. Then for any $\bL^\circ\in \mF_\Omega$ (can be $\bL^*, \bL^\bullet$ or others), $\{\|\bL^{(k)}-\bL^\circ\|_F^2\}_{k\geqs 1}$ is a monotonically non-increasing sequence with a lower bound $0$ based on \eqref{eqn:fejer}. Then $\{\|\bL^{(k)}-\bL^\circ\|_F^2\}_{k\geqs 1}$ converges to a non-negative real number, and so does its subsequences:
\begin{align}
\label{eqn:onecluster}
\lim_{j\rightarrow \infty} \|\bL^{(k_j)}-\bL^\circ\|_F^2 =\lim_{m\rightarrow \infty} \|\bL^{(k_m)}-\bL^\circ\|_F^2,\quad \forall \bL^\circ\in \mF_\Omega.
\end{align}	
Expanding both sides of \eqref{eqn:onecluster} yields
\begin{align}
\label{eqn:onecluster2}
\lim_{j\rightarrow \infty} \|\bL^{(k_j)}\|_F^2- \lim_{m\rightarrow \infty} \|\bL^{(k_m)}\|_F^2=2\tr((\bL^*-\bL^\bullet)^\top\bL^\circ),\quad \forall \bL^\circ\in \mF_\Omega.
\end{align}	
The left side of \eqref{eqn:onecluster2} is a constant no matter what $\bL^\circ\in \mF_\Omega$ is chosen. Hence, inserting $\bL^\circ=\bL^*$ and $\bL^\circ=\bL^\bullet$ into the right side of \eqref{eqn:onecluster2} keeps the equality
\begin{align}
\label{eqn:onecluster3}
2\tr((\bL^*-\bL^\bullet)^\top\bL^*)=2\tr((\bL^*-\bL^\bullet)^\top\bL^\bullet)\quad \Leftrightarrow \quad  \|\bL^*-\bL^\bullet\|_F^2=0.
\end{align}	
Therefore, $\bL^*=\bL^\bullet$ and $\{\bL^{(k)}\}_{k\geqs 1}$ has exactly one cluster point. This actually means that $\{\bL^{(k)}\}_{k\geqs 1}$ converges to only one fixed point in $\mF_\Omega$.

\subsection{$10$ Cases for Each Summand in (\ref{eqn:totaldiff3})}
\label{sup:tencases}

1) $ {\lambda}^{(k)}_i+\beta\sigma_i\geqs 0$, ${\lambda}^{(k)}_i\geqs 0$ and $\sigma_i\geqs\eta$: \eqref{eqn:svdmathinduce3} indicates that ${\gamma}^{(k)}_i-{\lambda}^{(k)}_i=\min\{ \beta(\sigma_i-\eta), 1\}\geqs 0$ and ${\lambda}^{(k+1)}_i\geqs 0$. Then 
\begin{align}
\label{eqn:Fdescent1}
&({\gamma}^{(k)}_i-{\lambda}^{(k)}_i)\theta\sigma_i+\eta(|{\lambda}^{(k)}_i|-|{\lambda}^{(k+1)}_i|)\nonumber\\
=&\min\{ \beta(\sigma_i{-}\eta), 1\}{\cdot}\theta(\sigma_i{-}\eta)\geqs \beta\theta(\sigma_i{-}\eta)^2\geqs 0.
\end{align}

2) $ {\lambda}^{(k)}_i+\beta\sigma_i\geqs 0$, ${\lambda}^{(k)}_i< 0$, ${\lambda}^{(k+1)}_i\geqs 0$ and $\sigma_i\geqs\eta$: \eqref{eqn:svdmathinduce3} indicates that ${\gamma}^{(k)}_i-{\lambda}^{(k)}_i=\min\{ \beta(\sigma_i-\eta), 1\}\geqs 0$. Then 
\begin{align}
\label{eqn:Fdescent2}
&({\gamma}^{(k)}_i-{\lambda}^{(k)}_i)\theta\sigma_i+\eta(|{\lambda}^{(k)}_i|-|{\lambda}^{(k+1)}_i|)\nonumber\\
=& ({\gamma}^{(k)}_i-{\lambda}^{(k)}_i)\theta\sigma_i+\eta(-{\lambda}^{(k)}_i-{\lambda}^{(k+1)}_i)   \nonumber\\
>& ({\gamma}^{(k)}_i-{\lambda}^{(k)}_i)\theta\sigma_i+\eta({\lambda}^{(k)}_i-{\lambda}^{(k+1)}_i)   \nonumber\\
=&\min\{ \beta(\sigma_i{-}\eta), 1\}{\cdot}\theta(\sigma_i{-}\eta)\geqs \beta\theta(\sigma_i{-}\eta)^2\geqs 0.
\end{align}

3) $ {\lambda}^{(k)}_i+\beta\sigma_i\geqs 0$, ${\lambda}^{(k)}_i< 0$, ${\lambda}^{(k+1)}_i< 0$ and $\sigma_i\geqs\eta$: \eqref{eqn:svdmathinduce3} indicates that ${\gamma}^{(k)}_i-{\lambda}^{(k)}_i=\min\{ \beta(\sigma_i-\eta), 1\}\geqs 0$. Then 
\begin{align}
\label{eqn:Fdescent3}
&({\gamma}^{(k)}_i-{\lambda}^{(k)}_i)\theta\sigma_i+\eta(|{\lambda}^{(k)}_i|-|{\lambda}^{(k+1)}_i|)\nonumber\\
=& ({\gamma}^{(k)}_i-{\lambda}^{(k)}_i)\theta\sigma_i+\eta(-{\lambda}^{(k)}_i+{\lambda}^{(k+1)}_i)   \nonumber\\
=&\min\{ \beta(\sigma_i{-}\eta), 1\}\cdot\theta(\sigma_i{+}\eta)\geqs \beta\theta(\sigma_i{-}\eta)^2\geqs 0.
\end{align}

4) $ {\lambda}^{(k)}_i+\beta(\sigma_i-\eta)\geqs 0$, ${\lambda}^{(k+1)}_i\geqs 0$ and $\sigma_i<\eta$: \eqref{eqn:svdmathinduce3} indicates that ${\lambda}^{(k)}_i\geqs 0$ and ${\gamma}^{(k)}_i-{\lambda}^{(k)}_i= \beta(\sigma_i-\eta)$. Then
\begin{align}
\label{eqn:Fdescent4}
&({\gamma}^{(k)}_i-{\lambda}^{(k)}_i)\theta\sigma_i+\eta(|{\lambda}^{(k)}_i|-|{\lambda}^{(k+1)}_i|)\nonumber\\
=&({\gamma}^{(k)}_i-{\lambda}^{(k)}_i)\theta\sigma_i+\eta({\lambda}^{(k)}_i-{\lambda}^{(k+1)}_i)\nonumber\\
=&\theta\beta(\sigma_i-\eta)^2\geqs 0.
\end{align}

5) $ {\lambda}^{(k)}_i+\beta(\sigma_i-\eta)\geqs 0$, ${\lambda}^{(k+1)}_i< 0$ and $\sigma_i<\eta$: \eqref{eqn:svdmathinduce3} indicates that ${\lambda}^{(k)}_i\geqs 0$ and ${\gamma}^{(k)}_i-{\lambda}^{(k)}_i= \beta(\sigma_i-\eta)$. Then
\begin{align}
\label{eqn:Fdescent5}
&({\gamma}^{(k)}_i-{\lambda}^{(k)}_i)\theta\sigma_i+\eta(|{\lambda}^{(k)}_i|-|{\lambda}^{(k+1)}_i|)\nonumber\\
=&2\eta{\lambda}^{(k)}_i + ({\gamma}^{(k)}_i-{\lambda}^{(k)}_i)\theta\sigma_i+\eta(-{\lambda}^{(k)}_i+{\lambda}^{(k+1)}_i)\nonumber\\
=&2\eta{\lambda}^{(k)}_i + \theta\beta(\sigma_i-\eta)(\sigma_i+\eta) \nonumber\\
\geqs&2\eta\beta(\eta-\sigma_i) + \theta\beta(\sigma_i-\eta)(\sigma_i+\eta) \nonumber\\
=&\beta(\eta-\sigma_i)[(2-\theta)\eta-\theta\sigma_i]\nonumber\\
\geqs& \beta(\eta-\sigma_i)(\theta\eta-\theta\sigma_i)=\beta\theta(\sigma_i-\eta)^2\geqs 0.
\end{align}
Since $\eta>\sigma_i$ and $\theta\in (0,1)$, we have $\eta>\frac{\sigma_i}{2/\theta-1}$. Hence $[(2-\theta)\eta-\theta\sigma_i]>0$ and all the inequalities in \eqref{eqn:Fdescent5} hold.

6) $  0\leqs {\lambda}^{(k)}_i+\beta\sigma_i<\beta\eta$, ${\lambda}^{(k)}_i\geqs 0$ and $\sigma_i<\eta$:  \eqref{eqn:svdmathinduce3} indicates that ${\gamma}^{(k)}_i=0$ and ${\lambda}^{(k+1)}_i=(1-\theta){\lambda}^{(k)}_i\geqs 0$. Then
\begin{align}
\label{eqn:Fdescent6}
&({\gamma}^{(k)}_i-{\lambda}^{(k)}_i)\theta\sigma_i+\eta(|{\lambda}^{(k)}_i|-|{\lambda}^{(k+1)}_i|)\nonumber\\
=&({\gamma}^{(k)}_i-{\lambda}^{(k)}_i)\theta\sigma_i+\eta({\lambda}^{(k)}_i-{\lambda}^{(k+1)}_i)\nonumber\\
=&(\eta-\sigma_i)\theta{\lambda}^{(k)}_i \geqs 0.
\end{align}

7) $  0\leqs {\lambda}^{(k)}_i+\beta\sigma_i<\beta\eta$, ${\lambda}^{(k)}_i< 0$ and $\sigma_i<\eta$:  \eqref{eqn:svdmathinduce3} indicates that ${\gamma}^{(k)}_i=0$ and ${\lambda}^{(k+1)}_i=(1-\theta){\lambda}^{(k)}_i< 0$. Then
\begin{align}
\label{eqn:Fdescent7}
&({\gamma}^{(k)}_i-{\lambda}^{(k)}_i)\theta\sigma_i+\eta(|{\lambda}^{(k)}_i|-|{\lambda}^{(k+1)}_i|)\nonumber\\
=&({\gamma}^{(k)}_i-{\lambda}^{(k)}_i)\theta\sigma_i+\eta(-{\lambda}^{(k)}_i+{\lambda}^{(k+1)}_i)\nonumber\\
=&-(\eta+\sigma_i)\theta{\lambda}^{(k)}_i \geqs 0.
\end{align}

8) ${\lambda}^{(k)}_i+\beta\sigma_i<0\leqs {\lambda}^{(k)}_i+\beta(\sigma_i+\eta)$:  \eqref{eqn:svdmathinduce3} indicates that ${\gamma}^{(k)}_i=0$, ${\lambda}^{(k)}_i<0$ and ${\lambda}^{(k+1)}_i=(1-\theta){\lambda}^{(k)}_i< 0$. Then
\begin{align}
\label{eqn:Fdescent8}
&({\gamma}^{(k)}_i-{\lambda}^{(k)}_i)\theta\sigma_i+\eta(|{\lambda}^{(k)}_i|-|{\lambda}^{(k+1)}_i|)\nonumber\\
=&-(\eta+\sigma_i)\theta{\lambda}^{(k)}_i \geqs 0.
\end{align}

9) ${\lambda}^{(k)}_i+\beta(\sigma_i+\eta)<0$ and ${\lambda}^{(k+1)}_i\geqs 0$:  \eqref{eqn:svdmathinduce3} indicates that ${\lambda}^{(k)}_i<0$ and ${\gamma}^{(k)}_i-{\lambda}^{(k)}_i=\beta(\sigma_i+\eta)\geqs 0$. Then
\begin{align}
\label{eqn:Fdescent9}
&({\gamma}^{(k)}_i-{\lambda}^{(k)}_i)\theta\sigma_i+\eta(|{\lambda}^{(k)}_i|-|{\lambda}^{(k+1)}_i|)\nonumber\\
=& ({\gamma}^{(k)}_i-{\lambda}^{(k)}_i)\theta\sigma_i+\eta({\lambda}^{(k)}_i-{\lambda}^{(k+1)}_i)-2\eta{\lambda}^{(k)}_i\nonumber\\
>& \beta(\sigma_i+\eta)(\theta\sigma_i-\theta\eta)+2\eta\beta(\sigma_i+\eta)\nonumber\\
=& [\theta\sigma_i+(2-\theta)\eta]\beta(\sigma_i+\eta)\nonumber\\
\geqs& (\theta\sigma_i+\theta\eta)\beta(\sigma_i+\eta)\geqs\theta\beta(\sigma_i-\eta)^2 \geqs 0.
\end{align}
The second inequality holds since $\theta<1$.

10) ${\lambda}^{(k)}_i+\beta(\sigma_i+\eta)<0$ and ${\lambda}^{(k+1)}_i< 0$:  \eqref{eqn:svdmathinduce3} indicates that ${\lambda}^{(k)}_i<0$ and ${\gamma}^{(k)}_i-{\lambda}^{(k)}_i=\beta(\sigma_i+\eta)\geqs 0$. Then
\begin{align}
\label{eqn:Fdescent10}
&({\gamma}^{(k)}_i-{\lambda}^{(k)}_i)\theta\sigma_i+\eta(|{\lambda}^{(k)}_i|-|{\lambda}^{(k+1)}_i|)\nonumber\\
=& ({\gamma}^{(k)}_i-{\lambda}^{(k)}_i)\theta\sigma_i+\eta(-{\lambda}^{(k)}_i+{\lambda}^{(k+1)}_i)\nonumber\\
=& \theta\beta(\sigma_i+\eta)^2\geqs\theta\beta(\sigma_i-\eta)^2\geqs 0.
\end{align}

\subsection{Experimental Settings and Additional Experimental Results}
\label{sup:addexperiment}

\subsubsection{Experimental Settings}
\label{sup:expset}

First, we introduce the $7$ benchmark data sets to be used in the experiments, shown in Table \ref{tab:data}. The first $3$ data sets are $25$ Fama-French portfolios collected from the Kenneth R. French's Data Library\footnote{\url{http://mba.tuck.dartmouth.edu/pages/faculty/ken.french/data_library.html}} \cite{PPA}. They are constructed by double-sorting U.S. stocks by their size and book-to-market (BM), size (ME) and investment(INV), and size (ME) and operating profits (OP), respectively. These portfolios can be well explained by the Fama-French five-factor model \cite{FF5fact}. Therefore, \cite{PPA} also calculate and exploit the corresponding Fama-French five factors for these $3$ data sets to make full use of their methods. Every $20$ consecutive daily returns are accumulated as $1$ monthly return (public holidays considered). Then the return for the $\tau$-th month is used as the signal $\bS_\tau$ for the $(\tau+1)$-th month. Such a lagged monthly return is a strong positive predictor of subsequent monthly returns based on a wide range of empirical investigations \cite{moemp2,moemp,PPA}. Then the prediction matrix $\hat{\bPi}_t$ is estimated by \eqref{eqn:piestimate} with the window size $T=120$. 

To assess the extendability and robustness of the trading methods, we also employ $4$ more data sets with diverse profiles in the literature. MSCI \cite{olpsjmlr} contains $24$ indices of $24$ countries around the world that constitute the MSCI World Index\footnote{\url{http://www.mscibarra.com}}. EURO Stoxx50 \cite{pubdata2} contains $49$ constituent stocks from the EURO Stoxx50 Index. FOF \cite{SPOLC} contains $24$ mutual funds in China. FTSE100 \cite{pubdata2} contains weekly data of $83$ constituent stocks from the Financial Times Stock Exchange 100 Index. For these data sets, the corresponding Fama-French five factors are unavailable. Hence we follow \cite{olpsjmlr,SSPO,SPOLC} to use the uniform-buy-and-hold strategy as the market baseline. Besides, MSCI, Stoxx50, and FOF have insufficient observations for a window size of $T=120$ with $20$ daily returns for each trading time. Hence we accumulate $5$ daily returns to form a weekly return (normally there are $5$ trading days for $1$ week). This scheme can assess the extendability and robustness of a trading strategy to weekly trading. As for FTSE100, we accumulate $4$ weekly returns to form a monthly return and conduct monthly trading.

\begin{table*}[!htb]
	\centering
\scalebox{0.9}{	\begin{tabular}{ccccccc}
		\hline
		Dataset & Region  & Asset Type &  Time  & Periods  & Frequency &  Assets     \\
		\hline
		FF25BM & US & Portfolio  & $1/Jul./1963\sim 31/Dec./2019$ &$14223$ & Daily  & $25$ \\
		FF25MEINV  & US & Portfolio  & $1/Jul./1963\sim 31/Dec./2019$ &$14223$ & Daily  & $25$ \\
		FF25MEOP  & US & Portfolio  & $1/Jul./1963\sim 31/Dec./2019$ &$14223$ & Daily  & $25$ \\
\hline
		MSCI & Global & Index  & $1/Apr./2006\sim 31/Mar./2010$ &$1043$ & Daily  & $24$ \\
		EURO Stoxx50 & EU & Stock & $22/May/2001\sim 11/Apr./2016$ & $3885$ & Daily & $49$ \\
		FOF& CN & Fund & $4/Jan./2013\sim 29/Dec./2017$ & $1215$ & Daily & $24$\\
		FTSE100 & UK & Stock  & $11/Jul./2002\sim 11/Apr./2016$ &$717$ & Weekly  & $83$ \\
		\hline
	\end{tabular}}
	\caption{Profiles of $7$ benchmark data sets.}
	\label{tab:data}
\end{table*}

Five trading strategies are taken into comparisons: 1) The surrogate model semi-definite conic programming (SDCP) defined in \eqref{eqn:LTPSSsurro}. Both absolute solution tolerance and constraint tolerance are set as $1e-6$. Note that the equality constraint $\|L\|_2 = 1$ cannot be achieved due to the mechanism of interior-point method, thus its investing performance is affected. 2) The simple factor $\hat{\bL}_{SF}$ in \eqref{eqn:simfact}. 3) The closed-form solution $\hat{\bL}_{CF}$ in \eqref{eqn:ltpcf}. It is actually formed by including all the PPs of the prediction matrix in \eqref{eqn:ltppp}. 4) $\hat{\bL}_{PP}:=\sum_{n=1}^3 \bu_n \bv_n^\top$ in \eqref{eqn:ltppp} (the first $3$ PPs), which is the best LTP-PP indicated and used by \cite{PPA}. 5) $m$-sparse Sharpe ratio maximization with the proximal gradient algorithm (mSSRM-PGA, \cite{MSSRM}), which is not a PCA based method and belongs to the broader field of portfolio optimization. Its default parameter settings are used in the experiments.

For the proposed LTPSS, we empirically set $\bL^{(0)}:=\hat{\bL}_{PP}$, $\beta=100$, $\eta=0.001$, and $\theta=0.9999$, which are consistent with the convergence criterion of the proposed algorithm. A larger $\theta$ generally leads to a faster convergence, but we could not directly set $\theta=1$ because this violates the condition $0<\theta<1$ of KM theorem (Theorem \ref{thm:KMtheorem}), which might lead to failure in convergence. Hence we set $\theta=0.9999$, which is close enough to $1$. We further examine the sensitivity of LTPSS to the regularization parameter $\eta$ within a $20\%$ range on the FF25BM data set. Sharpe ratio (SR) results in Table \ref{tab:SRapp} show that LTPSS gets stable when $\eta\in [0.001,0.0012]$.

\begin{table*}[!htb]
	\centering
	\begin{tabular}{cccccc}
		\hline
$\eta$   & 0.0008    & 0.0009   & 0.001   & 0.0011  &  0.0012	  \\
\hline
SR      &   0.1987    &  0.2013  & 0.2049   &  0.2041 & 0.2045   \\
\hline
	\end{tabular}
	\caption{Sharpe ratio (SR) of LTPSS w.r.t. the regularization parameter $\eta$ on FF25BM.}
	\label{tab:SRapp}
\end{table*}

The $7$ benchmark data sets are collected from major financial markets in the world and could be represented by the Fama-French factors to certain degrees, which might favor the above linear trading position methods. When it comes to other market conditions that cannot be represented by the Fama-French factors, such linear trading position methods might need new effective factors to keep up the change.

\subsubsection{Unstable Performance for Principal Portfolios}
\label{sup:unstablepp}
To look into the unstable performance for PPs, we calculate the mean returns of the first $10$ PPs, shown in Table \ref{tab:MRmore}. It indicates that the mean returns of the first $3$ PPs follow the same descending order as that of the PPs themselves on the $3$ FF25 data sets, which well fits the theory of \cite{PPA}. However, this pattern is broken in the $4$-th to $10$-th PPs on the $3$ FF25 data sets, as well as in all the $10$ PPs on the rest $4$ data sets. For example, there are even negative mean returns of the first $3$ PPs on Stoxx50, FOF and FTSE100. Moreover, there are larger mean returns of the PPs with higher orders than those with lower orders on the $3$ FF25 data sets. These results indicate that the investing performance is unstable while directly relying on several PPs.

\begin{table*}[!htb]
	\centering
\scalebox{0.9}{	\begin{tabular}{cccccccc}
		\hline
 Order	&	FF25BM & FF25MEINV   & FF25MEOP &  MSCI  & Stoxx50  & FOF &  FTSE100     \\
\hline
	1&	 0.009372 & 0.008423 & 0.006274 & 0.005083 & -0.000697 & 0.001075 & -0.001133 \\
	2	& 0.002235 & 0.003733 & 0.004856 & 0.001988 & 0.000289 & -0.002195 & 0.001540 \\
	3	& 0.001336 & 0.001135 & 0.001506 & 0.002758 & 0.000000 & -0.000692 & 0.000864 \\
	4	& -0.000009 & 0.000660 & 0.000490 & -0.000200 & -0.000352 & 0.000544 & 0.007757 \\
	5	& -0.000039 & 0.000034 & 0.000468 & -0.000186 & 0.000922 & -0.000047 & 0.004428 \\
	6	& 0.000143 & 0.000351 & -0.000035 & 0.000972 & -0.000415 & 0.001004 & -0.004271 \\
	7	& 0.000133 & -0.000062 & -0.000056 & 0.001688 & -0.000206 & -0.000337 & 0.002073 \\
	8	& 0.000054 & 0.000208 & 0.000019 & -0.000460 & -0.000001 & 0.000543 & 0.003393 \\
	9	& 0.000125 & -0.000032 & -0.000167 & 0.001474 & 0.000105 & 0.000510 & -0.003453 \\
	10	& -0.000083 & -0.000022 & -0.000068 & 0.000147 & -0.000648 & 0.000087 & 0.002064 \\
\hline
	\end{tabular}}
	\caption{Mean returns of the first $10$ principal portfolios on $7$ benchmark data sets.}
	\label{tab:MRmore}
\end{table*}

\subsubsection{Factor Exposure}
\label{sec:factexpo}
Similar to \cite{PPA}, we investigate the financial interpretation of different trading strategies by examining their exposure (coefficients) on different factors in the FF 5-factor model \eqref{eqn:ff5factor}. Note that most of the metrics here (except $\alpha$) do not evaluate investing performance, but just represent different investing logics or habits. First of all, LTP-CF has the lowest $R^2$, while LTP-PP and LTPSS have similar higher $R^2$s. It indicates that LTP-CF is the least interpretatable by the FF 5-factor model among the three, probably because its number of PPs is much larger than the number of FF factors (25 vs. 5). Similarly, LTPSS has a slightly lower $R^2$ than LTP-PP, because the former additionally includes sparse spectrum in its position. 

As for specific factor exposure, all the three strategies have statistically significant exposure on $\alpha$ and ${r}_{SF}$, since they are all factor-based trading strategies and significantly outperform the simple factor LTP-SF. However, LTP-CF and LTPSS have significant exposure on RMW, while LTP-PP has significant exposure on HML instead. It indicates that LTP-CF and LTPSS take stronger negative correlation with the robust-minus-weak factor, while LTP-PP takes stronger positive correlation with the high-minus-low factor. It reveals a small difference among the three methods in the investing focus.

\begin{table}[htbp]
	\centering
	\begin{tabular}{ccccccc}
	\toprule
		 &   \multicolumn{2}{c}{LTP-CF}  &   \multicolumn{2}{c}{LTP-PP}     &  \multicolumn{2}{c}{LTPSS}  \\
	\hline
Factor	 & Coefficient & p-val. & Coefficient & p-val. & Coefficient & p-val.     \\
	\hline
$\alpha$	 &  0.0063	 &  0.0004 &   0.0054 & 0.0008 & 0.0063 & 0.0002 \\
${r}_{SF}$	 &  0.5863	 &  0	 &   0.5812 & 0 & 0.5823 & 0 \\
MKT	 &  0.0076	 &  0.8611	 &   0.0427 & 0.2779 & 0.0047 & 0.9091 \\
SMB	 &  0.0905	 &  0.1683	 &   0.0314 & 0.5981 & 0.0964 & 0.1215 \\
HML	 & 	0.1262   &  0.1182	 &    0.2226 & 0.0024 & 0.1257 & 0.1006 \\
RMW	 &  -0.1863	 &  0.0269	 &   -0.1244 & 0.1028 & -0.1747 & 0.0285 \\
CMA	 &  -0.0187	 &  0.8802	 &   -0.0332 & 0.7683 & -0.0292 & 0.8045 \\
$R^2$ & 	0.6323 &    0   &  0.6677 & 0 & 0.6537 & 0 \\
	\bottomrule
	\end{tabular}
	\caption{Factor exposure of different trading strategies on FF25BM.}
\label{tb:factexpo}
\end{table}

\subsubsection{Results for Different Numbers of Principal Portfolios}
\label{sup:differentpps}
We have conducted experimental results of LTP-PP and LTPSS on mean returns, Sharpe ratios, information ratios and $\alpha$ factors for different numbers of principal portfolios, shown in Figures \ref{fig:MRmore}$\sim$\ref{fig:alphamore}. The number of principal portfolios means the parameter $l$ in \eqref{eqn:ltppp} and the corresponding $\hat{\bL}_{PP}$ used as the initial $\bL^{(0)}$ for the proposed algorithm of LTPSS. We examine $l\in [1,5]$ where $l=3$ is the recommended setting by \cite{PPA}. The results show that LTP-PP has unstable and sometimes bad performance, while LTPSS achieves robust and good performance on all the metrics and all the data sets. It indicates that LTPSS is an effective and reliable method for key spectral feature extraction in signal-based trading.

	\begin{figure*}[!htb]
		\centering
		\subfloat[FF25BM]{
			\centering
			\includegraphics[width=0.33\textwidth]{./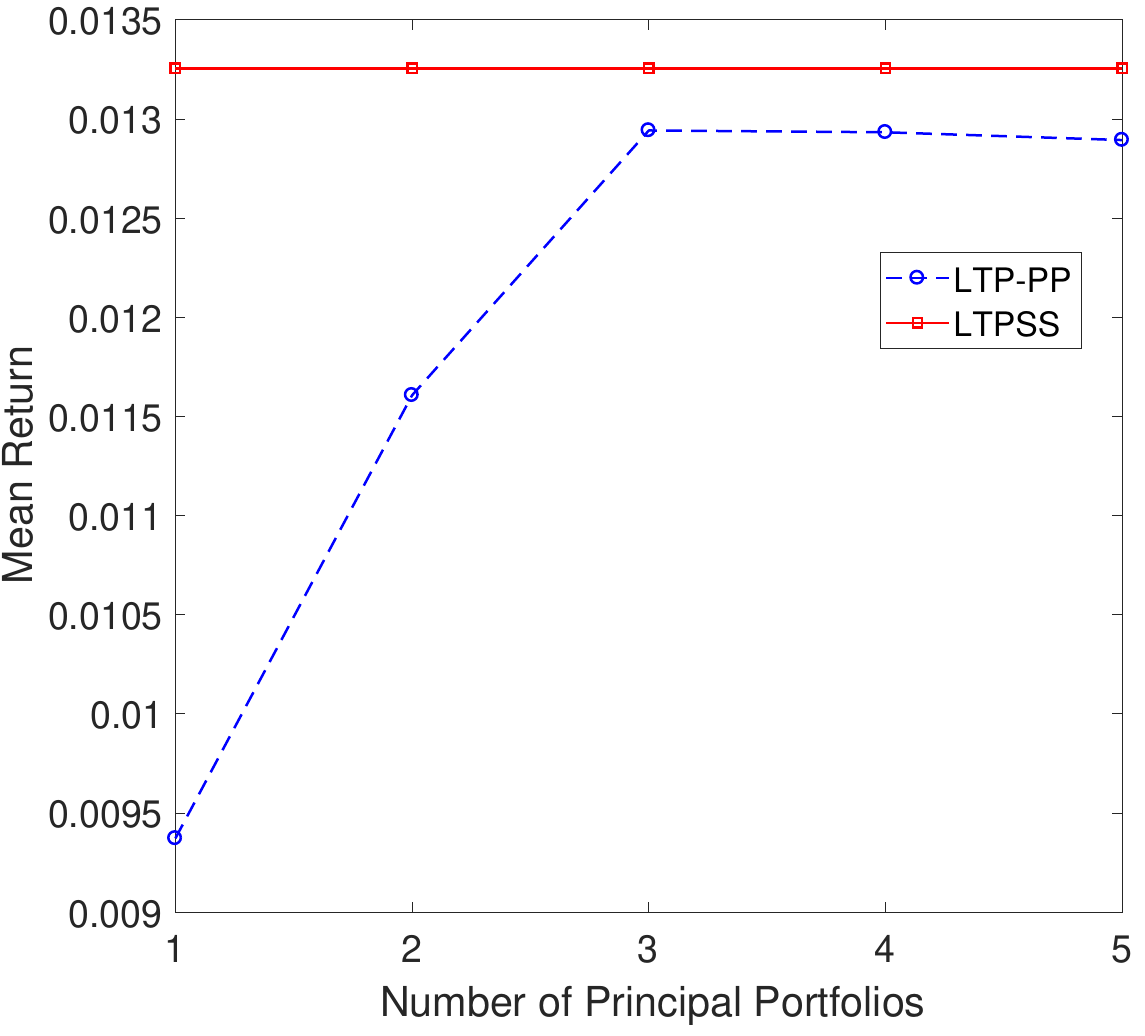}}
		\subfloat[FF25MEINV]{
			\centering
			\includegraphics[width=0.33\textwidth]{./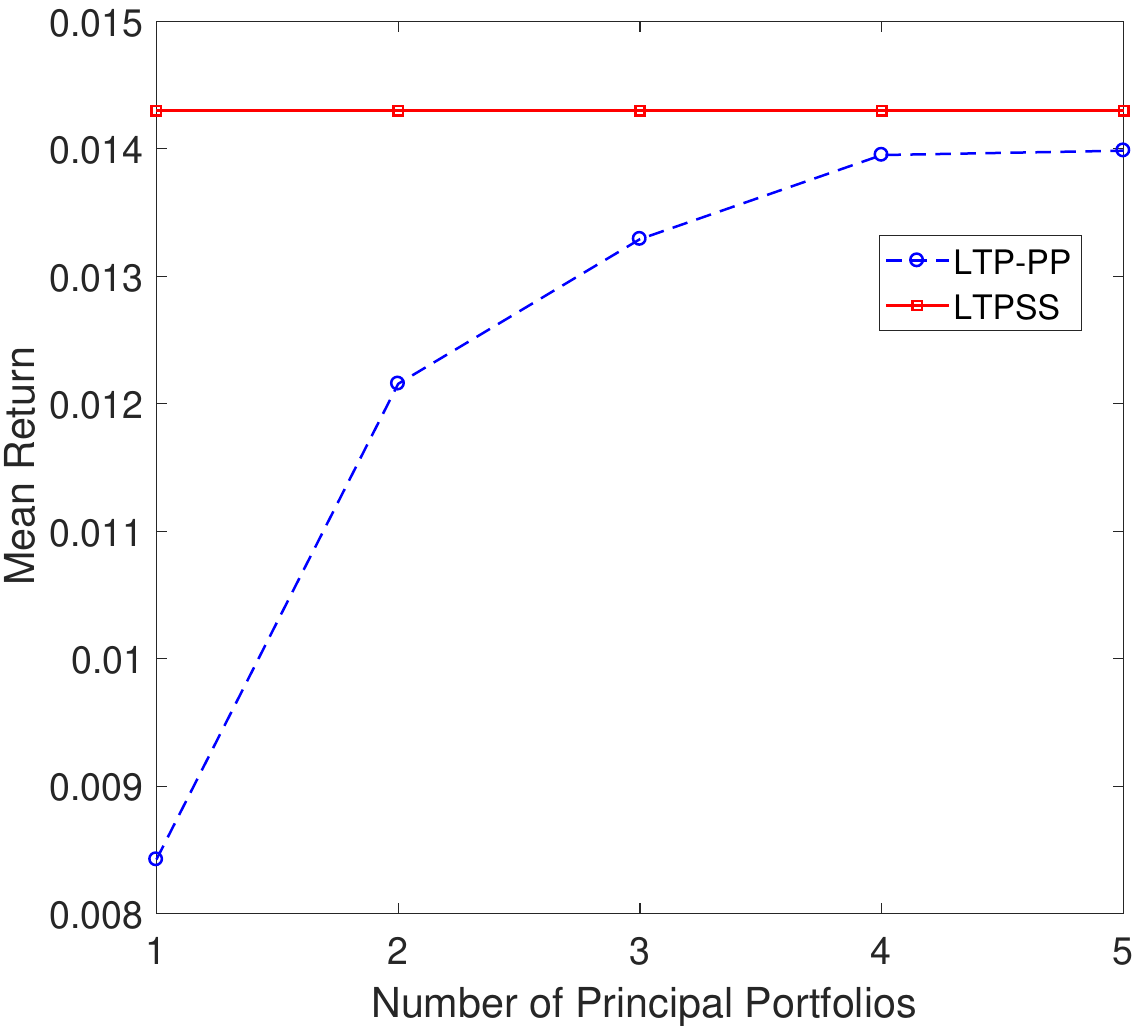}}
		\subfloat[FF25MEOP]{
			\centering
			\includegraphics[width=0.33\textwidth]{./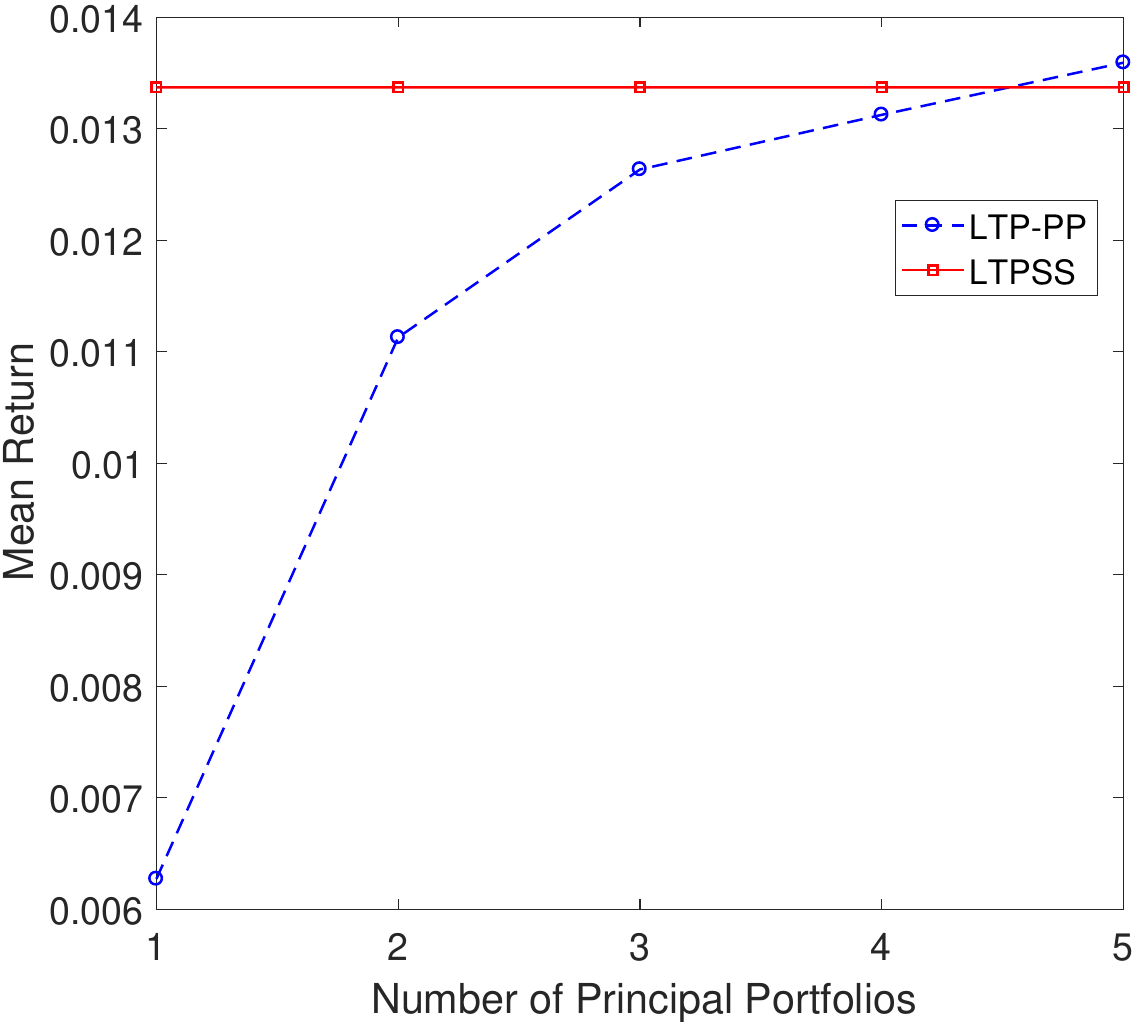}}\\
		\subfloat[MSCI]{
			\centering
			\includegraphics[width=0.33\textwidth]{./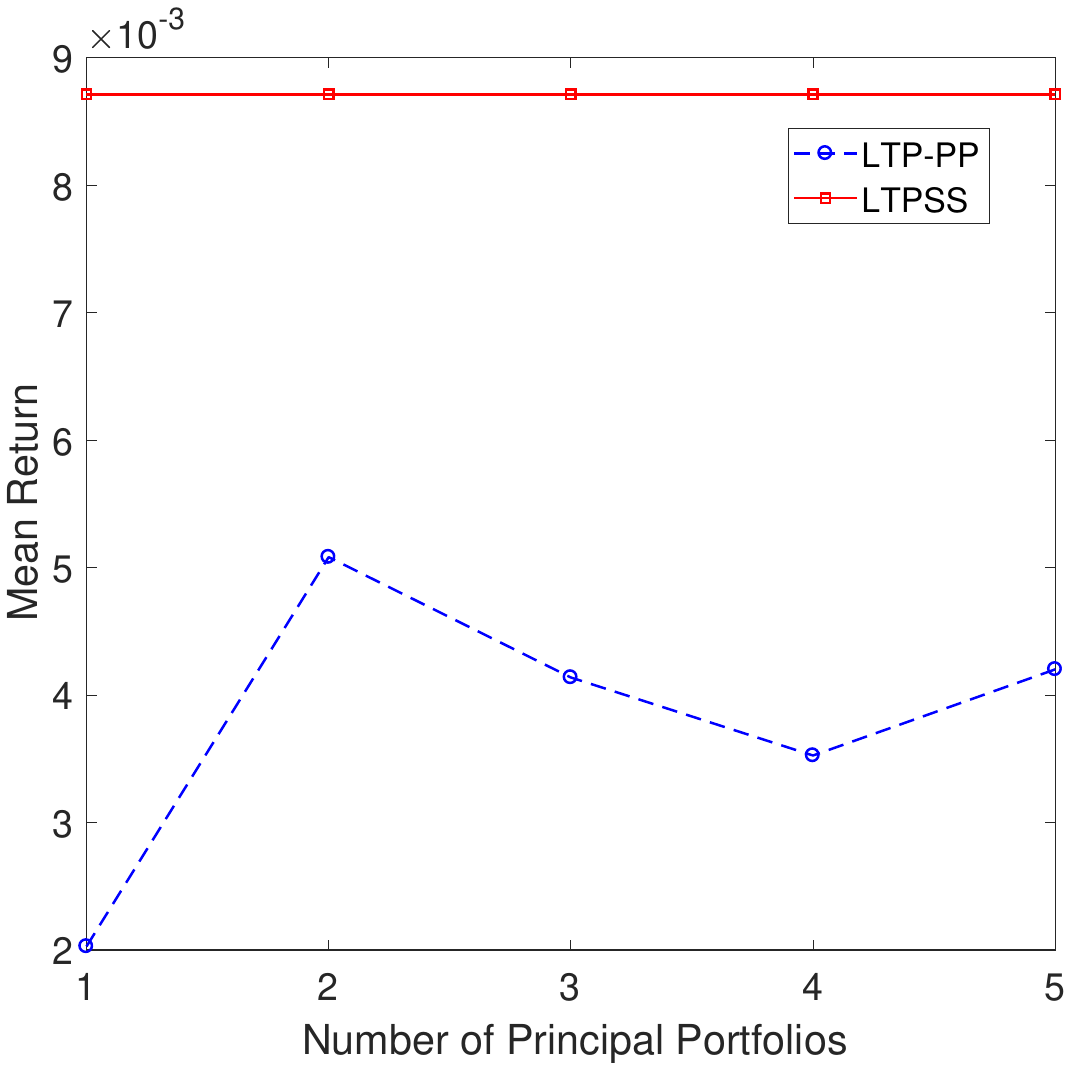}}
		\subfloat[Stoxx50]{
			\centering
			\includegraphics[width=0.33\textwidth]{./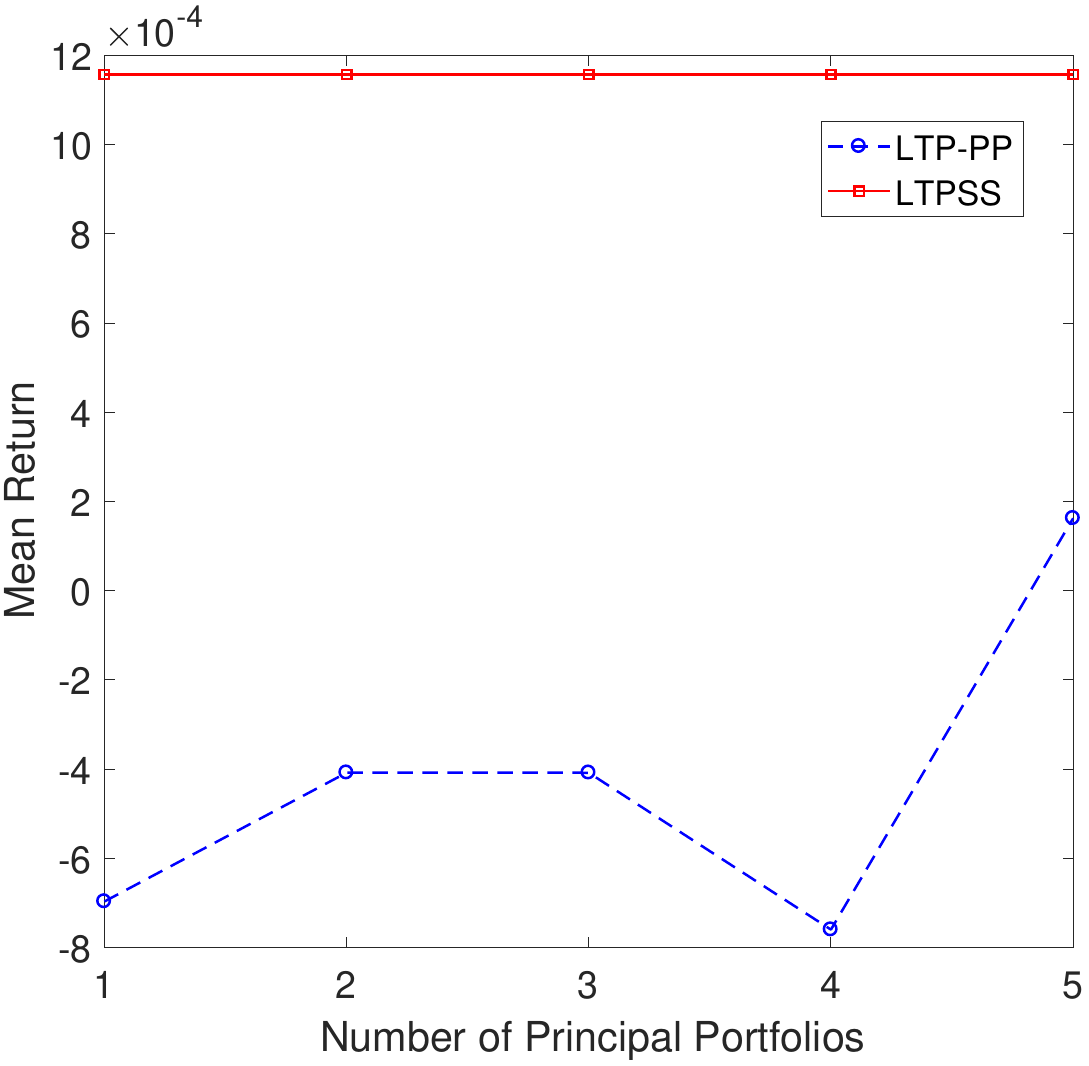}}
		\subfloat[FOF]{
			\centering
			\includegraphics[width=0.33\textwidth]{./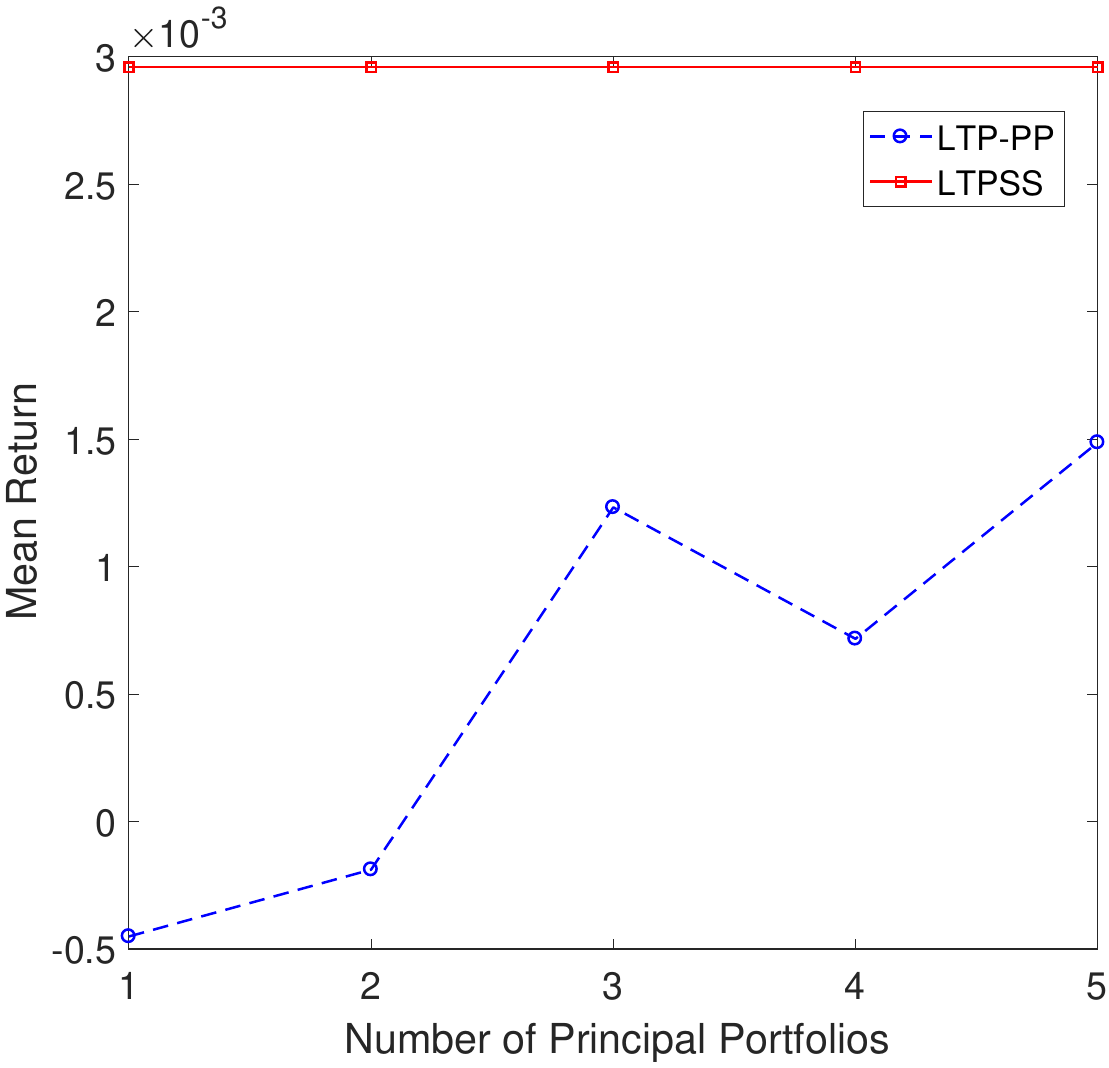}}\\
		\subfloat[FTSE100]{
			\centering
			\includegraphics[width=0.33\textwidth]{./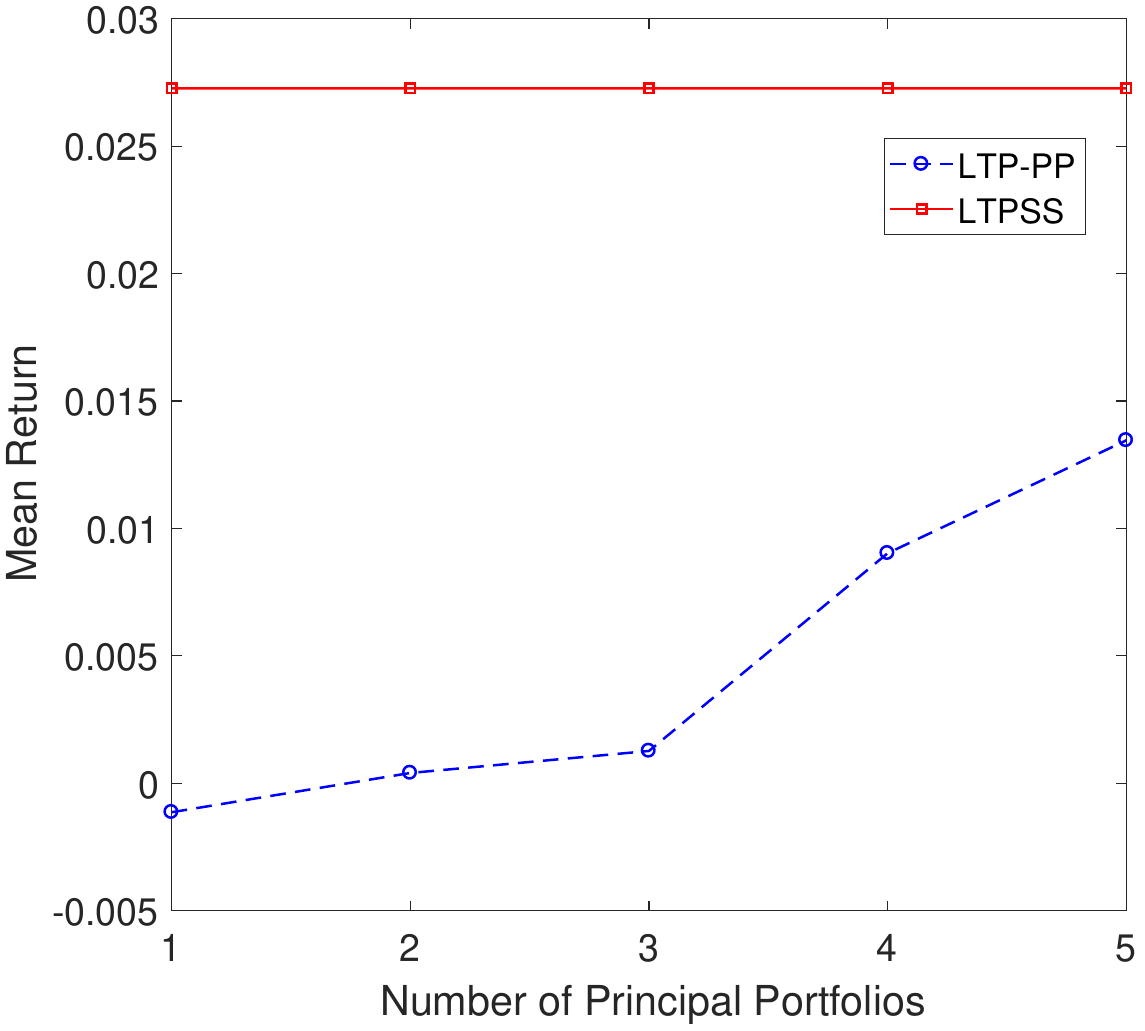}}	
		\caption{Mean returns for different numbers of principal portfolios on $7$ benchmark data sets.}
		\label{fig:MRmore}
	\end{figure*}

	\begin{figure*}[!htb]
		\centering
		\subfloat[FF25BM]{
			\centering
			\includegraphics[width=0.33\textwidth]{./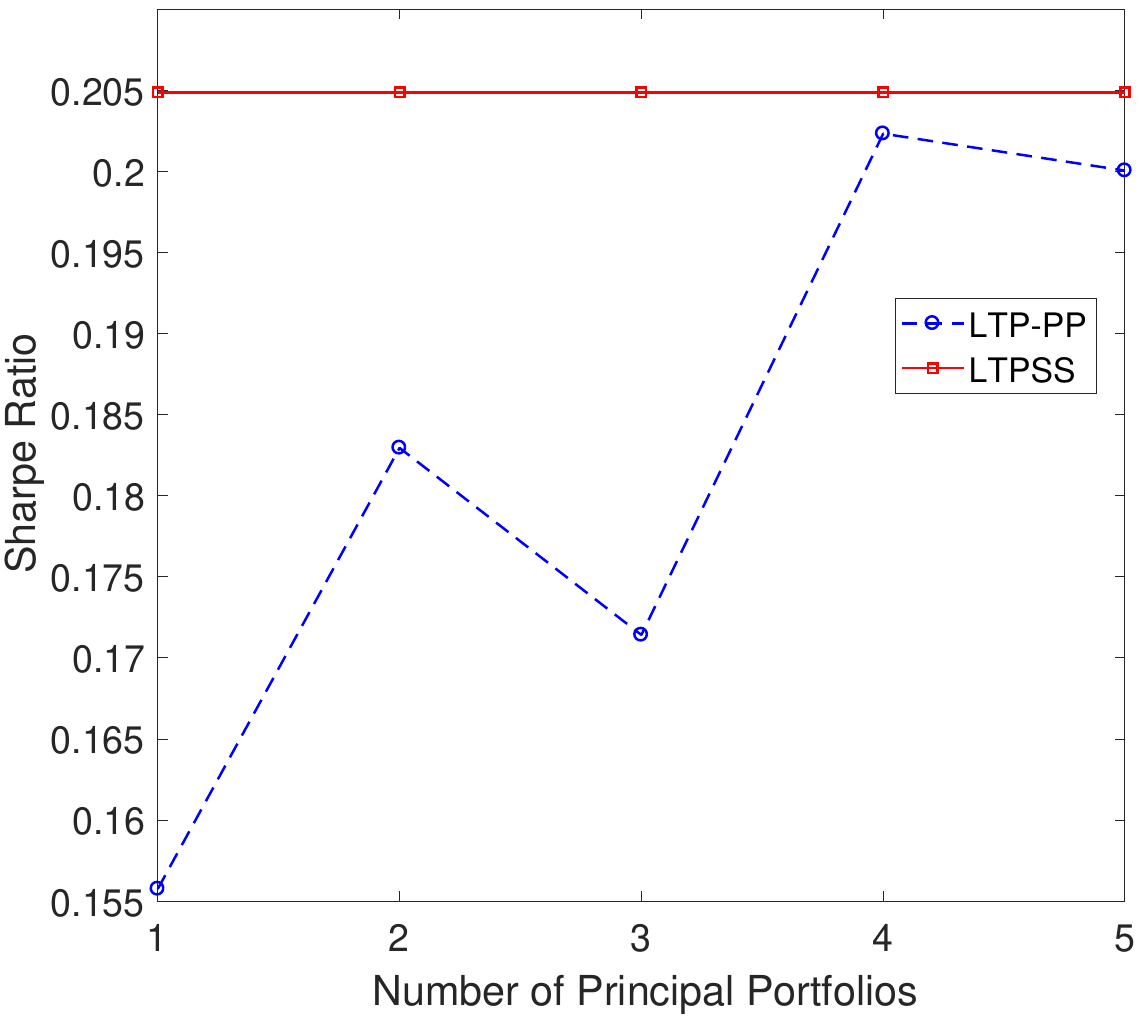}}
		\subfloat[FF25MEINV]{
			\centering
			\includegraphics[width=0.33\textwidth]{./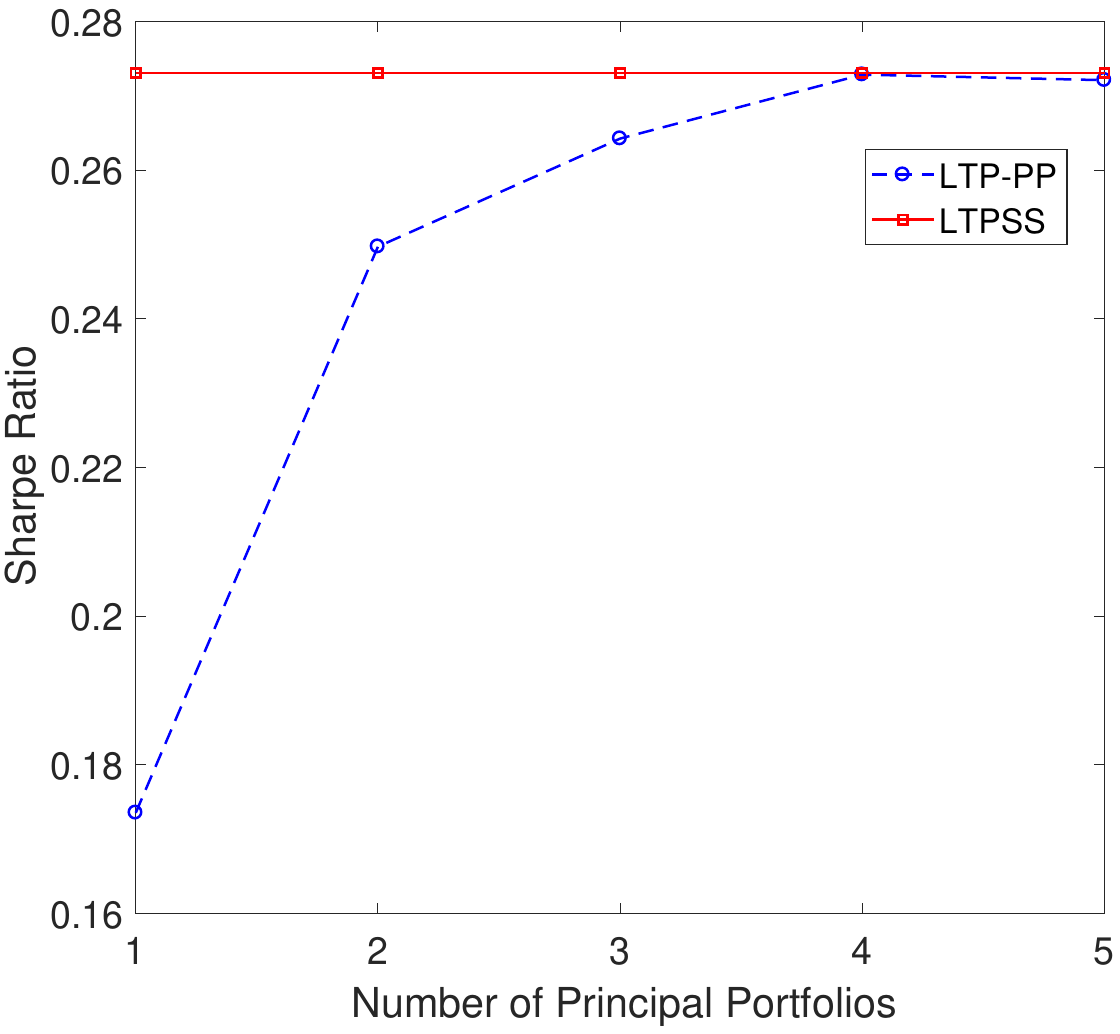}}
		\subfloat[FF25MEOP]{
			\centering
			\includegraphics[width=0.33\textwidth]{./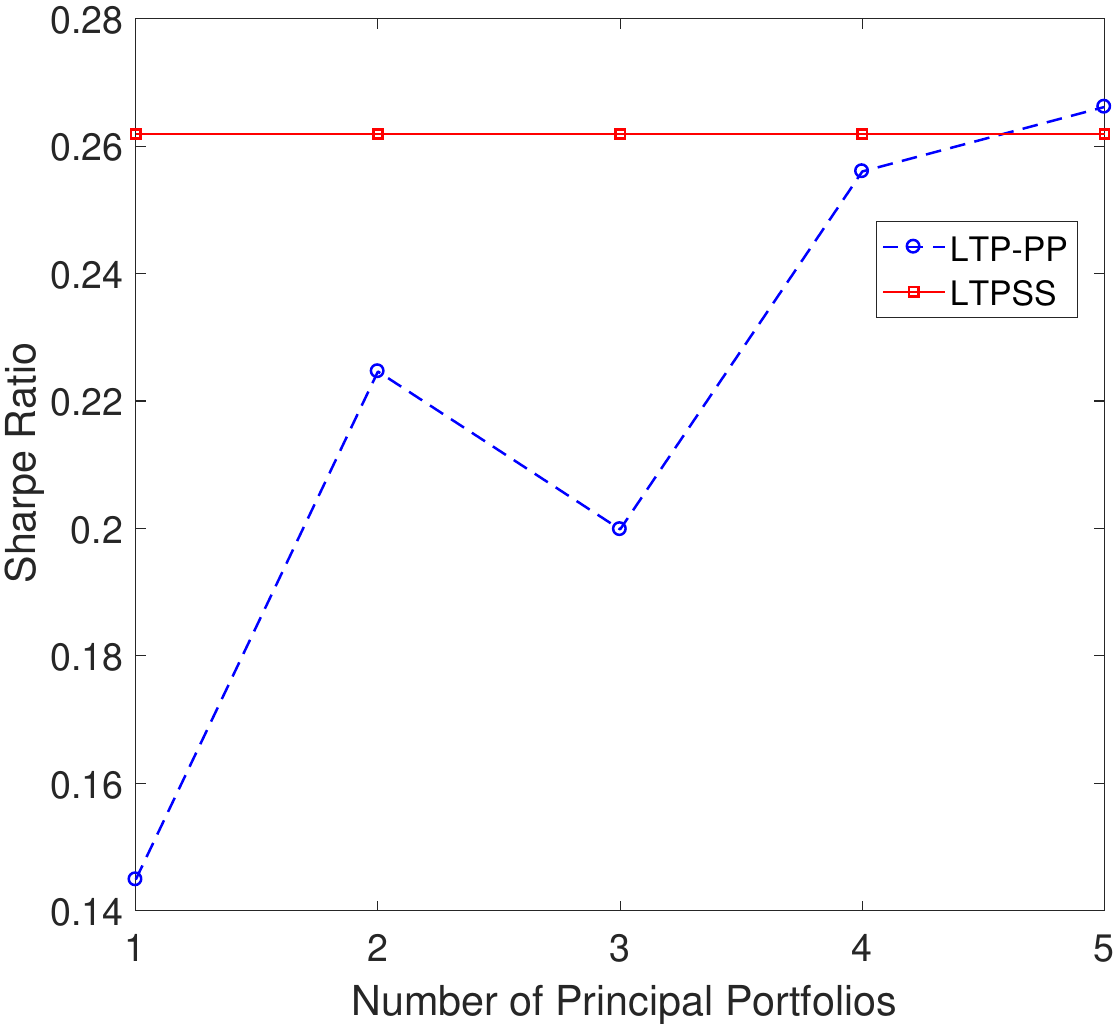}}\\
		\subfloat[MSCI]{
			\centering
			\includegraphics[width=0.33\textwidth]{./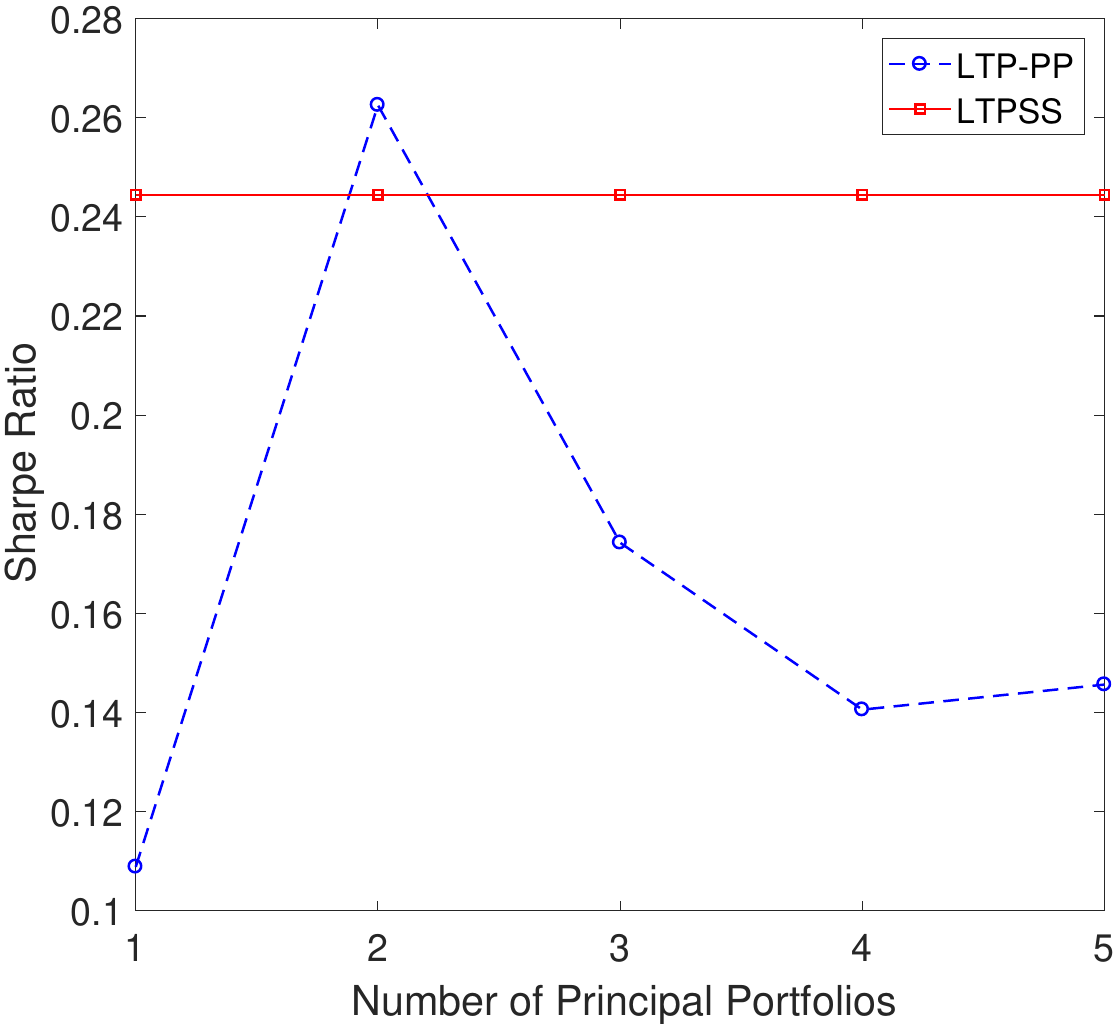}}
		\subfloat[Stoxx50]{
			\centering
			\includegraphics[width=0.33\textwidth]{./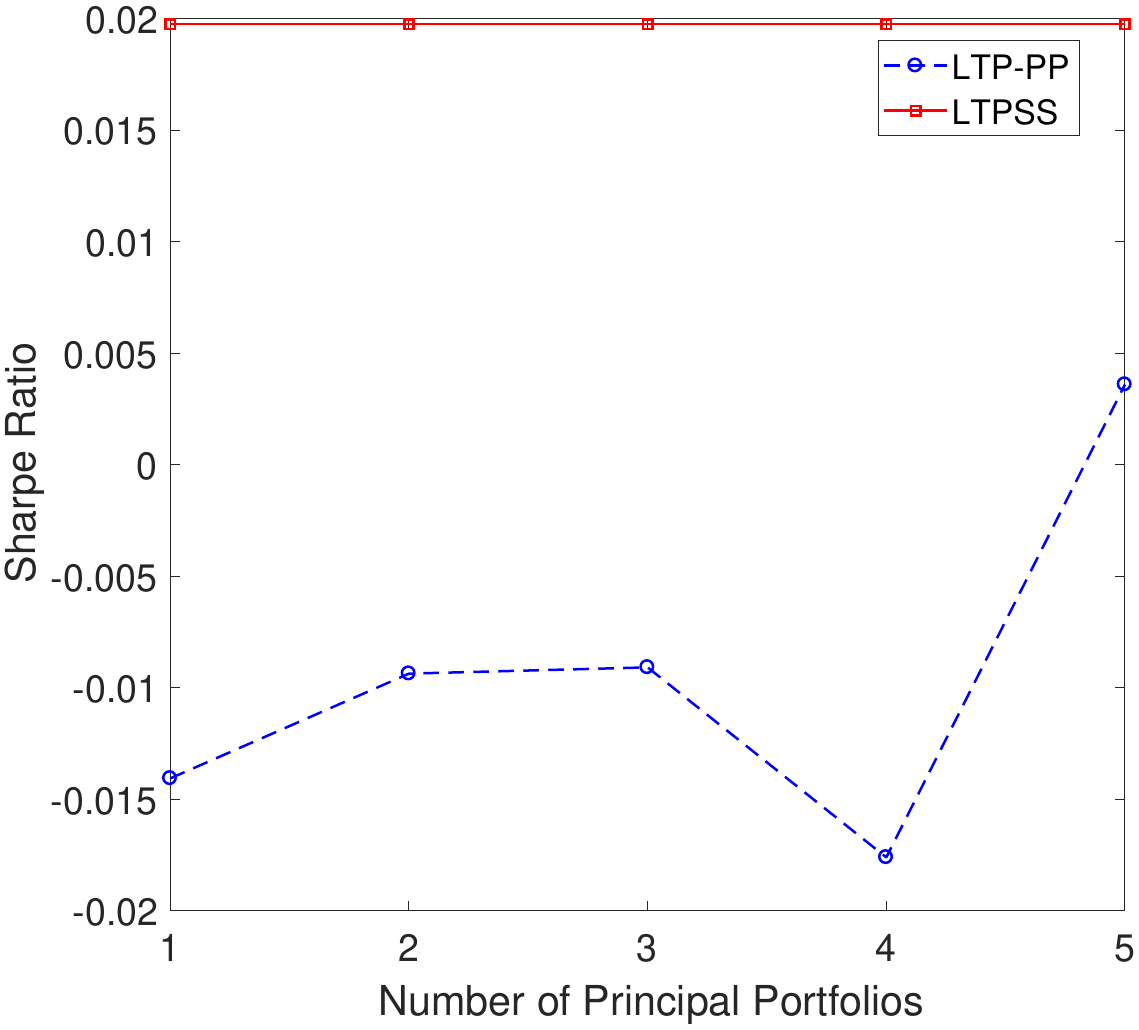}}
		\subfloat[FOF]{
			\centering
			\includegraphics[width=0.33\textwidth]{./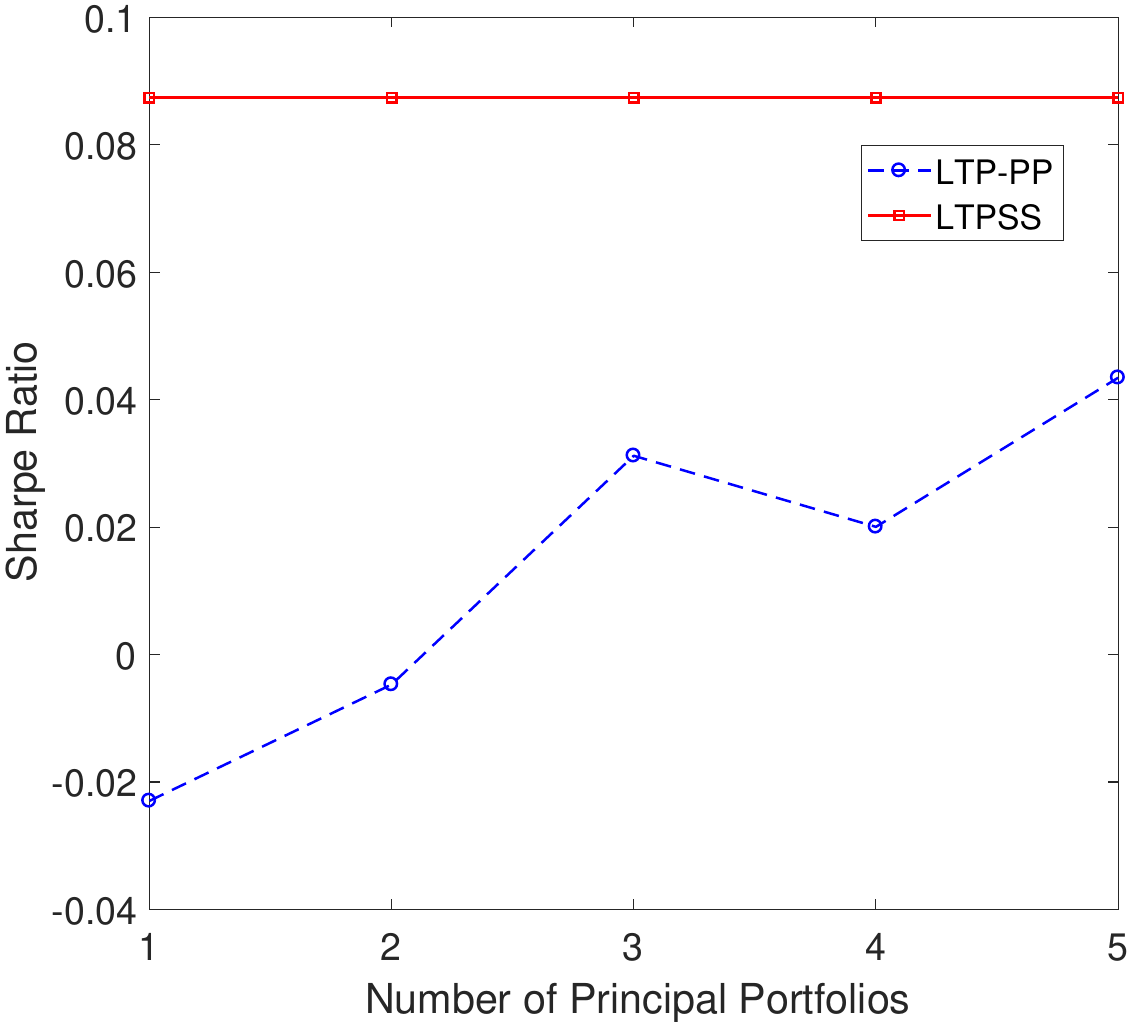}}\\
		\subfloat[FTSE100]{
			\centering
			\includegraphics[width=0.33\textwidth]{./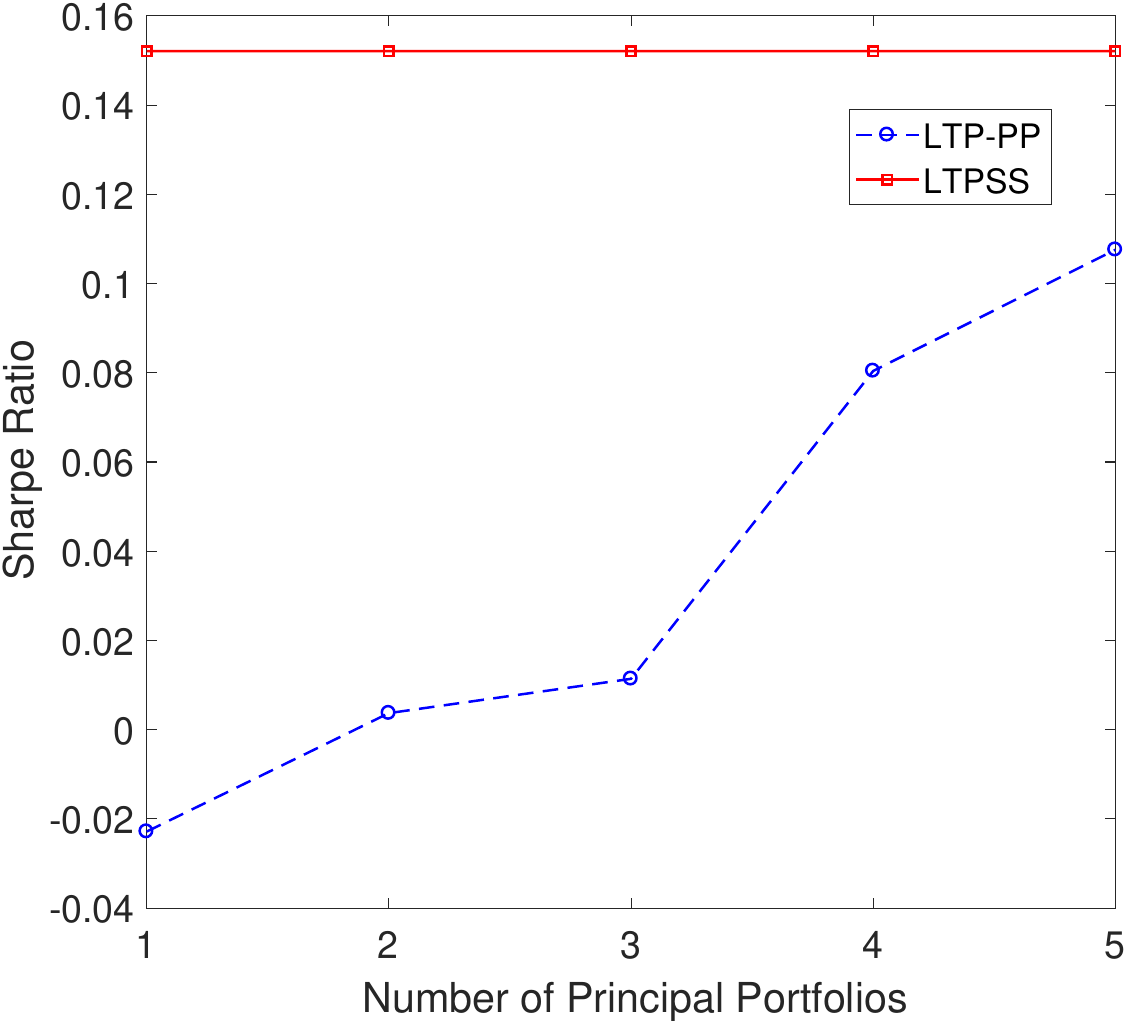}}	
		\caption{Sharpe ratios for different numbers of principal portfolios on $7$ benchmark data sets.}
		\label{fig:SRmore}
	\end{figure*}

	\begin{figure*}[!htb]
		\centering
		\subfloat[FF25BM]{
			\centering
			\includegraphics[width=0.33\textwidth]{./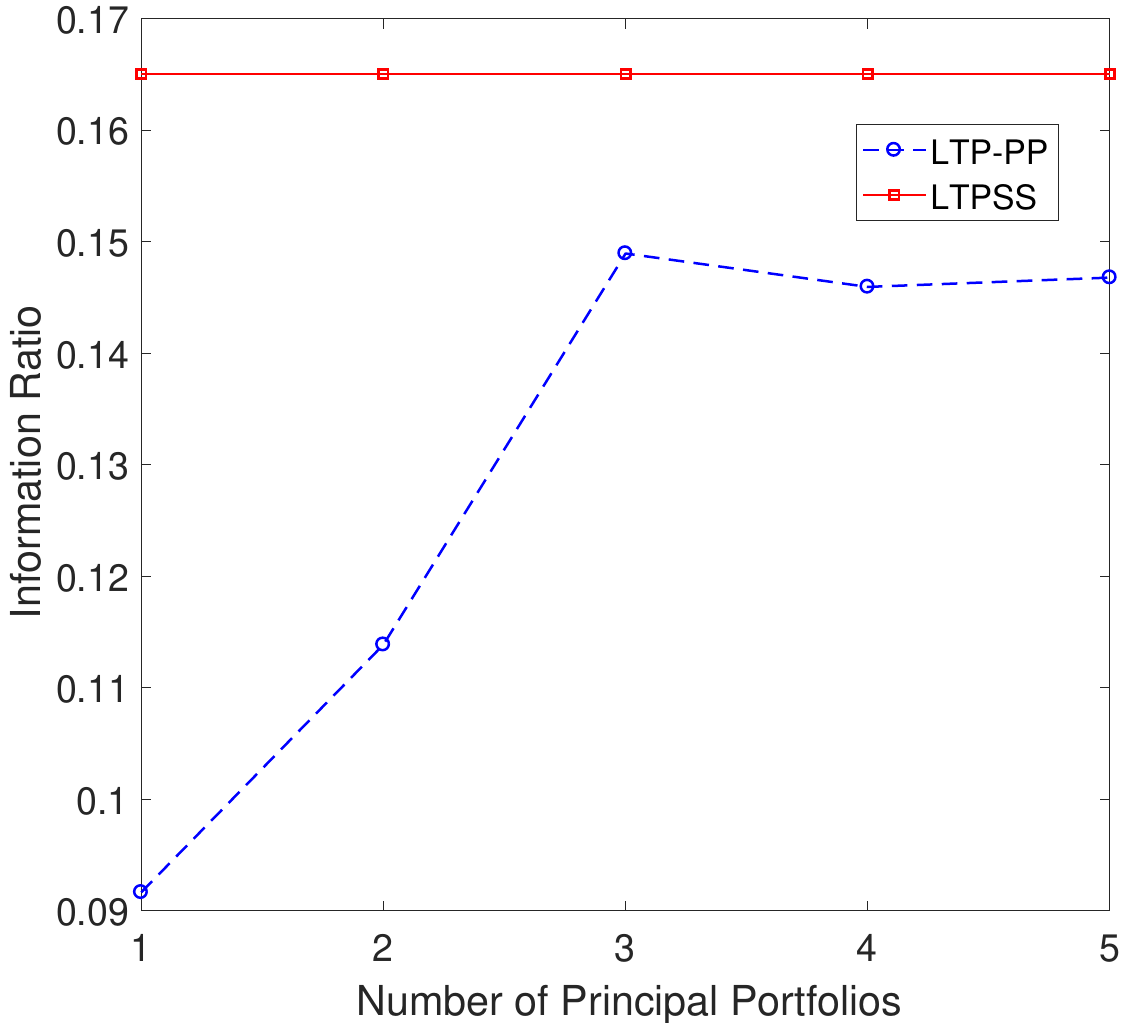}}
		\subfloat[FF25MEINV]{
			\centering
			\includegraphics[width=0.33\textwidth]{./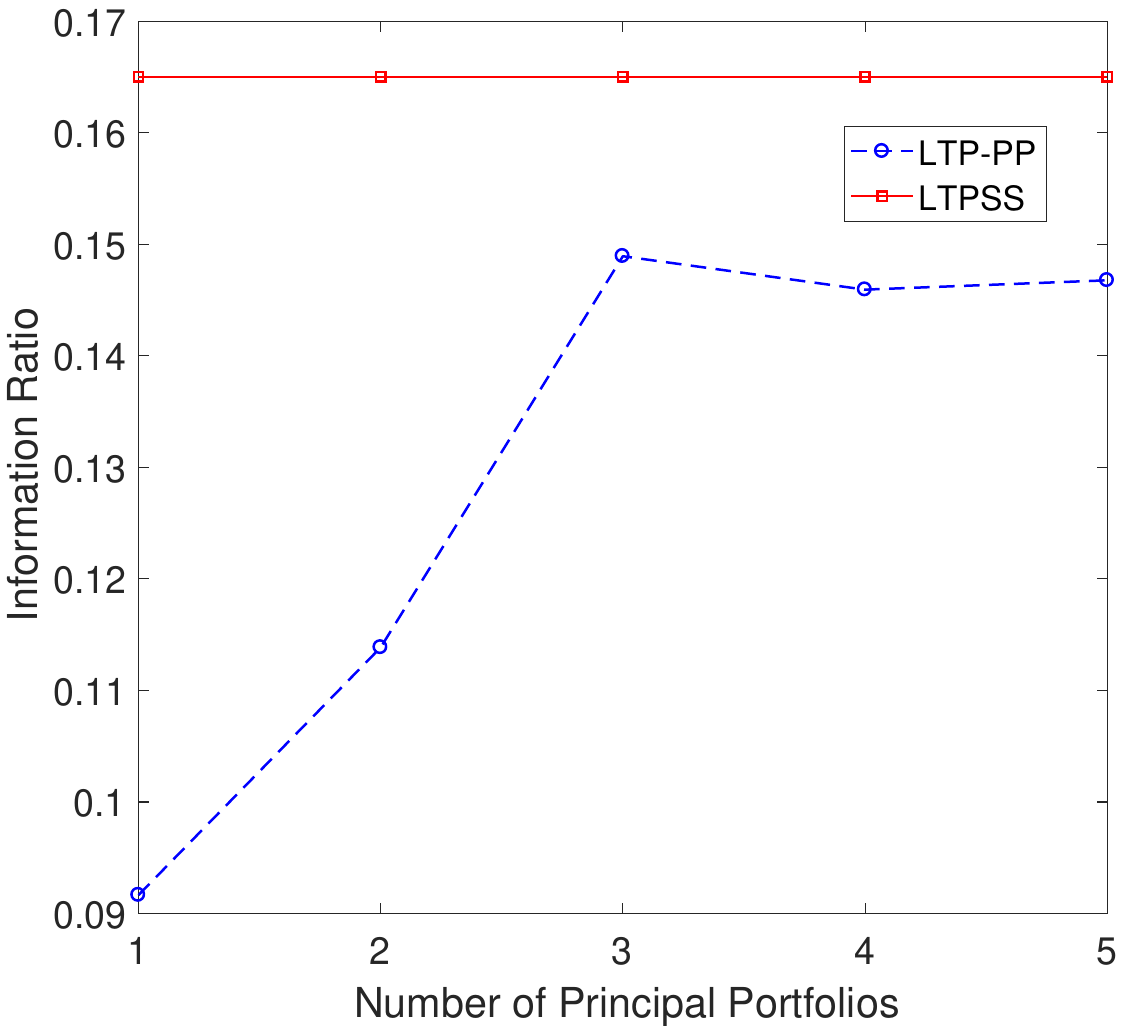}}
		\subfloat[FF25MEOP]{
			\centering
			\includegraphics[width=0.33\textwidth]{./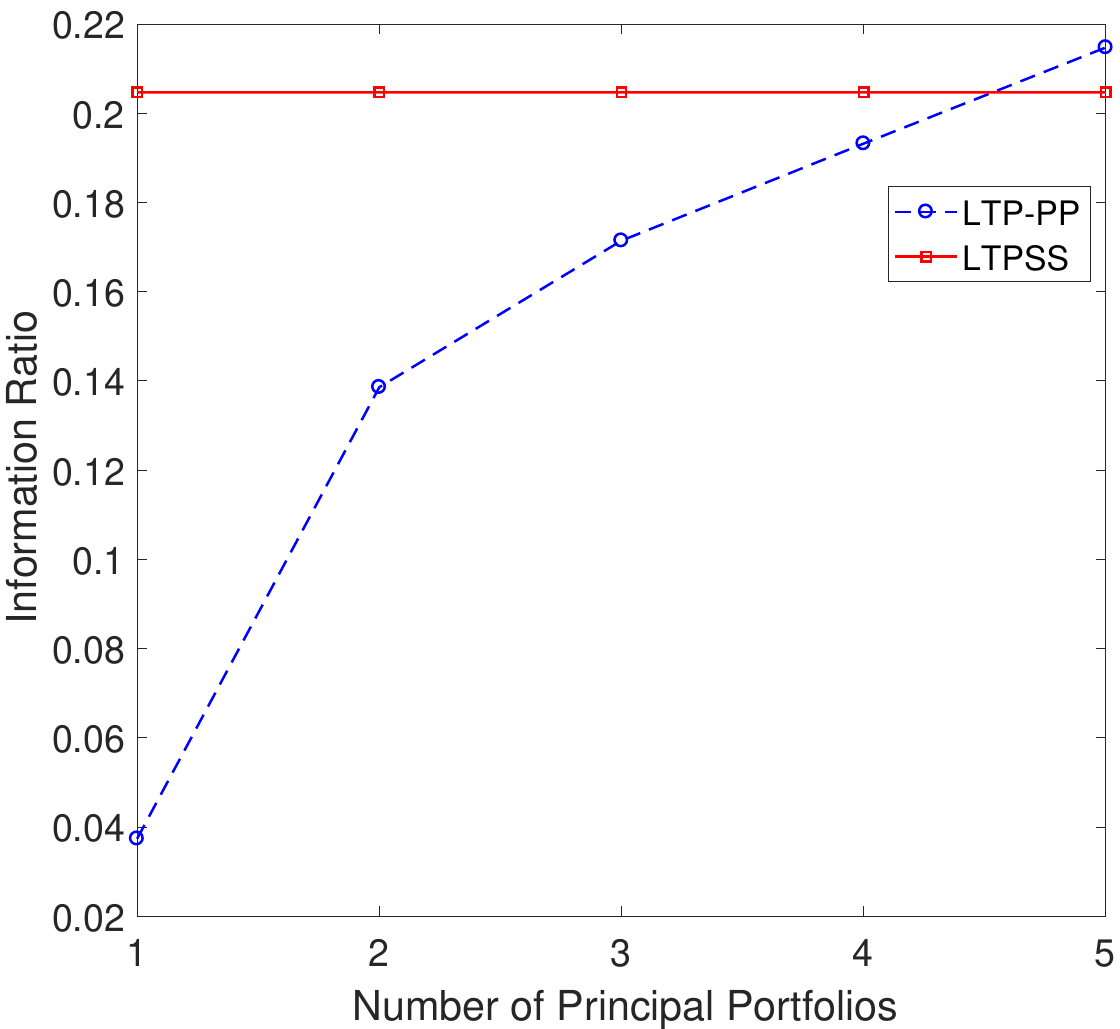}}\\
		\subfloat[MSCI]{
			\centering
			\includegraphics[width=0.33\textwidth]{./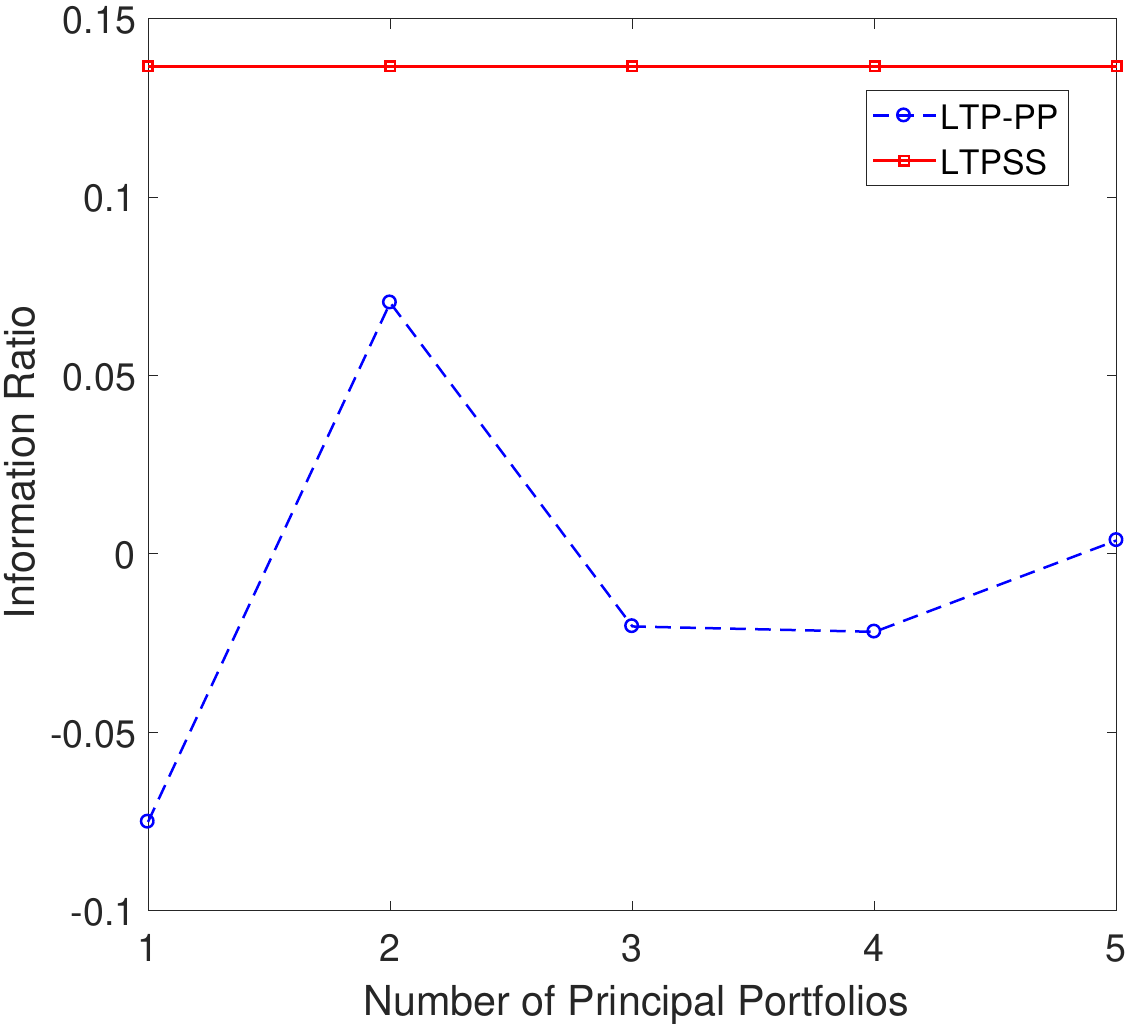}}
		\subfloat[Stoxx50]{
			\centering
			\includegraphics[width=0.33\textwidth]{./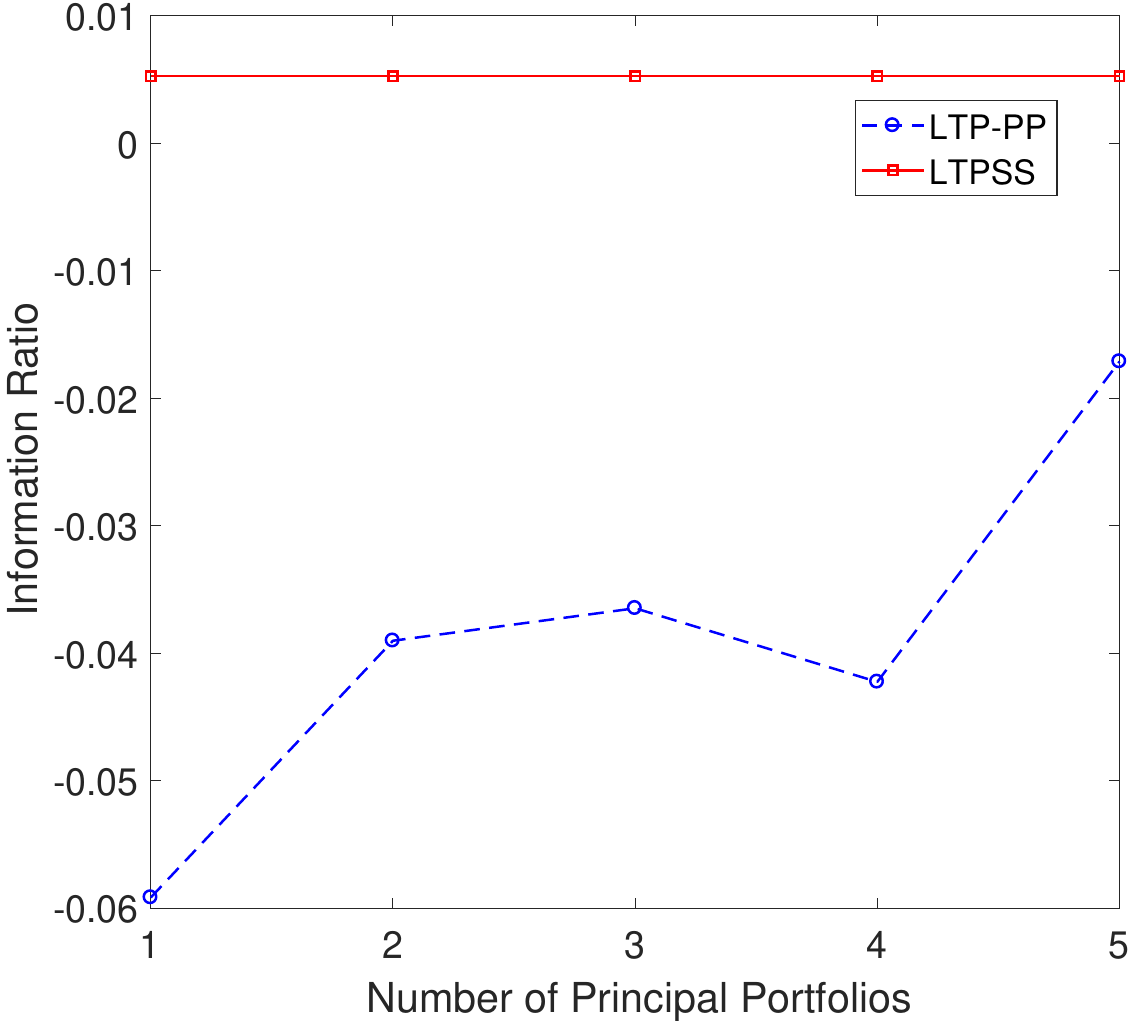}}
		\subfloat[FOF]{
			\centering
			\includegraphics[width=0.33\textwidth]{./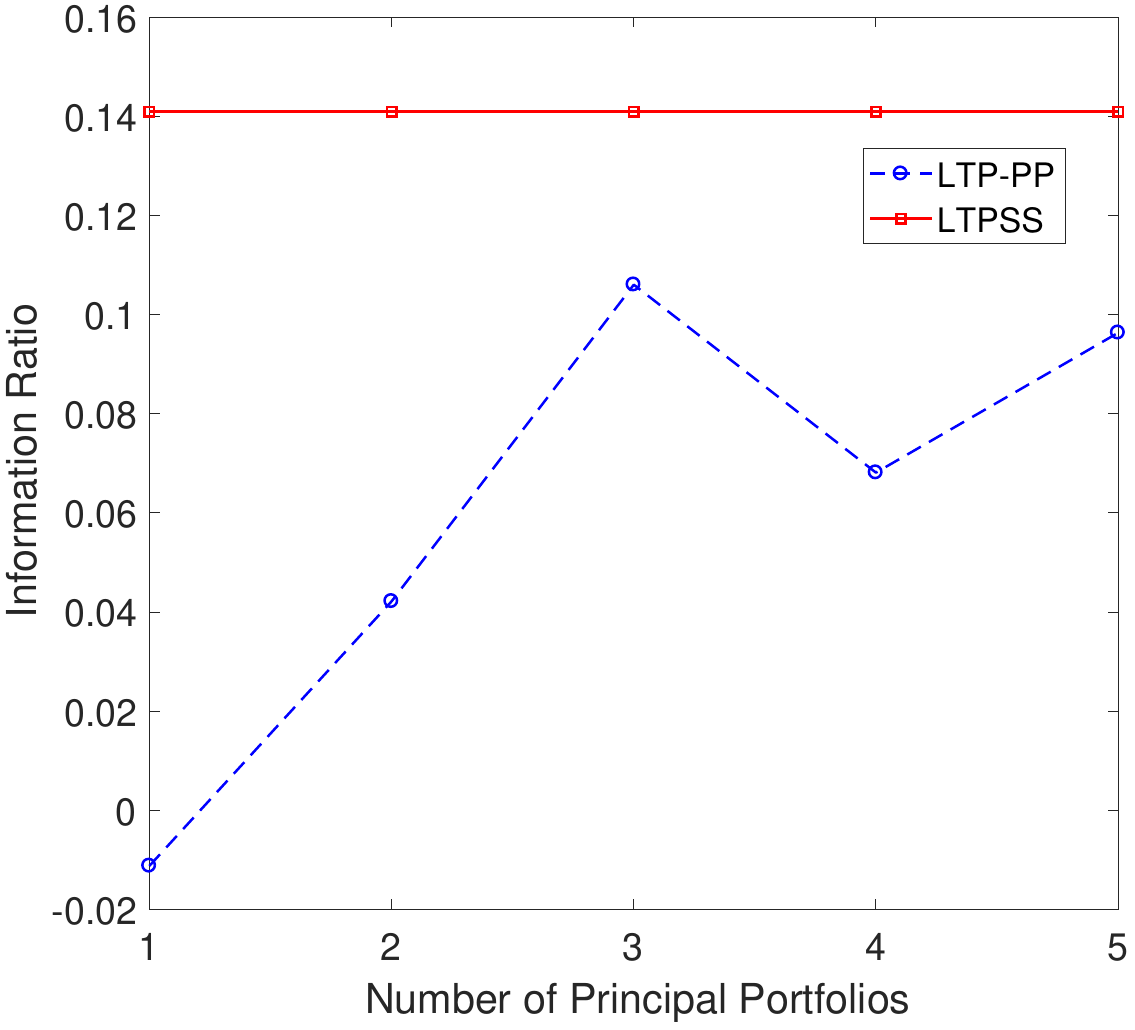}}\\
		\subfloat[FTSE100]{
			\centering
			\includegraphics[width=0.33\textwidth]{./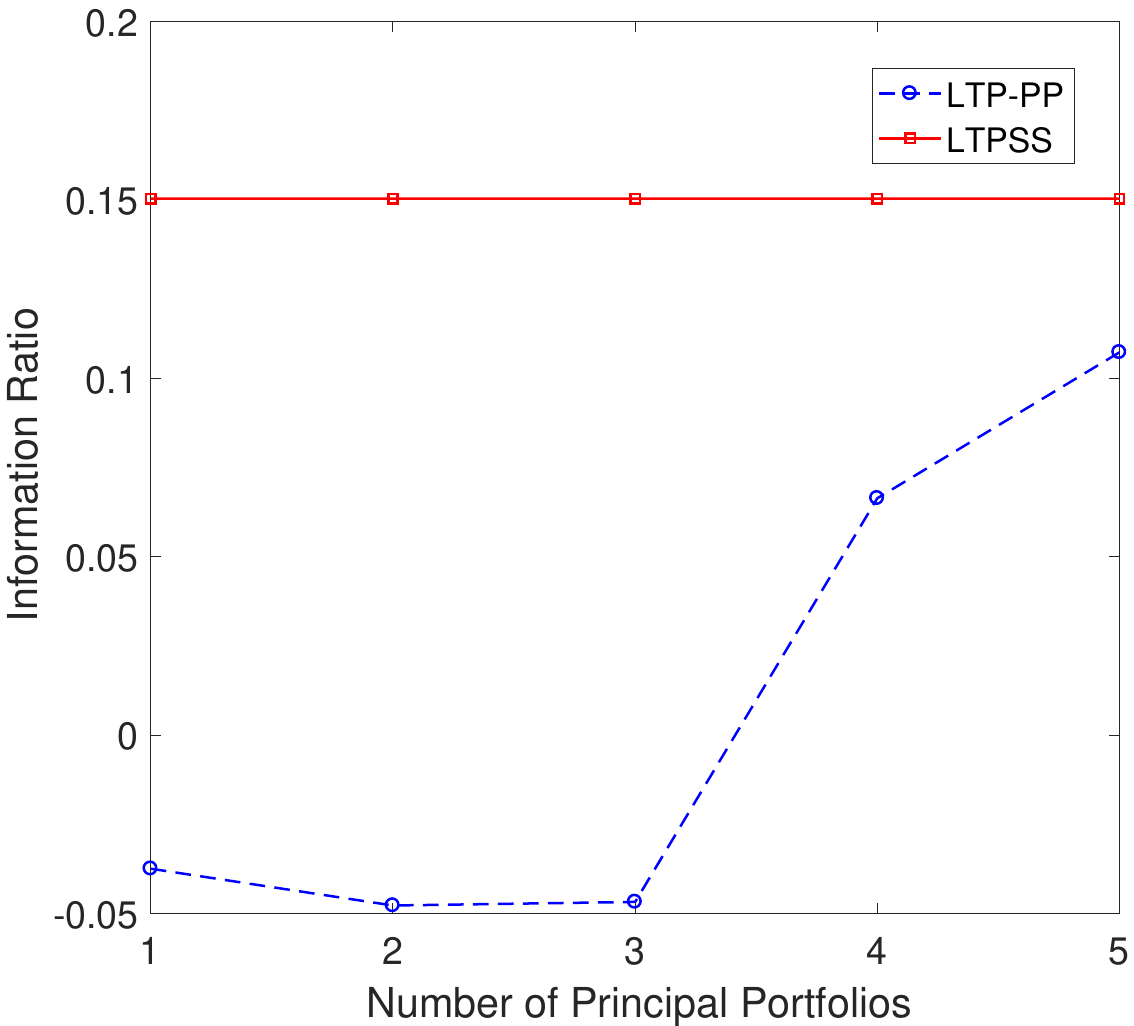}}	
		\caption{Information ratios for different numbers of principal portfolios on $7$ benchmark data sets.}
		\label{fig:IRmore}
	\end{figure*}

	\begin{figure*}[!htb]
		\centering
		\subfloat[FF25BM]{
			\centering
			\includegraphics[width=0.33\textwidth]{./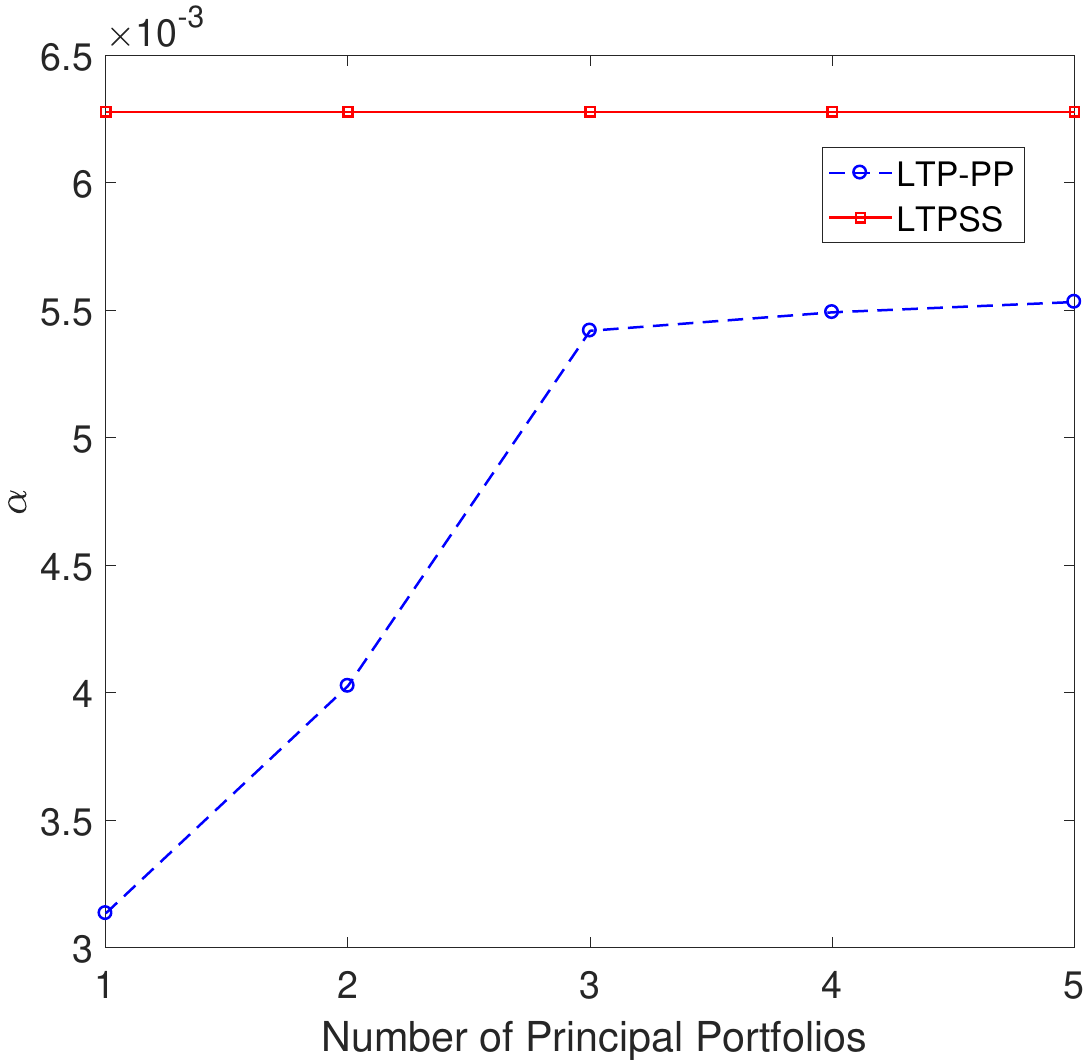}}
		\subfloat[FF25MEINV]{
			\centering
			\includegraphics[width=0.33\textwidth]{./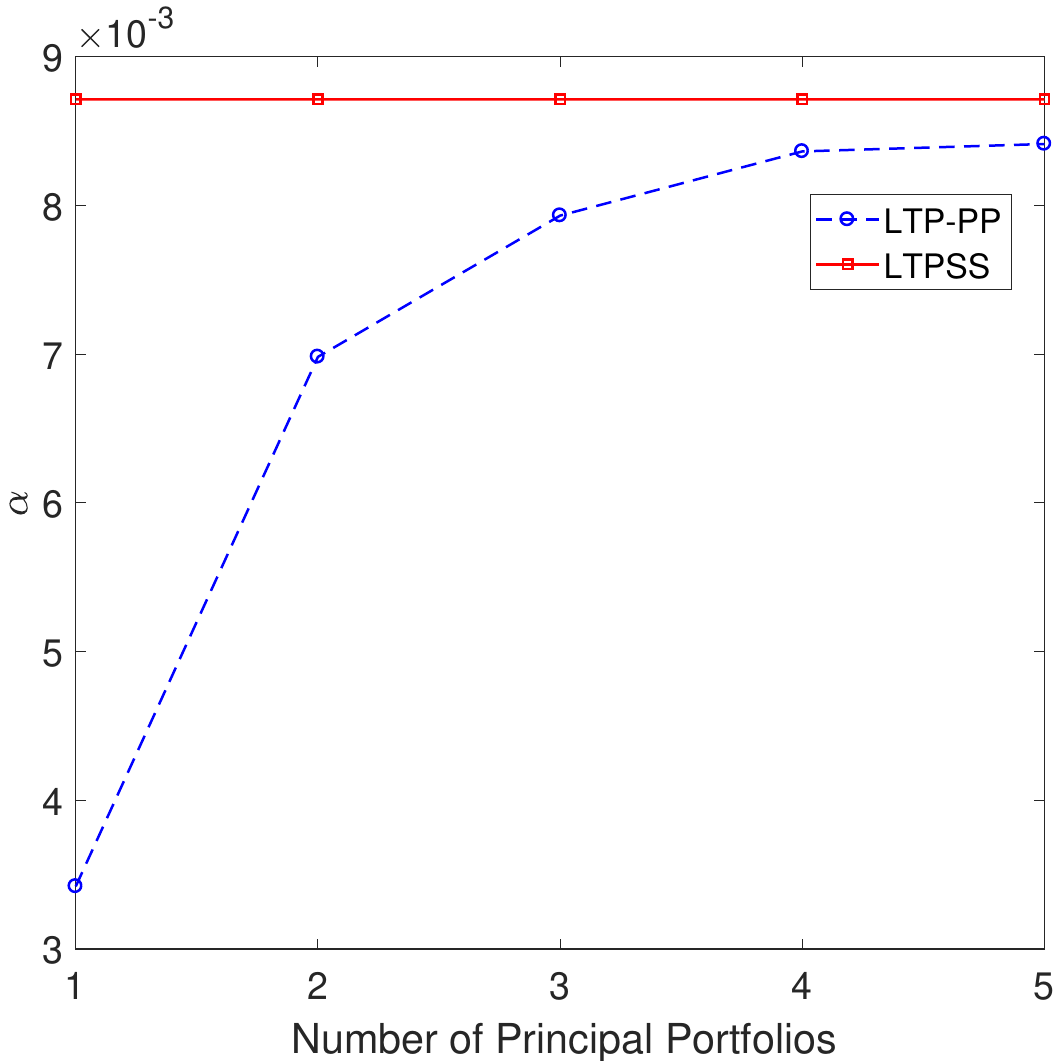}}
		\subfloat[FF25MEOP]{
			\centering
			\includegraphics[width=0.33\textwidth]{./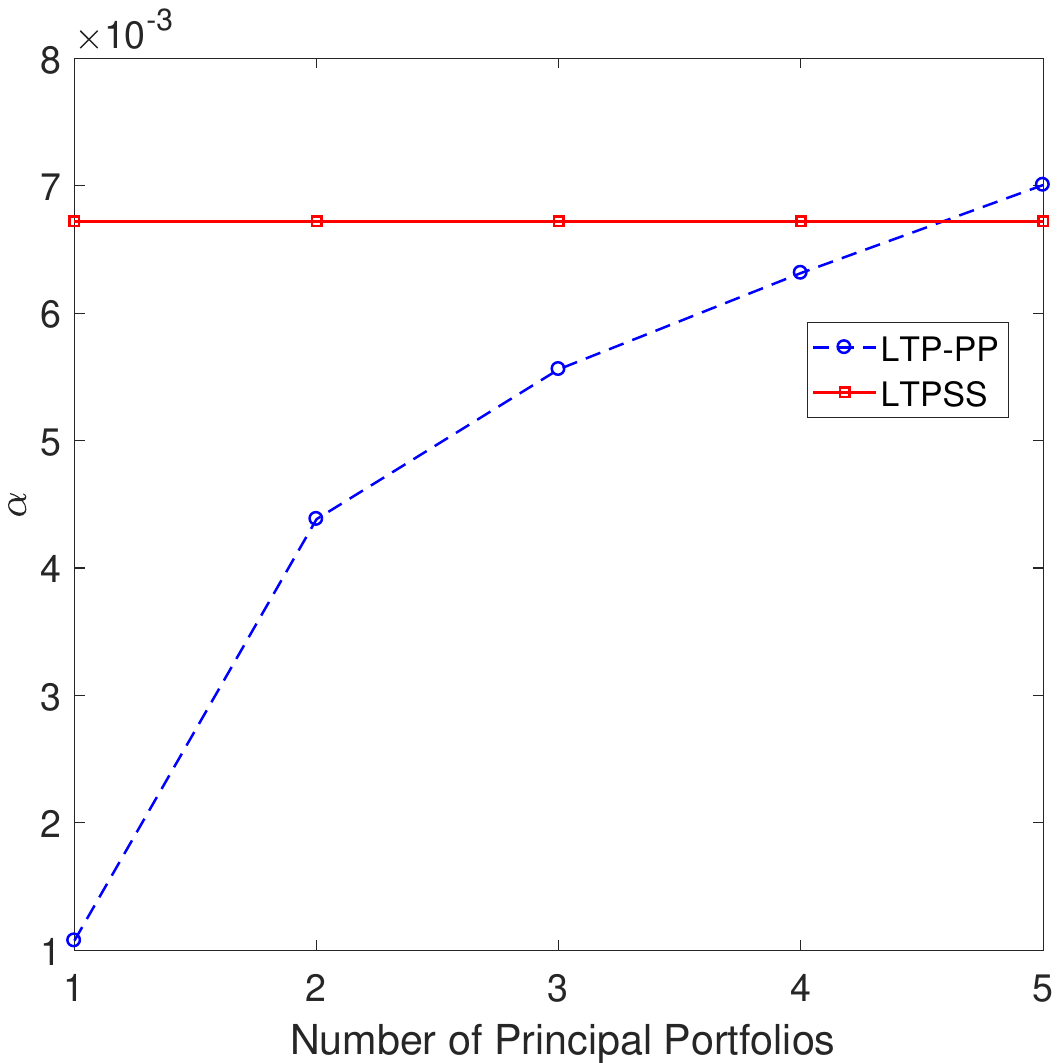}}\\
		\subfloat[MSCI]{
			\centering
			\includegraphics[width=0.33\textwidth]{./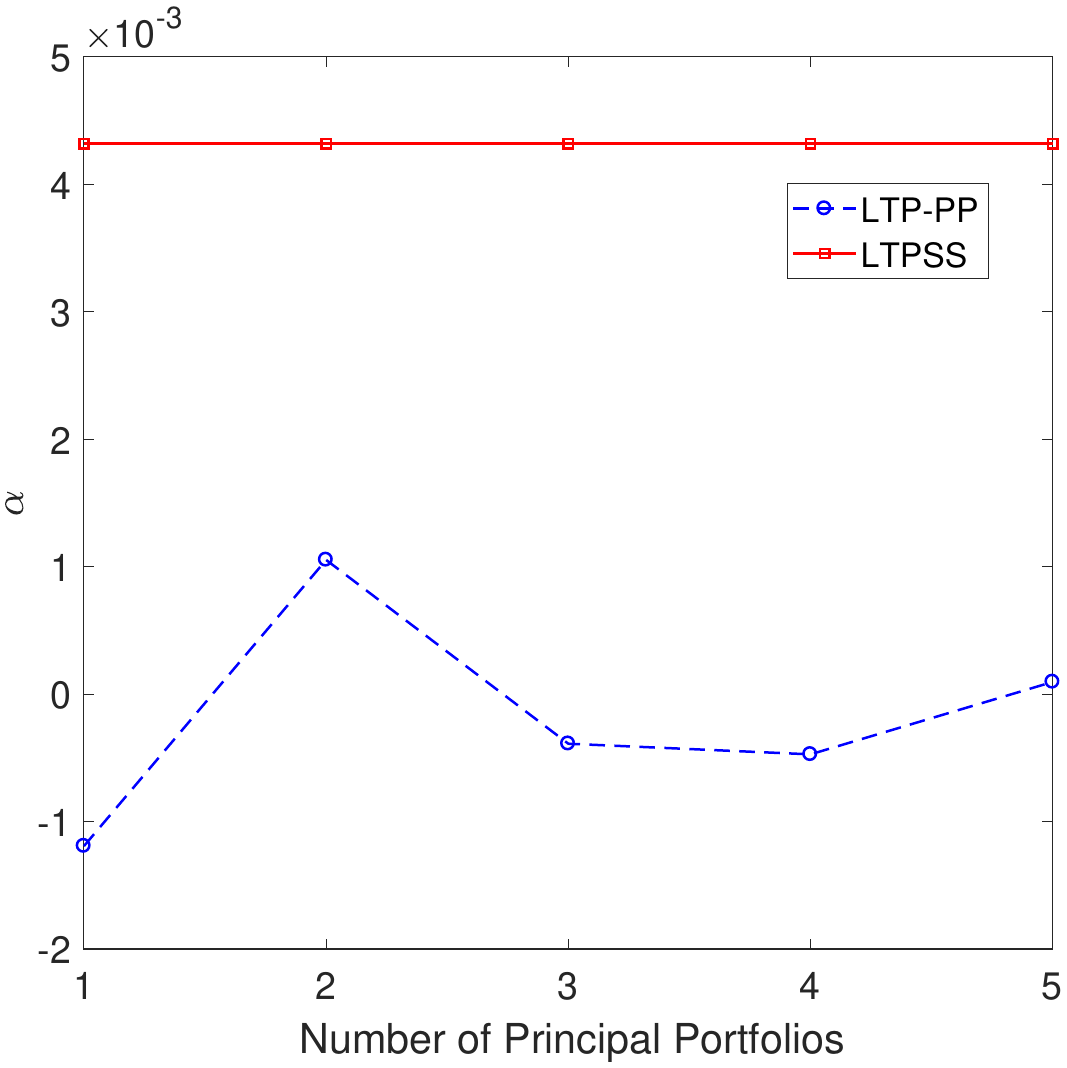}}
		\subfloat[Stoxx50]{
			\centering
			\includegraphics[width=0.33\textwidth]{./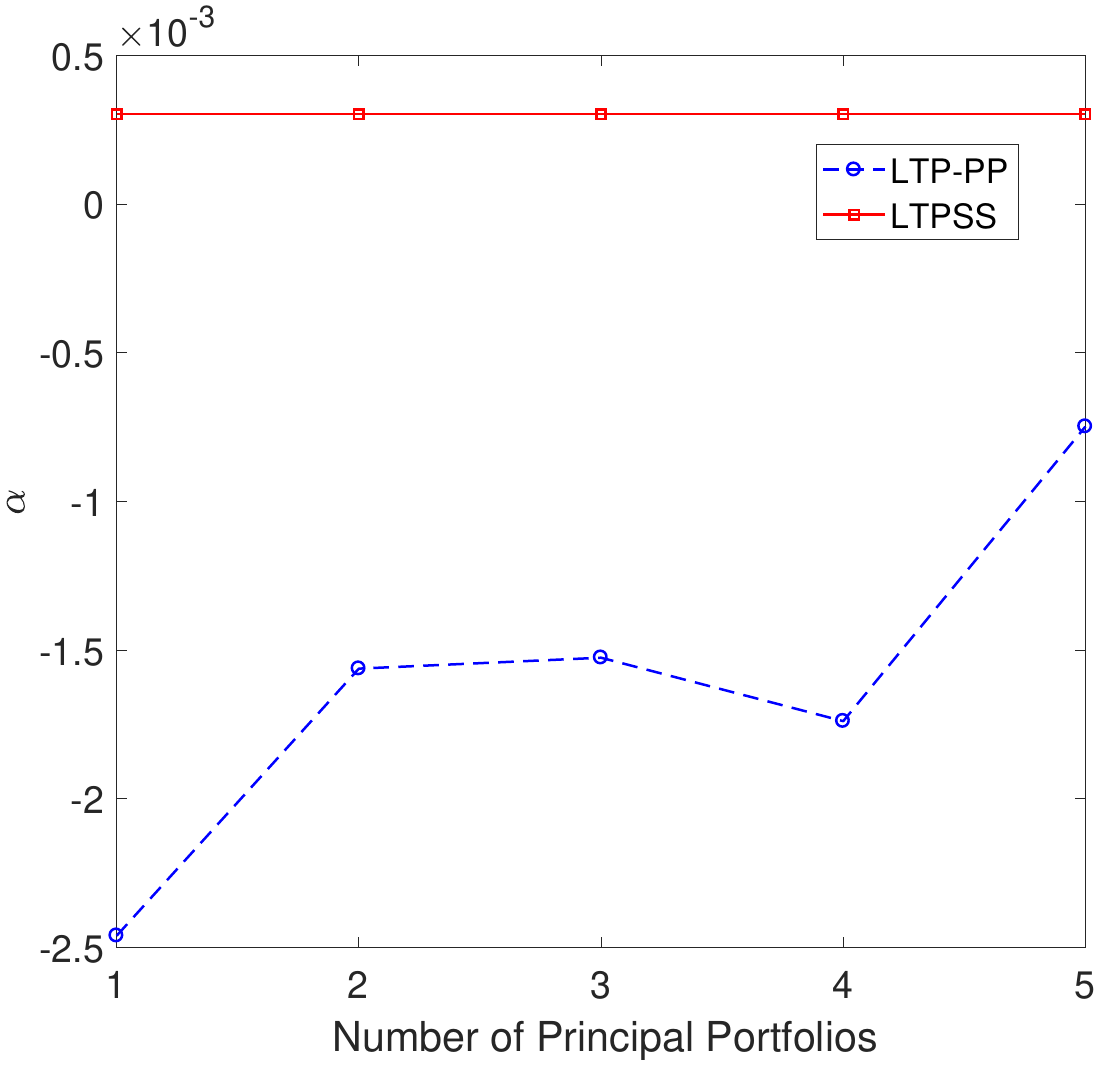}}
		\subfloat[FOF]{
			\centering
			\includegraphics[width=0.33\textwidth]{./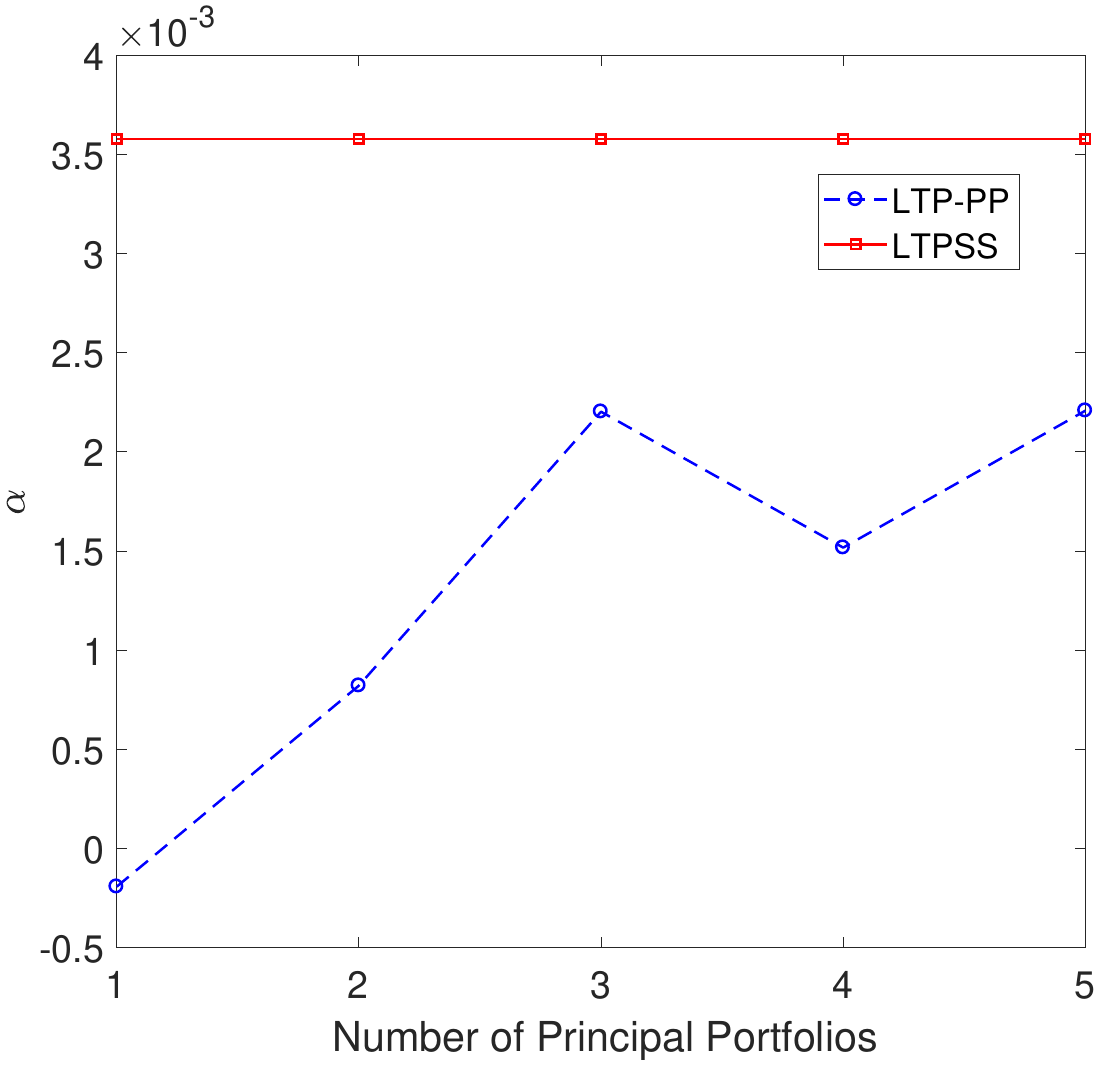}}\\
		\subfloat[FTSE100]{
			\centering
			\includegraphics[width=0.33\textwidth]{./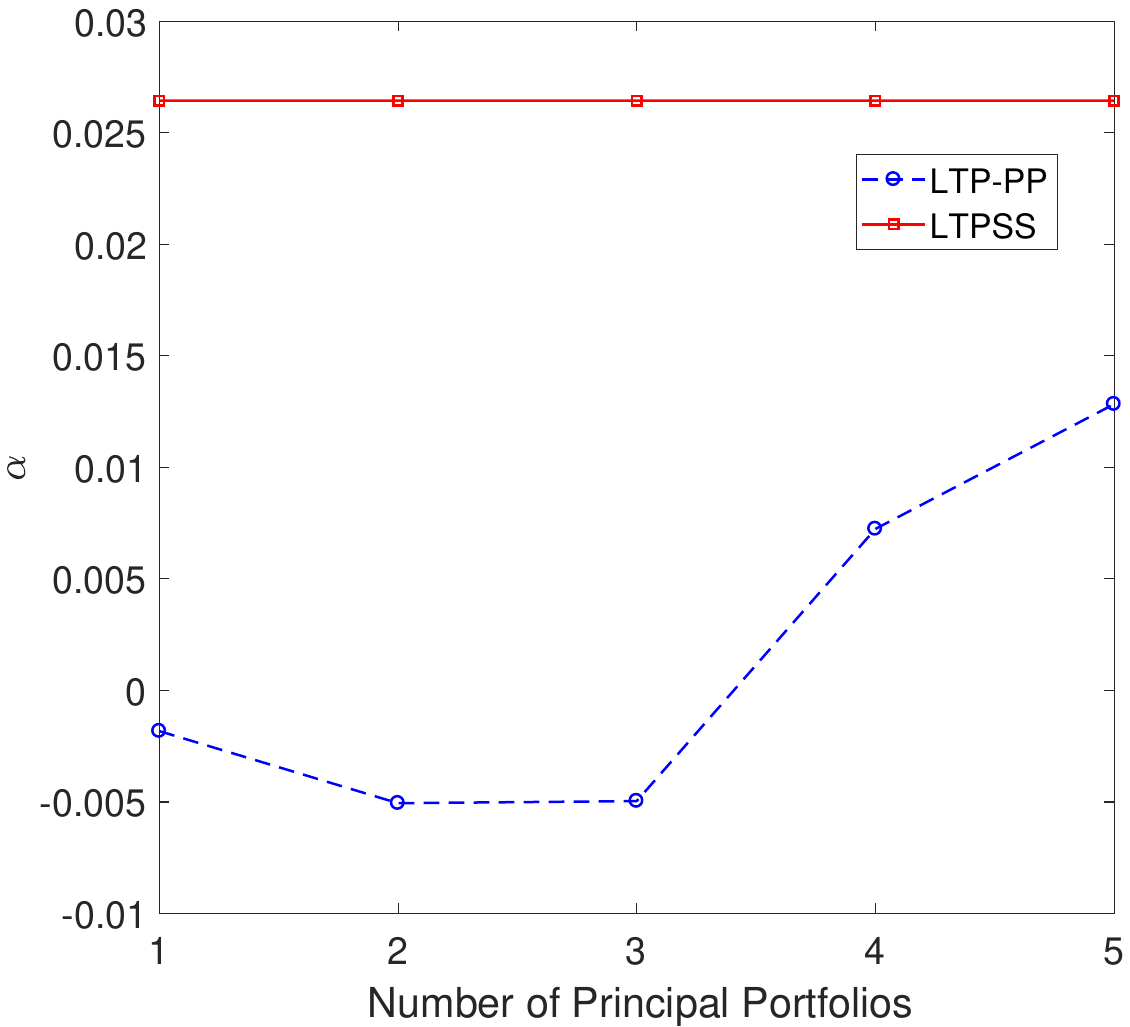}}	
		\caption{$\alpha$ factors for different numbers of principal portfolios on $7$ benchmark data sets.}
		\label{fig:alphamore}
	\end{figure*}

\end{document}